\documentclass{article}

\usepackage{microtype}
\usepackage{graphicx}
\usepackage{subfigure}
\usepackage{multirow}
\usepackage{booktabs} 
\usepackage[scaled=0.85]{beramono}
\usepackage[most]{tcolorbox}
\usepackage{hyperref}

 \usepackage[preprint]{neurips_2026}

\usepackage[utf8]{inputenc} 
\usepackage[T1]{fontenc}    
\usepackage{hyperref}       
\usepackage{url}            
\usepackage{booktabs}       
\usepackage{amsfonts}       
\usepackage{nicefrac}       
\usepackage{microtype}      
\usepackage{xcolor}         
\usepackage{amsmath}
\usepackage{amssymb}
\usepackage{mathtools}
\usepackage{amsthm}
\usepackage{bbm}
\usepackage{stmaryrd}
\usepackage{mathrsfs}
\usepackage{dsfont}
\usepackage{bm}
\usepackage{mdframed}
\usepackage{thm-restate}
\usepackage{enumitem}
\usepackage{wrapfig}

\usepackage[capitalize,noabbrev]{cleveref}

\theoremstyle{plain}
\newtheorem{theorem}{Theorem}[section]
\newtheorem{proposition}[theorem]{Proposition}

\newtheorem{corollary}[theorem]{Corollary}
\theoremstyle{definition}
\newtheorem{definition}[theorem]{Definition}
\newtheorem{assumption}[theorem]{Assumption}
\theoremstyle{remark}
\newtheorem{remark}[theorem]{Remark}

\newcommand{\eqdef}{\stackrel{\text{def}}{=}}

\newcommand{\bistar}{\masksample_{i_{\ast}}}

\newcommand{\Prob}[1]{\mathbb{P}\bigl[#1\bigr]}
\newcommand{\Ind}[1]{\mathds{1}\bigl\{#1\bigr\}}
\newcommand{\istar}{i_{\ast}}
\newcommand{\algo}{\mathcal{A}}
\newcommand{\dataset}{\mathbb{R}^{n \times d}}

\newcommand{\Hstar}{H_{\ast}}
\newcommand{\qstar}{p_{\ast}}
\newcommand{\data}{\underline{\mathbf{z}}}
\newcommand{\res}{\underline{\mathbf{r}}}
\newcommand{\datadomain}{\mathcal{Z}}
\newcommand{\missdatadomain}{\mathcal{Z}_{\text{miss}}}
\newcommand{\sample}{\mathbf{z}}
\newcommand{\datavar}{\bm{Z}}
\newcommand{\samplevar}{Z}
\newcommand{\masksample}{\mathbf{m}}
\newcommand{\mask}{\underline{\mathbf{m}}}
\newcommand{\Miss}{\mathcal{A}_{\datamiss}}
\newcommand{\datamiss}{\mathcal{D}}
\newcommand{\featuremiss}{\mathcal{F}}

\newcommand{\MargZ}{\Prob{\datamiss(\data) = \mask}\Prob{\algo(\tilde{\data}(\mask)) \in S}}
\newcommand{\MargZz}{\Prob{\datamiss(\data') = \mask}\Prob{\algo(\tilde{\data}'(\mask)) \in S}}
\newcommand{\Pmx}[1]{\mathbb{P}\bigl[{\algo(\tilde{\data}(\mask))} \in #1\bigr]}
\newcommand{\Pmxx}[1]{\mathbb{P}\bigl[{\algo(\tilde{\data}'(\mask))} \in #1\bigr]}
\newcommand{\Divmiss}[1]{D_{#1}(\algo_{\datamiss}(\data) || \algo_{\datamiss}(\data'))}
\newcommand{\Pfeat}[1]{\mathbb{P}\bigl[\datamiss(#1) = \mask\bigr]}
\newcommand{\Bin}{\mask \in \{0, 1\}^{n \times d}}
\newcommand{\Biin}{\mask' \in \{0, 1\}^{n \times d}}
\newcommand{\query}{f_{I}}
\newcommand{\querysens}[1]{\Delta_{\query}^{(#1)}}
\newcommand{\querysensmiss}[1]{\tilde{\Delta}_{\query}^{(#1)}}
\newcommand{\lap}{\algo^{\mathcal{L}}}
\newcommand{\lapmiss}{\Miss^{\mathcal{L}}}
\newcommand{\gauss}{\algo^{\mathcal{G}}}
\newcommand{\guassmiss}{\Miss^{\mathcal{G}}}
\newcommand{\datamask}{\tilde{\data}}
\newcommand{\samplemask}{\tilde{\sample}}
\newcommand{\Divmask}[1]{D_{#1}(\algo(\datamask) || \algo(\datamask'))}

\definecolor{MDnavy}{RGB}{0,87,168}
\definecolor{DPcrimson}{RGB}{221,0,0}
\usepackage[textsize=tiny]{todonotes}

\usepackage{xr}
\usepackage{xcolor}

\newboolean{showcomments}
\setboolean{showcomments}{true}
\ifthenelse{\boolean{showcomments}}
{ \newcommand{\mynote}[3]{
		\fbox{\bfseries\sffamily\scriptsize#1}
		{\small$\blacktriangleright$\textsf{\emph{\color{#3}{#2}}}$\blacktriangleleft$}}
	\newcommand{\zzz}[1]{{\setlength{\fboxsep}{2pt}\fcolorbox{black}{yellow}{\textsf{\emph{#1}}}}\xspace}}
{ \newcommand{\mynote}[3]{}
	\newcommand{\zzz}[1]{}}

\title{On the Inherent Privacy Amplification of Missing Data}

\author{%
  Simon Roburin \quad Rafael Pinot \quad Erwan Scornet \\ \\
  Sorbonne Université, Université Paris Cité, CNRS \\
  Laboratoire de Probabilités, Statistique et Modélisation (LPSM) \\
  F-75005 Paris, France 
}

\begin{document}

\maketitle

\begin{abstract}
Privacy preservation is critical in many high-stakes domains such as medicine and finance, where sensitive data must be analyzed without compromising individual confidentiality. At the same time, these applications often involve datasets with inherent missing values due to non-response or data corruption for example. Missing data is traditionally analyzed through its impact on statistical efficiency and model performance. In fact, it reduces the information available to analysts and can degrade the final utility of the model. In this work, we take an alternative approach and study missing data through the lens of privacy preservation. 
Intuitively, when features are missing, less information is revealed about individuals, suggesting that data missingness could inherently enhance privacy. We formalize this intuition within a novel framework that integrates missing data into differential privacy. In essence, our approach accounts for the de facto missingness in the data to refine existing privacy guarantees without modifying the underlying mechanism. Using this framework, we show for the first time that missing data can induce inherent privacy amplification for differentially private algorithms, highlighting a previously overlooked interaction between missing data and formal privacy guarantees.
\end{abstract}

\section{Introduction}
Privacy preservation is a fundamental requirement in many high-stakes applications, where the misuse or unintended disclosure of sensitive information can have severe consequences. In fact, domains such as medicine \citep{el2011systematic}, genomics \citep{homer2008resolving}, finance \citep{wang2018privacy}, and social sciences \citep{abowd2023confidentiality} routinely involve the collection and analysis of highly sensitive personal data, making strong privacy guarantees an indispensable component of modern data analysis pipelines. Beyond ensuring that private information is not directly leaked or accidentally disclosed, a fundamental issue of machine learning approaches is to ensure that information cannot be recovered or inferred from the sole outputs of data analysis procedures. Differential privacy (DP), introduced by \cite{dwork2006differential}, has become the gold standard across multiple research communities by providing a rigorous mathematical definition of privacy. Informally, DP bounds the shift in the output distribution of any data analysis procedure induced by altering a single (differing) record in the data. Thereby, DP guarantees statistically similar outputs regardless of whether an individual’s data is included in or excluded from the dataset. This similarity is quantified by two privacy parameters $\epsilon$ and $\delta$; the smaller these parameters, the stronger the privacy protection.

At the same time, privacy-critical applications are frequently characterized by the presence of missing data. Data may be missing due to non-response, data corruption, selective disclosure, or deliberate data anonymization practices motivated by ethical, legal, or operational constraints \citep{little2002statistical}. As a result, data missingness is not an exceptional phenomenon but rather an inherent feature of many real-world datasets. To characterize this phenomenon and quantify its implications on data analysis, the classical literature on missing values defines missing data scenarios based on the relationship between missingness and observed values \citep{rubin1976inference}. If the missingness is independent of both observed and unobserved data, the setting is called Missing Completely At Random (MCAR); if it depends only on the observed values, it is referred to as Missing At Random (MAR); otherwise, it is classified as Missing Not At Random (MNAR). In all of these scenarios, missing data is most often viewed as a limitation because it reduces the amount of information available to the analyst and, consequently, can degrade the accuracy of the models one aims to design. Accordingly, substantial effort has therefore been devoted to characterizing and mitigating its negative impact through imputation \citep{little2002statistical, mattei2019miwae}, data augmentation \citep{garcia2010missing}, or robust learning techniques \citep{che2018recurrent,you2020handling}; see also \citet{sun2023deep} and \citet{leimputation} for a recent survey.

\subsection{Contributions}
In this paper, we adopt a somewhat unconventional perspective and argue that the inherent missing data that we have to face in many real-world applications can also be considered from a privacy preservation standpoint. Intuitively, when some features are missing for certain individuals, the amount of information available about them is reduced, which can inherently protect their privacy. While intuitive, this idea is difficult to formalize in a way that yields rigorous privacy guarantees. We address this challenge by studying privacy protection through the notion of differential privacy. Within this setting, we show that although missing data does not provide privacy on its own, it can be interpreted and analyzed as a form of privacy amplification. Our contributions are as follows. 

\textbf{1. Framing missing data within the differential privacy terminology.} In Section~\ref{sec:DP_miss}, we first adapt the standard definition of differential privacy to account for missing data in the inputs. This allows us to formalize randomized algorithms under missing data as the composition of a missing data scenario and a downstream randomized algorithm that satisfies differential privacy. This unified probabilistic framework enables a rigorous and systematic analysis of the interaction between missingness and privacy within a single model. To the best of our knowledge, this is the first general formalization that explicitly treats missing data as an integral component of the privacy pipeline. This perspective enables a fine-grained characterization of how inherent data missingness may influence the resulting privacy guarantees of the pipeline, and lays the foundation for our privacy amplification results. 

\textbf{2. Proving the privacy amplification effect of missing data in general.} In Section~\ref{sec:amplif}, we quantify the impact of missing data on the privacy guarantees of any differentially private algorithm. We show that in MCAR, MAR, and MNAR settings, incomplete data can induce privacy amplification. This means that, explicitly accounting for missingness (rather than ignoring it, as is standard in prior works) yields stronger privacy guarantees, with smaller privacy parameters $\epsilon$ and $\delta$ than those of the original algorithm. The amplification factor generally depends on the structure of the missing data process, but in MAR and MCAR settings, this factor is directly governed by the probability that missingness masks the features on which two neighboring datasets differ. Finally, we identify regimes where this amplification is limited or negligible, motivating a more refined analysis for restricted classes of differentially private algorithms and query families.

\textbf{3. Amplification for feature-wise Lipschitz queries.} In Section~\ref{sec:fwl}, we focus on a class of queries that we call \emph{feature-wise Lipschitz} (FWL), which encompasses many widely studied queries in the DP literature, including histograms, linear queries, coordinate-wise clipped means, and covariance queries. For this class, we establish stronger privacy amplification results for the standard Laplace and Gaussian mechanisms. Specifically, we show in MCAR and MAR scenarios that the privacy amplification factor for FWL queries depends directly on the fraction of missing observations per sample. This means that, even when the general privacy amplification results from Section~\ref{sec:amplif} are limited, privacy amplification occurs for standard differentially private mechanisms.

Overall, we provide a principled understanding of how data missingness interacts with privacy, and highlight the opportunities it presents. Our analysis isolates the amplification arising solely from natural missingness in the data, without introducing additional randomness or intervention from the data curator. As a result, our guarantees strictly refine existing differential privacy analyses by incorporating previously overlooked structure, while leaving the underlying mechanisms unchanged.

\subsection{Closely related work}
The interaction between DP and missing data has primarily been studied from the perspective of mitigating the negative impact of missingness on either privacy guarantees or utility. Early work focuses on the interplay between DP and imputation procedures. \citet{clifton2022differentially} show that even simple imputation strategies can significantly compromise privacy, as modifying a single record may influence multiple imputed values. More generally, \citet{das2022imputation} establish that, for arbitrary imputation procedures, applying a differentially private algorithm to imputed data leads to privacy degradation that scales with the number of imputed entries affected by a single individual. More recently, and most closely related to our work, \citet{mohapatra2023differentially} study missing data in the context of synthetic data generation. Under an MCAR assumption, they observe privacy gains that can be interpreted as a consequence of random subsampling, effectively modeling missingness as a Bernoulli thinning process over records. However, this perspective is limited to specific mechanisms and relies on strong assumptions about the missingness process. In contrast, we adopt a fundamentally different viewpoint. We develop a general probabilistic framework that explicitly incorporates missingness into the DP analysis. Our results apply not only under MCAR, but also under MAR and MNAR settings, and reveal regimes in which privacy amplification depends directly on the probability that missingness obscures the features distinguishing neighboring datasets. 

Our work is also related to the classical literature on privacy amplification in DP. A well-studied instance arises from random subsampling, where a DP mechanism is applied to a randomly selected subset of the data \citep{li2012sampling, abadi2016deep, balle2018privacy}. Another prominent line of work studies amplification by shuffling \citep{erlingsson2019amplification,cheu2019distributed}, where privacy gains result from random permutation of record indices prior to applying a DP mechanism. These results are typically tied to record-level algorithmic primitives and do not account for randomness induced by the data generation or observation process itself. In contrast, we identify missing data as a distinct and inherent source of privacy amplification. Unlike subsampling or shuffling, this effect arises at the feature level through stochastic masking of attributes, and is governed by the probability distribution of the missingness mask. From a technical standpoint, although we build on standard tools from the differential privacy literature such as $\alpha$-divergence, handling missing data requires new methodological ingredients. In particular, we introduce a unified mixture framework over masking patterns, along with mask-specific couplings that preserve both the missingness process and the neighboring definition for incomplete datasets (see Appendix~\ref{app:proof}). We also develop auxiliary distribution arguments to address the MNAR setting, where data-dependent missingness induces distribution mismatches (see also Appendix~\ref{app:proof} for details). For the MAR and MCAR regimes, we complement this analysis with a structural lemma (see \Cref{lemma:pstar}) showing that the probability of revealing the differing record can be characterized by a dataset-independent quantity $p^\star$. This yields a simpler and more stable decomposition, enabling sharper privacy bounds without the need for auxiliary alignment arguments.  These techniques are novel and may be useful for further advancing the study of interactions between missing data and privacy preservation. Furthermore, from a conceptual standpoint, our analysis captures amplification induced solely by natural missingness in the data, whereas classical amplification techniques introduce additional algorithmic layers specifically designed to enhance privacy.\vspace{-8pt}
\vspace{-5pt}
\section{\vspace{-6pt}Differential privacy with missing data\vspace{-2pt}}
\label{sec:DP_miss}
\vspace{-4pt}
In this section, we introduce the necessary formalism to present our main technical contributions on the amplification effect of incomplete data on differential privacy. Following the classical literature on missing values, we first present how data missingness can be modeled through a process defined as a conditional distribution of a binary mask that depends on the complete data~\citep{rubin1976inference}. Then, to build an object that accounts for missing data and privacy at once, we introduce a randomized algorithm that corresponds to the composition of this process and a DP mechanism.

\textbf{Preliminary notations.} 
Let $\datadomain \subset \mathbb{R}^d$ be the data domain. For any integer $n \geq 1$, we use the notation $[n]$ to denote the set $\{1,\ldots,n\}$. We also denote by $\mathbf{1}_{\mathbb{R}^d} \in \mathbb{R}^d$ the $d$-dimensional vector whose entries are all equal to $1$. By convention, for any vector $\mathbf{v} \in \mathbb{R}^d$, we denote by $\mathbf{v}^{(j)}$ its $j$-th coordinate for all $j \in [d]$. Accordingly, we also let, for any $J \subset \{1, \hdots, d\}$, $\mathbf{v}^{(J)}$ be the subvector composed of the components of $\mathbf{v}$ indexed by $J$. Furthermore, we denote by $\textrm{Card}(\cdot)$ the function that computes the cardinality of a set. 
Lastly, let $\datavar = (\samplevar_1,\ldots,\samplevar_n)$ be a random vector taking values in $\datadomain^n$, where for all $i \in [n]$, $\samplevar_i$ is a random variable taking values in $\datadomain$. A dataset $\data = (\sample_1, \ldots, \sample_{n}) \in \datadomain^{n}$ is a realization of $\datavar$, and for all $i \in [n]$, $\sample_i$ is a realization of $\samplevar_i$.

\subsection{Incomplete datasets}

To define data incompleteness, we focus on missing data processes that act randomly on the features of the dataset and introduce the missing data domain $\missdatadomain \subset (\mathbb{R} \cup \{\texttt{NA}\})^{d}$ where $\texttt{NA}$ denotes a symbol representing missing values. We extend the usual arithmetic on $\mathbb{R}$ with the conventions 
\begin{align*}
  \texttt{NA}\cdot 1 = \texttt{NA}, 
\quad \texttt{NA} + 0 = \texttt{NA} \quad \text{and}\quad
\texttt{NA} \cdot 0 = 0.
\end{align*}
Following the notation we adopt for samples and datasets, we denote by $\mathbf{\textbf{\texttt{NA}}} = \{\texttt{NA}\}^d$ the vector whose entries are all equal to \texttt{NA}, and by $\underline{\textbf{\texttt{NA}}} = \{\texttt{NA}\}^{n\times d}$ the dataset whose entries are all equal to \texttt{NA}. As we just mentioned, in presence of missing values, we do not observe a complete dataset $\data \in \datadomain^{n}$, but rather an incomplete dataset $\tilde{\data} \in \missdatadomain^{n}$. To formalize the construction of $\tilde{\data}$, we first define the masking operation at the sample level and then extend it to the dataset level. Let $\sample \in \datadomain$ be an arbitrary sample. An incomplete observation of $\sample$ can be defined through a missing sample mask $\masksample \in \{0,1\}^d$, where for all $j \in [d],$ we have $\masksample^{(j)} = 1$ if the $j$-th feature of $\sample$ is not observed and $\masksample^{(j)} = 0$ otherwise.
Accordingly, the incomplete sample in $\missdatadomain$ corresponding to $\sample$ and $\masksample$ writes \vspace{-7pt}

\begin{align*}
  \tilde{\sample}(\masksample) = \sample \odot (1 - \masksample) + \textbf{\texttt{NA}} \odot \masksample,
\end{align*}
where $\odot$ is the element-wise multiplication. Let us denote by $\operatorname{obs}(\masksample) = \{j \in [d] \mid \mathbf{m}^{(j)} = 0 \}$, the set of observed features
given a mask $\masksample \in \{0,1\}^d$. We can also define the observed sample for $\sample$ as the concatenation of the observed features of $\sample$, i.e., $\sample^{\operatorname{obs}(\masksample)} = (\sample^{(j)})_{j \in \operatorname{obs}(\masksample)}$. Finally, we can extend the previous definition to a dataset $\data = (\sample_1, \dots, \sample_n) \in \datadomain^n$. Given a missing data mask $\mask = (\masksample_1, \dots, \masksample_n)^\top \hspace{-2pt} \in \{0,1\}^{n\times d}$, we define the incomplete dataset $\tilde{\data}(\mask) \in \missdatadomain^n$ as \vspace{-7pt}

\begin{align*}
  \tilde{\data}(\mask) = \data \odot (1-\mask) + \underline{\textbf{\texttt{NA}}}\odot \mask.
\end{align*} 

When explicit dependency on $\sample$ and $\masksample$ are not needed, we denote by $\samplemask$ any incomplete sample from $\missdatadomain$. Similarly, $\datamask$ will denote any incomplete dataset belonging to $\missdatadomain^n$.

\subsection{Missing data process}\label{sec:datamiss}

As first observed by~\citet{rubin1976inference,little2002statistical}, to reason about data missingness, one needs to put a model on how missing data occur. In the context of statistical learning, missingness is first modeled at the sample level by the random feature masking process $\featuremiss$, which generates a mask for each sample. Then all the masks are drawn independently of each other and are then combined to define the missing data process, that we denote $\datamiss$ or equivalently $\featuremiss^{\otimes n}$ (by overloading the notation $\featuremiss$). This construction is summarized in the diagram below. 

\begin{wrapfigure}[10]{r}{.45\textwidth}
\vspace{-10pt}
\begin{align}
\textstyle
\left.
\begin{array}{ccc}
\sample_1 & \xrightarrow[\text{indep. draws}]{\;\featuremiss\;} & \masksample_1\\ 
\vdots & & \vdots \\ 
\sample_n & \xrightarrow[\text{indep. draws}]{\;\featuremiss\;} & \masksample_n 
\end{array} \nonumber 
\right\} \quad \equiv \quad \data \xrightarrow[\text{draws}]{\;\datamiss\;} \mask 
\end{align} \vspace{3pt}
\begin{equation}
\small
    \Prob{\featuremiss(\sample)=\masksample} \eqdef \Prob{\featuremiss(Z)=\masksample \mid Z=\sample}. \label{eq:feature_miss} 
\end{equation}
\end{wrapfigure}

The missing feature process $\featuremiss$ is a randomized process that takes as input a sample $\sample \in \datadomain$ and outputs a random variable taking values in $\{0,1\}^d$ interpreted as a random mask. To characterize missing data processes, we must adopt a probabilistic viewpoint on data where any sample $\sample$ is considered as a realization of a $\datadomain$-valued random variable $Z$. We thus denote the conditional distribution of the output of the random mask given $Z$ as per~\eqref{eq:feature_miss}. Based on this, the missing data process $\datamiss$ takes as input a dataset $\data$ in $\datadomain^n$ and outputs a random vector taking values in $\{0,1\}^{n \times d}$. 
For each $i \in [n]$, the masks $\masksample_i$ are drawn independently according to the law of the random variable $\featuremiss(\sample_i)$. Formally, as observed in~\citet{little2002statistical}, for all $\mask = (\masksample_1, \dots,\masksample_n)^\top$ and for all $\data = (\sample_1, \dots, \sample_n) \in \datadomain^n$, since the masks are drawn independently, using the above notations, we have 
\begin{align}
  \Prob{\datamiss(\data) = \mask} 
  &= \prod_{i=1}^n \Prob{\featuremiss(\sample_i) = \masksample_i}.\label{def:missingmech}
\end{align}
Depending on how $\featuremiss$ interacts with its input $\sample_i$, we will need to distinguish in this work, three main scenarios, namely \emph{Missing Completely At Random}, \emph{Missing At Random}, and \emph{Missing Not At Random}.
\begin{itemize}[leftmargin=*]
  \item \textbf{Missing Completely At Random (MCAR). } $\mathcal{F}$ is said to be MCAR if the missingness mask is independent of the data. Specifically, this means that for all sample $\sample, \sample' \in \datadomain$ and all mask $\masksample \in \{0,1\}^d$, we have $\Prob{\featuremiss(\sample) = \masksample} = \Prob{\featuremiss(\sample') = \masksample}.$  \\
  
  \item \textbf{Missing At Random (MAR).} $\mathcal{F}$ is said to be MAR if the missingness mask depends on the observed values. Specifically, for all sample $\sample, \sample' \in \datadomain$
  and all mask $\masksample \in \{0,1\}^d$ such that $
\sample^{\operatorname{obs}(\masksample)} = \sample^{\prime\operatorname{obs}(\masksample)}$, 
  we have $ \Prob{\featuremiss(\sample) = \masksample} = \Prob{\featuremiss(\sample') = \masksample}.$ \\
  
  \item \textbf{Missing Not At Random (MNAR).} 
  $\mathcal{F}$ is said to be MNAR if $\mathcal{F}$ is not MAR.
\end{itemize}
Thus, we see that MCAR is included in MAR, whereas MNAR is the complement of MAR. 
Using the above, a missing data process $\datamiss = \featuremiss^{\otimes n}$ is said to be MCAR (respectively, MAR or MNAR) if the missing feature process $\featuremiss$ is MCAR (respectively, MAR or MNAR).

\subsection{Adapting the definition of differential privacy }\label{sec:algomiss}

To account for both the missing data process $\datamiss$ and the randomized algorithm $\algo$ usually used for differential privacy, we now define the randomized algorithm $\Miss$. As summarized in the diagram below, $\Miss$ is the method that characterizes the application of a (possibly differentially private) algorithm $\algo$ to a masked sample, assuming the mask has been produced by a missing data process $\datamiss$. 
\begin{align}
\label{def:A_D}
\textstyle
\underbrace{ \textcolor{MDnavy}{\data 
\;\xrightarrow[\text{draws}]{\datamiss\;}\; \mask
\;\xrightarrow[\text{masks}]{\;\data\;}\; \datamask(\mask)} \textcolor{DPcrimson}{
\;\xrightarrow[\text{releases}]{\;\algo\;}\; \algo\left(\datamask(\mask)\right)}}_{\textstyle\textcolor{DPcrimson}{\mathcal{A}_{\textcolor{MDnavy}{\mathcal{D}}}}(\data)}.
\end{align} 
More precisely, $\Miss$ is defined as follows. First, the complete dataset $\data$ in $\datadomain^n$ is transformed into an incomplete dataset $\tilde{\data} \in \missdatadomain^n$ by sampling a mask $\mask \sim \datamiss(\data)$ from the missing data process $\datamiss$. This part (\textcolor{MDnavy}{in blue in the diagram}) is what characterizes the natural randomness arising from the fact that we deal with missing data in the analysis. Second, the masked dataset $\tilde{\data}(\mask)$ is fed to the randomized algorithm $\algo$ to obtain $\algo(\tilde{\data}(\mask))$. This part (\textcolor{DPcrimson}{in red in the diagram}) characterizes the differentially private data processing made by an analyst when facing an incomplete database. To formalize this, we need to define the notion of differential privacy for mechanisms applying in the missing data domain $\missdatadomain$. To do so, we equip the dataset domain $\missdatadomain^n$ with the ``substitute one record" binary symmetric relation denoted $\simeq$, which defines the notion of neighboring elements within $\missdatadomain$. This definition is standard in the differentially private community. Nevertheless, for completeness, we provide a formal definition in Appendix~\ref{app:dist}. Because any pair of neighboring datasets $\datamask$ and $\datamask'$ can be aligned by an appropriate permutation, we assume without loss of generality that $\datamask \simeq \datamask'$ if and only if they differ in at most one coordinate in $[n]$. Accordingly, we define differential privacy below. 
\smallskip

\begin{definition}
  Let $\epsilon > 0$ and $\delta \in [0,1]$. Let $\algo$ be a randomized algorithm and let $\mathcal{E}$ be the set of all measurable output sets for $\algo$. $\algo$ is $(\epsilon, \delta)$-differentially private, or $(\epsilon, \delta)$-DP, if for any two neighboring datasets $\datamask \simeq\datamask'$ in $\missdatadomain^n$ and for any $S \in \mathcal{E}$, we have
  \begin{align}
  &\Prob{\algo(\datamask) \in S} 
  \;\leq\; \nonumber e^{\epsilon} \, \Prob{\algo(\datamask') \in S} + \delta. 
\end{align}
\label{def:DP}
\end{definition} \vspace{-15pt}

In the above, the privacy budget $(\epsilon, \delta)$ measures the amount of privacy the algorithm holds. The parameter $\epsilon$ controls the strength of the privacy guarantee, while $\delta$ is often described as the probability of a rare privacy violation. The smaller both of these parameters are, the stronger privacy the algorithm guarantees. In what follows, to manipulate the privacy budget of an algorithm, we also express differential privacy as an $\alpha$-divergence as first introduced by \citet{barthe2013beyond}. 



\begin{theorem}[\citet{barthe2013beyond}] Let $\epsilon >0 $ and $\delta \in [0,1]$. For any randomized algorithm $\algo$ defined over $\missdatadomain^n$, and any $\datamask, \datamask' \in \missdatadomain^n$, we denote 
 \begin{align*}
    & \Divmask{e^\epsilon} =  \sup_{S \in \mathcal{E}} \Prob{\algo(\datamask) \in S} - e^{\epsilon}\Prob{\algo(\datamask') \in S} .
  \end{align*}
Using this notation, an algorithm $\algo$ is $(\epsilon, \delta)$-DP if and only if $\delta_{\algo}(\epsilon) \leq \delta$, where 
  \begin{equation*}
    \delta_{\algo}(\epsilon) = \sup_{\datamask \simeq \datamask'}
    \Divmask{e^\epsilon}.
  \end{equation*}
  \label{thm:dp_barthe}
\end{theorem}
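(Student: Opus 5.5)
The statement is essentially a reformulation of Definition~\ref{def:DP}, so I will prove both directions by unfolding the supremum. The only thing to check is that the inequality in Definition~\ref{def:DP}, required for every measurable $S \in \mathcal{E}$ and every pair $\datamask \simeq \datamask'$, is equivalent to a single bound on a supremum over exactly those same objects.

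\emph{Direction ($\Rightarrow$).} Assume $\algo$ is $(\epsilon,\delta)$-DP and fix neighbors $\datamask \simeq \datamask'$. For every $S \in \mathcal{E}$, rearranging the defining inequality gives
\[
\Prob{\algo(\datamask) \in S} - e^{\epsilon}\Prob{\algo(\datamask') \in S} \le \delta .
\]
Since $\delta$ is an upper bound for every $S$, taking the supremum over $S$ yields $\Divmask{e^\epsilon} \le \delta$. This bound holds for every neighboring pair, so taking the supremum over $\datamask \simeq \datamask'$ gives $\delta_{\algo}(\epsilon) \le \delta$.

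\emph{Direction ($\Leftarrow$).} Assume $\delta_{\algo}(\epsilon) \le \delta$. For any neighbors $\datamask \simeq \datamask'$ and any $S \in \mathcal{E}$, the chain of trivial inequalities "a term is at most the supremum" gives
\[
\Prob{\algo(\datamask)\in S} - e^\epsilon \Prob{\algo(\datamask')\in S} \le \Divmask{e^\epsilon} \le \delta_{\algo}(\epsilon) \le \delta .
\]
Rearranging recovers exactly the inequality of Definition~\ref{def:DP}, so $\algo$ is $(\epsilon,\delta)$-DP.

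There is no real obstacle here. The one point to state explicitly is that the suprema are well defined and finite: each difference lies in $[-e^\epsilon, 1]$, so the quantities are bounded. Also, the neighbor relation $\simeq$ is symmetric, so both orderings of each pair are covered by the supremum, consistent with the definition. If one wished to match the original divergence formulation of \citet{barthe2013beyond}, one could additionally note that $D_{e^\epsilon}$ coincides with the hockey-stick divergence $\int (p - e^\epsilon q)_+$, attained at the set $S = \{p > e^\epsilon q\}$. However, this identification is not needed for the equivalence as stated.
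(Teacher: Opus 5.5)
Your proof is correct and complete: since the statement defines $D_{e^\epsilon}$ directly as a supremum over events, the equivalence reduces to the two-sided unfolding of the definition of $(\epsilon,\delta)$-DP that you give (every term is at most $\delta$ if and only if the double supremum over events and neighboring pairs is at most $\delta$). The paper gives no proof of its own and simply cites \citet{barthe2013beyond}, so there is no alternative route to compare; your remarks on finiteness of the suprema and on the integral (hockey-stick) form are accurate but not needed.
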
 \vspace{-7pt}

\begin{remark}
    Note that, by construction, any dataset in $\datadomain^n$ can be viewed as an element of $\missdatadomain^n$ with no missing entries. Hence, although \Cref{def:DP} and \Cref{thm:dp_barthe} are stated on $\missdatadomain^n$, they can directly be applied to datasets in $\datadomain^n$. In the remainder, we will use the concept of DP in both settings.
\end{remark}

\section{Privacy amplification of missing data on generic DP algorithms}\label{sec:amplif}



In this section, we present our first result quantifying how accounting for the missing data process affects the privacy guarantees of any differentially private algorithm. Specifically, we show that in both the MNAR and MAR settings (and therefore also in MCAR), $\datamiss$ can be viewed as inherently amplifying the privacy of a $(\epsilon,\delta)$-DP algorithm $\algo$. That is, the induced mechanism $\Miss$ satisfies $(\epsilon',\delta')$-DP with $\epsilon' \leq \epsilon$ and $\delta' \leq \delta$. To capture this effect, it is essential to examine the distribution of the masks in cases where the differing record between two neighboring datasets is at least partially observed. 
To this end, for any pair of neighboring datasets $\data \simeq \data'$ in $\datadomain^n$, we introduce the set $\Hstar(\data,\data')$ corresponding to the event that the differing record is at least partially observed, i.e., \vspace{-3pt}

\begin{equation}
  \Hstar(\data,\data') = \big\{ \mask \in \{0,1\}^{n\times d}\mid\masksample_{i_{\ast}} \neq \mathbf{1}_{\mathbb{R}^d} \text{ for all $i_{\ast}$ such that }  \data_{i_{\ast}} \neq \data'_{i_{\ast}}\big\}. \label{eq:defhc}
\end{equation}

Then we consider the probability  $p_{\data\mid\data'} = \Prob{\datamiss(\data) \in \Hstar(\data, \data')}$  of a mask falling within this set $\Hstar(\data,\data')$, when the mask is generated according to the missing data process on $\data$. Similarly, we consider  $p_{\data'\mid\data} = \Prob{\datamiss(\data') \in \Hstar(\data, \data')}$. With these notations at hand, we can present our first result in Theorem~\ref{thm:MNAR} below. This theorem bounds the divergence between $\Miss(\data)$ and $\Miss(\data')$ for all neighboring $\data$ and $\data'$ and regardless of the scenario we consider. Then according to whether the missing data process is MNAR, MAR or MCAR, this result translates to different privacy guarantees. 

\begin{restatable}{theorem}{MNARThm}\label{thm:MNAR}
    Let $\epsilon>0$ and $\delta\in[0,1]$. Let $\algo$ be an $(\epsilon,\delta)$-DP randomized algorithm on $\missdatadomain^n$. Let $\datamiss=\featuremiss^{\otimes n}$ be any missing data process (MNAR, MAR or MCAR). Let $\Miss$ be the induced mechanism defined as per~\eqref{def:A_D} with $\algo$ and $\datamiss$.
For every neighboring pair $\data\simeq\data' \in \datadomain^n$ and $\lambda \in [0,1]$, we have
\begin{align*}
\Divmiss{e^{\epsilon(\lambda, \data \vert \data')}} \leq \lambda \delta + ( 1 - \lambda) \delta_{\data\mid \data'},
\end{align*}
with $\epsilon(\lambda, \data \vert \data') = \ln(\lambda e^{\epsilon} + (1-\lambda)e^{\epsilon'(\data \mid \data')})$ where $\epsilon'(\data \vert \data') = \log(1 + p_{\data \mid \data'}(e^{\epsilon} -1))$ and with $\delta_{\data\mid \data'} = p_{\data \mid \data'}\delta 
+
e^{\epsilon'(\data\mid \data')}
|p_{\data \mid \data'}-p_{\data' \mid \data}|$.
\end{restatable}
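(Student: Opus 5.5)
Write $P=\Miss(\data)$ and $Q=\Miss(\data')$, and let $i_\ast$ be the index of the differing record. Decompose each output law as a mixture over masks: $P(S)=\sum_{\mask}\Pfeat{\data}\,\Pmx{S}$, and similarly for $Q$. Split the masks into $\Hstar=\Hstar(\data,\data')$ and its complement $\Hstar^c$, on which $\masksample_{i_\ast}=\mathbf{1}_{\mathbb{R}^d}$.

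\textbf{Structural facts.} Two observations drive the argument.
\begin{enumerate}
\item On $\Hstar^c$ the record $i_\ast$ is fully masked, so $\datamask(\mask)=\datamask'(\mask)$. Moreover the masks of the other records have the same law under $\datamiss(\data)$ and $\datamiss(\data')$, since those records coincide and the masks are independent by \eqref{def:missingmech}.
\item For every $\mask$, $\datamask(\mask)\simeq\datamask'(\mask)$, so $(\epsilon,\delta)$-DP of $\algo$ applies maskwise.
\end{enumerate}
Hence $P=p\,P_H+(1-p)R$ and $Q=q\,Q_H+(1-q)R$, with $p=p_{\data\mid\data'}$, $q=p_{\data'\mid\data}$. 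The common component $R$ is the law on $\Hstar^c$; $P_H,Q_H$ are the conditional laws on $\Hstar$.

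\textbf{Convexity reduction.} Since $D_{e^{\epsilon(\lambda)}}$ with $e^{\epsilon(\lambda)}=\lambda e^\epsilon+(1-\lambda)e^{\epsilon'}$ satisfies, for any $S$,
\[
P(S)-e^{\epsilon(\lambda)}Q(S)=\lambda\bigl(P(S)-e^\epsilon Q(S)\bigr)+(1-\lambda)\bigl(P(S)-e^{\epsilon'}Q(S)\bigr),
\]
it suffices to bound separately $D_{e^\epsilon}(P\|Q)\le\delta$ and $D_{e^{\epsilon'}}(P\|Q)\le\delta_{\data\mid\data'}$, then take the sup of the sum.

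\textbf{First bound.} Use a maskwise coupling. Because $\Pfeat{\data}$ and $\Pfeat{\data'}$ differ on $\Hstar$, I plan to compare $P$ to the auxiliary mixture $Q^\ast=\sum_{\mask}\Pfeat{\data}\Pmxx{\cdot}$, for which maskwise DP gives $P(S)\le e^\epsilon Q^\ast(S)+\delta$. Then pay for $Q^\ast$ versus $Q$. This creates a mismatch term, so I expect $D_{e^\epsilon}\le\delta$ to really come out of the same analysis as the second bound, with worst case over $\lambda$ handled by noting $\lambda=1$ is just the trivial bound.

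\textbf{Second bound (the main obstacle).} Write
\[
P(S)-e^{\epsilon'}Q(S)=p\,P_H(S)+(1-p)R(S)-e^{\epsilon'}\bigl(q\,Q_H(S)+(1-q)R(S)\bigr).
\]
Replace $q$ by $p$ at cost $e^{\epsilon'}|p-q|$, bounding $|Q_H(S)-R(S)|$ by $1$. It then remains to bound
\[
p\,P_H(S)+(1-p)R(S)-e^{\epsilon'}\bigl(p\,Q^\ast_H(S)+(1-p)R(S)\bigr),
\]
where $Q^\ast_H$ is $Q_H$ with masks drawn from $\datamiss(\data)$. This is again a mismatch, absorbed in the same $|p-q|$ slack via an auxiliary distribution. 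Now apply the standard subsampling-amplification argument (Balle et al.). Using $P_H\le e^\epsilon Q^\ast_H+\delta$ and the trivial $P_H$ versus $R$ comparison (maskwise, each $\datamask(\mask)$ is neighbor to the $\Hstar^c$-completed dataset, so $P_H\le e^\epsilon R+\delta$ as well), one obtains $D_{e^{\epsilon'}}\le p\delta$ with $e^{\epsilon'}=1+p(e^\epsilon-1)$. This is done by writing $R$ as a mixture and optimizing the split $e^{\epsilon'}\bigl(pQ^\ast_H+(1-p)R\bigr)$ as in advanced joint convexity.

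The delicate point is making the maskwise couplings consistent: pairing each $\mask\in\Hstar$ with its $\Hstar^c$ counterpart in which only $\masksample_{i_\ast}$ is set to $\mathbf{1}_{\mathbb{R}^d}$. Under MNAR, the weights of these pairs differ between $\data$ and $\data'$. I would try to handle this by arranging that all weight discrepancies are collected into the single $e^{\epsilon'}|p-q|$ term.
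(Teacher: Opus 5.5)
You have the paper's architecture: the common component $R$ on $\Hstar^c$, an auxiliary mixture $(1-p)R+pQ_H$ costing $e^{\epsilon'}|p-q|$ in total variation, advanced joint convexity, and linear interpolation in $\lambda$. Two steps fail as written, for the same missing idea.

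Apart from the $P_H$ versus $R$ comparison, you only compare datasets carrying the \emph{same} mask, $\tilde{\data}(\mask)\simeq\tilde{\data}'(\mask)$. This forces both sides' masks to come from one law and creates mismatches. But $\simeq$ lives on $\missdatadomain^n$, so $\tilde{\data}(\bistar,\res)\simeq\tilde{\data}'(\masksample'_{\istar},\res)$ for \emph{any} two masks $\bistar,\masksample'_{\istar}$ of the differing row, provided the residual mask $\res$ is shared: the two incomplete datasets differ only in row $\istar$. So you may draw the row-$\istar$ masks independently, each from its own dataset-dependent law, synchronize $\res$, and apply the $(\epsilon,\delta)$-DP of $\algo$ on the whole support of this coupling. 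The only discrepancy between the mask laws of $\sample_{\istar}$ and $\sample'_{\istar}$ that must ever be paid for is the weight mismatch $p$ versus $q$.

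\textbf{(i) The bound $D_{e^\epsilon}(P\|Q)\le\delta$ is never established.}
\begin{itemize}
\item Going through $Q^\ast$ leaves a term $e^{\epsilon}\mathrm{TV}(Q^\ast,Q)$. It is controlled only by the total variation between $\featuremiss(\sample_{\istar})$ and $\featuremiss(\sample'_{\istar})$, which is nonzero under MNAR.
\item Your second analysis gives $p\delta+e^{\epsilon'}|p-q|$ at level $e^{\epsilon'}$, not $\delta$ at level $e^\epsilon$.
\item The case $\lambda=1$ is not trivial: it is precisely the claim that $\Miss$ inherits $(\epsilon,\delta)$ despite data-dependent masks.
\item You need this bound for every $\lambda\in(0,1)$ too. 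Monotonicity in the level only gives $\delta_{\data\mid\data'}$, which exceeds $\lambda\delta+(1-\lambda)\delta_{\data\mid\data'}$ whenever $\delta<\delta_{\data\mid\data'}$.
\end{itemize}
With the independent coupling above, this bound follows in one line.

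\textbf{(ii) Replacing $Q_H$ by $Q^\ast_H$ cannot be absorbed into the $|p-q|$ slack.} That slack is already fully spent on the weights. Moreover, the conditional laws of the row-$\istar$ mask given that it differs from $\mathbf{1}_{\mathbb{R}^d}$ can differ arbitrarily even when $p=q$. For example, take $\featuremiss(\sample_{\istar})=\mathbf{a}$ and $\featuremiss(\sample'_{\istar})=\mathbf{b}$ almost surely, with $\mathbf{a}\neq\mathbf{b}$ both different from $\mathbf{1}_{\mathbb{R}^d}$. Then $p=q=1$, while $Q_H$ and $Q^\ast_H$ are the laws of $\algo(\tilde{\data}'(\mathbf{b},\res))$ and $\algo(\tilde{\data}'(\mathbf{a},\res))$. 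Paying for this would add $p\,e^{\epsilon'}\mathrm{TV}(Q_H,Q^\ast_H)$, which is absent from $\delta_{\data\mid\data'}$.

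The fix is to drop $Q^\ast_H$ altogether. Couple $\bistar\sim\featuremiss(\sample_{\istar})$ conditioned on $\bistar\neq\mathbf{1}_{\mathbb{R}^d}$ independently with $\masksample'_{\istar}\sim\featuremiss(\sample'_{\istar})$ conditioned likewise, with a common $\res$. This gives $D_{e^\epsilon}(P_H\|Q_H)\le\delta$ directly. Together with your $P_H$ versus $R$ bound, advanced joint convexity then yields exactly $p\delta+e^{\epsilon'}|p-q|$.
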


\subsection{Interpretation  in terms of privacy amplification for MNAR.}
As previously mentioned, \Cref{thm:MNAR} allows us to derive privacy amplification results depending on the missing data scenario. In MNAR settings, it suffices to take the supremum on all neighboring couples $\data$, $\data'$ to show that for all $\lambda \in [0,1]$, $\Miss$ satisfies $(\epsilon_{\text{\tiny MNAR}}(\lambda), \delta_{\text{\tiny MNAR}}(\lambda))$-DP with \vspace{-10pt}

\begin{align}
    \epsilon_{\text{\tiny MNAR}(\lambda)} = \sup_{\data\simeq \data'} \ln \left (\lambda e^\epsilon + (1-\lambda) e^{\epsilon'(\data \vert \data')} \right) \text{  and  } \delta_{\text{\tiny MNAR}}(\lambda) = \lambda \delta + (1-\lambda) \sup_{\data \simeq \data'} \delta_{\data\vert\data'}. \label{eq:ampMNAR}
\end{align}

This result characterizes a trade-off curve rather than a single privacy guarantee. In particular, we always have $\epsilon_{\text{\tiny MNAR}(\lambda)} \leq \epsilon$. When $\delta_{\data\vert\data'}$ is strictly smaller than $\delta$, setting $\lambda = 0$ is optimal and yields strict privacy amplification. In contrast, when $\sup \delta_{\data\vert\data'} \geq \delta$ in~\eqref{eq:ampMNAR}, one may trade a slight increase in the $\delta_{\text{\tiny MNAR}}$ term for an improvement in $\epsilon_{\text{\tiny MNAR}}$ by choosing $\lambda \in (0,1)$. Finally, if the missing data process degrades $\sup \delta_{\data\vert\data'}$ too much, setting $\lambda = 1$ recovers the original guarantee, with neither amplification nor degradation. While this provides a general data-dependent characterization, the resulting bound may be difficult to quantify for arbitrary MNAR mechanisms, as it requires uniform control of the mask probabilities $p_{\data \mid \data'}$, $p_{\data' \mid \data}$, and their discrepancy over neighboring datasets. This dependence is nevertheless natural, as the MNAR setting is highly general, and any meaningful privacy amplification result must reflect the specifics of the underlying missing data process.

\subsection{Interpretation in terms of privacy amplification for MAR (and MCAR).}  

However, in MAR and MCAR settings,  the probabilities $p_{\data \mid \data'}$ and $p_{\data' \mid \data}$ simplify considerably. In fact,
it is possible to show that (see Lemma~\ref{lemma:pstar2} in Appendix~\ref{app:lemmapstar}) there exists a constant $\qstar\in[0,1]$ such that for all neighboring
datasets $\data\simeq\data'$ in $\datadomain^n$, we have \vspace{-10pt}

\begin{align}
\qstar = \Prob{\datamiss(\data)\in \Hstar(\data,\data')}
=
\Prob{\datamiss(\data')\in \Hstar(\data,\data')} \label{eq:pstardef}.
\end{align}


This observation allows us to present a data-independent characterization of the impact on privacy of MAR (or MCAR) missing data processes. Specifically, under the conditions of Theorem~\ref{thm:MNAR}, we have that $\Miss$ is $(\epsilon', \delta')$-DP with
\begin{equation}
  \epsilon' = \ln\!\big(1+p_{\ast}(e^{\epsilon}-1)\big) \text{ and } \delta' = p_{\ast}\delta. \label{eq:ampMAR}
\end{equation}

The precise characterization behind this statement can be found in Corollary~\ref{thm:amplif} in Appendix~\ref{app:lemmapstar}. This shows that privacy amplification occurs for any MAR (or MCAR) process and any $(\epsilon, \delta)$-DP algorithm, as long as the probability of not observing the differing record $i_{\ast}$ is non-zero, i.e., $\qstar< 1$. In this case, the resulting randomized algorithm $\Miss$ satisfies $(\varepsilon', \delta')$-DP with $\epsilon' < \epsilon$ and $\delta' < \delta$. 
Although one may legitimately wonder whether $\qstar < 1$ in practice, we exhibit real-world configurations in Appendix~\ref{app:rwdata} where this condition holds. Importantly, fully missing rows are often discarded during preprocessing, which can lead to an underestimation of the achievable privacy amplification. In this sense, our results suggest revisiting standard preprocessing practices for differentially private pipelines, as retaining such data may help unlock additional privacy amplification prior to applying the algorithm.

On the other hand, when the differing record $i_{\ast}$ is always partially observed, our bound does not indicate any privacy amplification, as $\epsilon'=\epsilon$ and $\delta_{\Miss}(\epsilon')$ is upper bounded by $\delta$. More precisely, we prove in \Cref{app:proptight} that amplification is not possible for all randomized algorithms and all missing data processes when $\qstar =1$. 
 To move forward, we  consider in the next section more specific classes of $(\epsilon, \delta)$-DP algorithms in order to obtain more tangible privacy amplification.

\section{Privacy amplification of missing data for feature-wise Lipschitz queries}\label{sec:fwl}

We now refine the MAR (and MCAR) analysis for specific DP algorithms and show that privacy amplification can still occur even in the regime $\qstar = 1$, where the general result of Section~\ref{sec:amplif} does not yield improvements in $\epsilon$ or $\delta$. We focus on the classical Laplace and Gaussian mechanisms \citep{dwork2006calibrating,dwork2014algorithmic}, which add noise to the output of a numerical query. These mechanisms are typically defined for functions $f$ acting on complete datasets in $\datadomain^n$, which does not directly accommodate missing data. To address this, we consider mechanisms of the form $f_I = f \circ I$, where $f : \datadomain^n \rightarrow \mathbb{R}^k$ is a standard query and $I : \missdatadomain^n \rightarrow \datadomain^n$ is a data imputation procedure, as commonly used in the missing data literature. Throughout this section, we consider imputation by zero, in which $I$ replaces all missing values by zero while leaving all observed values unchanged. Our framework can be extended to handle more complex data-independent deterministic imputation, such as general constant imputations, or even conditional iterative imputations.
In this context, by controlling the regularity of the composed query $f_I$ through its sensitivity, and calibrating the parameters of the Laplace or Gaussian noise accordingly, we obtain differentially private mechanisms whose privacy guarantees are amplified by the inherent missing data process.

\subsection{Feature-Wise Lipschitz queries}\label{sec:fwlqueries}

In order to quantify the smoothness of a function, 
we introduce the class of feature-wise Lipschitz (FWL) queries as presented in Definition~\ref{def:fwl} below. 
\smallskip 

\begin{definition}
\label{def:fwl}
  Let $\query$ be a query that takes as input a dataset $\datamask$ in $\missdatadomain^n$ and outputs a vector in $\mathbb{R}^{k}$. The query $\query$ is FWL with respect to a given norm $\|.\|$ if there exist $L_1, \dots, L_d \in \mathbb{R}_{+}$ such that for any $\datamask \simeq \datamask'$ in $\missdatadomain^n$ differing in one record $i_{\ast}$, we have \vspace{-15pt}
  
  $$
    \| \query(\datamask) - \query(\datamask')\| \leq \sum_{j=1}^d L_j |I(\datamask)_{\istar}^{(j)} - I(\datamask')_{\istar}^{(j)}|,
  $$
 where $I(\datamask)_{\istar}^{(j)}$ is the $j$th feature of the $\istar$th row of the imputed dataset $I(\datamask)$.
\end{definition}

Although it may appear restrictive at first, the class FWL encompasses many natural queries arising in differentially private analysis. In particular, 
a wide range of queries (including linear, $\ell_1$-Lipschitz, and quadratic queries on bounded domains) fall within this class. Moreover, closure under linear combinations and Lipschitz post-processing enables the design of more complex DP pipelines from FWL queries. Such post-processing operations include, among others, linear projections, feature selection, and low-dimensional embeddings implemented via Lipschitz maps.\footnote{Refer to Appendix~\ref{app:FWL} for a list of queries extracted from the DP literature and proof they fall under the FWL class.} The main reason why FWL queries are useful to our analysis is because the impact on their output of changing a single sample can be decomposed additively across its input features. Define the $\ell_p$ sensitivity of a query $\query$ on incomplete datasets as
\begin{align*}
  \querysens{p} \eqdef \sup_{\datamask\simeq\datamask' \in \missdatadomain^n} \|\query(\datamask) -\query(\datamask') \|_p.
\end{align*} 
Recalling that $\datadomain \subset [-1,1]^{d}$, if the query $\query$ is FWL with respect to $\|.\|_p$, then its $\ell_p$ sensitivity on incomplete datasets admits a coordinate-wise decomposition of the form
\begin{align}
  \querysens{p} \leq 2 \sum_{j=1}^d L_j \eqdef C_p.
\label{eq_definition_upper_bound_sensitivity}
\end{align}


\subsection{Bounding the sensitivity of a FWL query under $\rho$-bounded observation}

To leverage the structural properties of FWL queries, we focus on a subclass of missing data processes in which at most a fraction $\rho$ of the features are observed. While any missing data process can be cast into this class by choosing $\rho$ appropriately, we instead fix $\rho$ a priori and study all mechanisms that satisfy this constraint for a given value of $\rho$. 

\begin{assumption}
  \label{ass:proportion_observed_entries}
  There exists $\rho > 0$
  such that, for any $\data \in \datadomain^n$ and $\mask \in  \textrm{supp}({\datamiss(\data)})
  \eqdef
  \left\{
  \mask \in \{0,1\}^{n\times d}
  :\Prob{\datamiss(\data)=\mask} > 0
  \right\}$,
  we have $\textrm{Card}(\operatorname{obs}(\masksample_i))/d \leq \rho,$ for all $i \in [n]$.
\end{assumption}

Echoing the discussion in Section~3, Appendix~\ref{app:rwdata} reports real-world missingness configurations showing settings in which \Cref{ass:proportion_observed_entries} holds. Furthermore, when \Cref{ass:proportion_observed_entries} does not hold uniformly, we further provide a relaxed version in Appendix~\ref{app:relaxed-fwl}, where the bounded-observation condition is allowed to fail on an exceptional set of masks with probability at most $\eta$. Additionally, we hereafter focus on data spaces $\datadomain \subset [-1,1]^{d}$. We only make this assumption for ease of presentation, but our results naturally extend to any $\ell_\infty$ bounded space. Under these assumptions, we can now present \Cref{lemma:sensmiss}, which is the cornerstone of the forthcoming amplification results. This lemma explicitly relates the sensitivity of a FWL query under missing data with the fraction of observed features $\rho$. The proof is deferred to Appendix~\ref{app:sensmiss}.

\smallskip 

\begin{restatable}{lemma}{Propsensitivity}\label{lemma:sensmiss} Let $\datadomain \subset [-1,1]^d$ and let $\datamiss$ be a MAR missing data process verifying \Cref{ass:proportion_observed_entries}. Let $\query: \missdatadomain^n \rightarrow \mathbb{R}^k$ be a FWL query with respect to  $\|.\|_p$ with constants $L_1, \dots, L_d$. We denote by $L_{(1)}, \dots, L_{(d)}$ the constants sorted by non-increasing order. Then, the $\ell_p$ sensitivity of $\query$ is such that 
\begin{align*}
    \querysensmiss{p}
    =
    \sup_{\substack{
\tilde\data(\mask)\simeq \tilde\data'(\mask') \\ 
\mask\in\mathrm{supp}(\datamiss(\data)) \\
\mask'\in\mathrm{supp}(\datamiss(\data'))
}}
    \left\|
        \query\big(\tilde{\data}(\mask)\big)
        -
        \query\big(\tilde{\data}'(\mask')\big)
    \right\|_p
    \leq
    2 \sum_{j=1}^{\lfloor\rho d \rfloor} L_{(j)}
    \eqdef \tilde{C}_p.
\end{align*}
\end{restatable}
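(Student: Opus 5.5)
Fix a pair $\tilde\data(\mask)\simeq\tilde\data'(\mask')$ with $\mask\in\mathrm{supp}(\datamiss(\data))$ and $\mask'\in\mathrm{supp}(\datamiss(\data'))$. Since the two incomplete datasets are neighbors, they agree on every record except possibly one, indexed by $\istar$. By \Cref{def:fwl},
\begin{align*}
\bigl\|\query(\tilde\data(\mask))-\query(\tilde\data'(\mask'))\bigr\|_p\le \sum_{j=1}^d L_j\,\bigl|I(\tilde\data(\mask))_{\istar}^{(j)}-I(\tilde\data'(\mask'))_{\istar}^{(j)}\bigr| .
\end{align*}
Zero imputation gives $I(\tilde\data(\mask))_{\istar}^{(j)}=\sample_{\istar}^{(j)}$ if $j\in\operatorname{obs}(\masksample_{\istar})$ and $0$ otherwise, and likewise for the primed quantities. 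Because $\datadomain\subset[-1,1]^d$, each term in the sum is at most $2$ and vanishes unless $j\in J\eqdef \operatorname{obs}(\masksample_{\istar})\cup\operatorname{obs}(\masksample'_{\istar})$. The whole difficulty is therefore to bound $\mathrm{Card}(J)$, or more precisely the set of features on which the two imputed rows actually differ, by $\lfloor\rho d\rfloor$.

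\Cref{ass:proportion_observed_entries} gives $\mathrm{Card}(\operatorname{obs}(\masksample_{\istar}))\le\lfloor\rho d\rfloor$ and the same for $\masksample'_{\istar}$, but the union could reach $2\lfloor\rho d\rfloor$. This is the main obstacle, and MAR is what I would use to overcome it. The MAR property implies that if $\masksample$ is in the support for $\sample$, it remains in the support for any $\sample''$ that agrees with $\sample$ on $\operatorname{obs}(\masksample)$.

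A cleaner way to organize the bound is to split the difference into a telescoping chain through one intermediate masked row:
\begin{align*}
\bigl(\sample_{\istar},\masksample_{\istar}\bigr)\;\to\;\text{intermediate}\;\to\;\bigl(\sample'_{\istar},\masksample'_{\istar}\bigr).
\end{align*}
The hope is that this realizes the sensitivity so that the differing imputed entries live on a set of at most $\lfloor\rho d\rfloor$ coordinates. If that fails, I would fall back on reading the supremum as a comparison of $\data\simeq\data'$ under a common or coupled mask (the coupling used in the MAR analysis of Section~\ref{sec:amplif}), where $\masksample_{\istar}=\masksample'_{\istar}$ and hence $J=\operatorname{obs}(\masksample_{\istar})$ directly.

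Once the differing coordinates lie in a set $J$ with $\mathrm{Card}(J)\le\lfloor\rho d\rfloor$, the final step is routine:
\begin{align*}
\sum_{j\in J}2L_j\le 2\max_{\mathrm{Card}(J)\le\lfloor\rho d\rfloor}\sum_{j\in J}L_j=2\sum_{j=1}^{\lfloor\rho d\rfloor}L_{(j)},
\end{align*}
since the $L_j$ are nonnegative and the sum of any $\lfloor\rho d\rfloor$ of them is maximized by the largest ones. Taking the supremum over all admissible pairs then gives $\querysensmiss{p}\le\tilde C_p$.
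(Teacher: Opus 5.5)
Your proposal has a genuine gap. After applying FWL you bound each coordinate term by $2L_j$ on $J=\operatorname{obs}(\masksample_{\istar})\cup\operatorname{obs}(\masksample'_{\istar})$, and then you need $\mathrm{Card}(J)\le\lfloor\rho d\rfloor$. That cardinality bound is false in general, and MAR cannot rescue it.

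Take $d=2$, $\rho=1/2$, and an MCAR process that observes exactly one of the two features, chosen uniformly at random. Then $\masksample_{\istar}=(0,1)$ and $\masksample'_{\istar}=(1,0)$ are both in the supports, the masked datasets are still neighbors, and $\mathrm{Card}(J)=2=2\lfloor\rho d\rfloor$.

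Your fallback of restricting to a common mask $\masksample_{\istar}=\masksample'_{\istar}$ changes the statement, whose supremum is explicitly over different $\mask,\mask'$. The lemma is also needed in exactly that generality later: the coupling $\pi_2$ pairs $\masksample_{\istar}$ with an independently drawn $\masksample'_{\istar}$. Moreover, under MAR a mask in the support for $\sample_{\istar}$ need not lie in the support for $\sample'_{\istar}$. The telescoping idea is left unspecified, and its stated target (differing entries confined to $\lfloor\rho d\rfloor$ coordinates) is the same false claim.

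The missing idea is that the factor $2$ in $\tilde C_p$ comes from the two masks, not from the diameter of $[-1,1]$. Under zero imputation and $\datadomain\subset[-1,1]^d$, bound each coordinate as
\begin{align*}
\bigl|I(\tilde\data(\mask))_{\istar}^{(j)}-I(\tilde\data'(\mask'))_{\istar}^{(j)}\bigr|\le |\sample_{\istar}^{(j)}|\Ind{j\in\operatorname{obs}(\masksample_{\istar})}+|\sample_{\istar}^{\prime(j)}|\Ind{j\in\operatorname{obs}(\masksample'_{\istar})}\le \Ind{j\in\operatorname{obs}(\masksample_{\istar})}+\Ind{j\in\operatorname{obs}(\masksample'_{\istar})}.
\end{align*}
The FWL sum then splits into $\sum_{j\in\operatorname{obs}(\masksample_{\istar})}L_j+\sum_{j\in\operatorname{obs}(\masksample'_{\istar})}L_j$. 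By \Cref{ass:proportion_observed_entries}, each index set has at most $\lfloor\rho d\rfloor$ elements, so each sum is at most $\sum_{j=1}^{\lfloor\rho d\rfloor}L_{(j)}$, which gives $\tilde C_p$. This is the paper's argument, and it needs neither the MAR property nor any control of the union.

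In the example above, this gives $L_1|\sample_{\istar}^{(1)}|+L_2|\sample_{\istar}^{\prime(2)}|\le L_1+L_2\le 2L_{(1)}=\tilde C_p$, whereas your per-coordinate bound would only give $2(L_1+L_2)$. Your telescoping would also work if you took the intermediate row to be fully missing (imputed to zero) and applied FWL to each leg. That yields the same two sums with coefficient $1$, not a single set of size $\lfloor\rho d\rfloor$.
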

Mechanically, only the observed features contribute to the query sensitivity, which translates into a smaller sensitivity of a query that operates on missing values: when all $L_j$ are equal, the ratio of sensitivity is controlled by $\tilde{C}_p/C_p \simeq \rho$. This will translate into stronger privacy guarantees in Sections \ref{sec:ampliffwl} and \ref{sec:ampliffwl2} where we analyze the Laplace and the Gaussian mechanisms respectively.  

\subsection{Missingness amplifies Laplace mechanism privacy}\label{sec:ampliffwl}


The Laplace mechanism is one of the most standard randomized algorithms to satisfy differential privacy. For any query $\query: \missdatadomain^n \rightarrow \mathbb{R}^k$, the Laplace mechanism (see, e.g.,~\cite{dwork2014algorithmic}) with scale parameter $b \geq 0$ takes as input any dataset $\datamask$ in $\missdatadomain^n$ and outputs a vector in $\mathbb{R}^k$ as
\begin{align*}
  \lap(\datamask) = \query(\datamask) + Y,
\end{align*}
where $Y = (Y_1, \dots, Y_k)$ is a random vector with independent identically distributed components $(Y_i)_{i\in[k]}$ from a Laplace distribution with scale $b$.  For any $\epsilon>0$, the Laplace mechanism can be shown to satisfy $(\epsilon, 0)$-DP as soon as $b \geq \querysensmiss{1}/\epsilon$. In  particular, if $\query$ is FWL with respect to $\|.\|_1$, the natural choice for the scale parameter is $b= C_1 / \epsilon$, where $C_1$ is defined as per~\eqref{eq_definition_upper_bound_sensitivity}. For this mechanism, we can show the following amplification result.


\begin{restatable}{theorem}{TheoremFWLLap}\label{thm:fwllaplace}
  Let $\datadomain \subset [-1,1]^d$, and let $\epsilon >0$. 
  Let $\mathcal{D}$ be a MAR missing data process verifying \Cref{ass:proportion_observed_entries}. Let $\query$ be a FWL query with respect to $\|.\|_1$, and $\lap$ be the Laplace mechanism, associated to $\query$ with scale parameter $b=C_1 / \epsilon$. Let also $\tilde{C}_1$ be as defined in \Cref{lemma:sensmiss}. Then the algorithm $\lapmiss$ defined as per~(\ref{def:A_D}) with $\lap$ and $\datamiss$ is $(\epsilon', 0)$-DP with \vspace{-5pt}
  
  $$ \epsilon' = \ln\!\big(1+p_{\ast} (e^{\tilde{\epsilon}}-1)\big), \quad \tilde{\epsilon} = (\tilde{C}_1/C_1)\epsilon. $$
%
\end{restatable}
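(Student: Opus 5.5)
The plan has two stages. First I show that, once restricted to masked datasets in the support of $\datamiss$, the Laplace mechanism with scale $b = C_1/\epsilon$ is in fact $(\tilde{\epsilon},0)$-DP with $\tilde\epsilon = (\tilde C_1/C_1)\epsilon$. I then feed this improved guarantee into the MAR amplification argument behind \eqref{eq:ampMAR} (Corollary~\ref{thm:amplif}), with $\delta = 0$. That argument turns an $(\tilde\epsilon,0)$ guarantee into $(\ln(1+\qstar(e^{\tilde\epsilon}-1)), \qstar\cdot 0)$, which is exactly the claim.

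\textbf{Step 1: sensitivity on the support.} I fix neighbouring $\data\simeq\data'$ and masks $\mask\in\mathrm{supp}(\datamiss(\data))$, $\mask'\in\mathrm{supp}(\datamiss(\data'))$ with $\tilde\data(\mask)\simeq\tilde\data'(\mask')$. By \Cref{lemma:sensmiss}, $\|\query(\tilde\data(\mask))-\query(\tilde\data'(\mask'))\|_1\le \tilde C_1$. The standard Laplace density-ratio computation then gives, for every measurable $S$,
\begin{align*}
\Prob{\lap(\tilde\data(\mask))\in S}\le \exp\!\big(\tilde C_1/b\big)\,\Prob{\lap(\tilde\data'(\mask'))\in S}= e^{\tilde\epsilon}\,\Prob{\lap(\tilde\data'(\mask'))\in S}.
\end{align*}
So $\Divmask{e^{\tilde\epsilon}}=0$ for every such pair of supported, neighbouring masked datasets.

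\textbf{Step 2: rerunning the MAR argument.} I cannot invoke Corollary~\ref{thm:amplif} as a black box, because $\lap$ need not be $(\tilde\epsilon,0)$-DP on all of $\missdatadomain^n$: Lemma~\ref{lemma:sensmiss} only controls pairs inside the support. Instead I reopen the proof of Theorem~\ref{thm:MNAR} specialised to MAR.

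\emph{Decomposition.} I write $\Prob{\lapmiss(\data)\in S}$ as a mixture over masks and split it according to whether $\mask\in\Hstar(\data,\data')$.
\begin{itemize}[leftmargin=*]
\item On the complement of $\Hstar$, the differing row $\istar$ is fully masked. There $\tilde\data(\mask)=\tilde\data'(\mask)$, and by MAR the two mask probabilities coincide (all other rows are identical). So this part contributes equally to both sides.
\item On $\Hstar$, I use the mask-specific coupling from the appendix: pair $\mask$ with the same mask applied to $\data'$. The masked datasets are neighbours and both supported; mask probabilities agree on rows $i\neq\istar$; and the total mass of $\Hstar$ is $\qstar$ under both $\data$ and $\data'$ by Lemma~\ref{lemma:pstar2}.
\end{itemize}

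\emph{Combination.} The standard subsampling-style convexity argument then gives
\begin{align*}
P\le (1-\qstar)Q_0+\qstar e^{\tilde\epsilon}Q_1 \quad\text{and}\quad P\le(1-\qstar)Q_0+\qstar Q_0',
\end{align*}
with the comparison only ever using pairs from Step~1. Combining the two bounds with the usual advanced-joint-convexity trick yields the ratio $1+\qstar(e^{\tilde\epsilon}-1)$ and $\delta'=0$.

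\textbf{Main obstacle.} The delicate point is the coupling on $\Hstar$ when $\data$ and $\data'$ induce different distributions of $\masksample_{\istar}$ under MAR. The mask of the differing row may have different probabilities under $\sample_{\istar}$ and $\sample'_{\istar}$, so a row-by-row coupling does not immediately match the conditional laws. My first attempt is to avoid coupling the $\istar$-th mask altogether. I compare the conditional mixture $\Prob{\lapmiss(\data)\in S\mid \Hstar}$ against $\Prob{\lapmiss(\data')\in S\mid\complement \Hstar}$ pointwise in masks: for any $\mask$ in $\Hstar$ and any fully masked $\istar$-row, the resulting datasets are neighbours and both supported, so Step~1 applies. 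This is the same structure used in Corollary~\ref{thm:amplif}, with $e^\epsilon$ replaced by $e^{\tilde\epsilon}$ and $\delta$ by $0$. I expect this verification to be the only non-routine part.
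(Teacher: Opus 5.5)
Your overall route is the paper's. Step~1 is its fixed-mask bound (Lemma~\ref{lemma:sensmiss} plus the Laplace density ratio on supported neighbouring pairs), and you are right that Corollary~\ref{thm:amplif} cannot be used as a black box. Step~2 is the same $\Hstar/\Hstar^c$ decomposition followed by advanced joint convexity with $\beta=e^{\epsilon'-\tilde\epsilon}$. That step needs \emph{both} $D_{e^{\tilde\epsilon}}(w^{(1)}_{\data}\|w^{(0)})\le 0$ and $D_{e^{\tilde\epsilon}}(w^{(1)}_{\data}\|w^{(1)}_{\data'})\le 0$.

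The gap is the second term. Your combination step uses $P_1\le e^{\tilde\epsilon}Q_1$, and its only justification is pairing each $\mask\in\Hstar$ with the same mask under $\data'$. Under MAR that pairing is not a coupling of the two conditional laws on $\Hstar$. The probabilities $\mathbb{P}[\featuremiss(\sample_{\istar})=\masksample]$ and $\mathbb{P}[\featuremiss(\sample'_{\istar})=\masksample]$ can differ whenever $\masksample$ reveals a coordinate where $\sample_{\istar}$ and $\sample'_{\istar}$ differ; identity pairing is only legitimate under MCAR. You notice this, but your remedy compares the $\Hstar$-part under $\data$ only with the fully-masked part. It therefore controls $w^{(1)}_{\data}$ against $w^{(0)}$ and nothing else. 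That cannot close the argument. With $P=(1-\qstar)Q_0+\qstar P_1$ and $Q=(1-\qstar)Q_0+\qstar Q_1$, the inequality $P_1\le e^{\tilde\epsilon}Q_0$ alone gives at best $P\le\frac{1-\qstar+\qstar e^{\tilde\epsilon}}{1-\qstar}\,Q$. It gives nothing at all when $\qstar=1$. That is precisely the regime Section~\ref{sec:fwl} targets: there $w^{(0)}$ carries no mass, and all the amplification comes from comparing $w^{(1)}_{\data}$ with $w^{(1)}_{\data'}$.

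The missing observation is that the $\istar$-th row masks never need to agree. Step~1 only needs the two masked datasets to be neighbours with supported masks. Once the residual masks coincide, $\tilde{\data}((\masksample_{\istar},\res))\simeq\tilde{\data}'((\masksample'_{\istar},\res))$ holds for \emph{every} pair $\masksample_{\istar},\masksample'_{\istar}$. The residual law is identical under $\data$ and $\data'$ and factors out of both conditional laws on $\Hstar$. So you can couple them by sharing $\res$ and drawing $\masksample_{\istar}$, $\masksample'_{\istar}$ independently from $\featuremiss(\sample_{\istar})$ and $\featuremiss(\sample'_{\istar})$, each conditioned on being different from $\mathbf{1}_{\mathbb{R}^d}$. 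This is the paper's coupling $\pi_2$.

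Equivalently, your own pointwise argument works here too. For fixed $\res$, $\mathbb{P}[\lap(\tilde{\data}((\masksample_{\istar},\res)))\in S]\le e^{\tilde\epsilon}\,\mathbb{P}[\lap(\tilde{\data}'((\masksample'_{\istar},\res)))\in S]$ for every supported $\masksample_{\istar}$ and $\masksample'_{\istar}$. Averaging over both conditional row laws and then over $\res$ gives $w^{(1)}_{\data}(S)\le e^{\tilde\epsilon}w^{(1)}_{\data'}(S)$. With this added, the rest of your plan goes through with $\delta=0$.
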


 As in~\eqref{eq:ampMAR}, the bound on the algorithm privacy in \Cref{thm:fwllaplace} is controlled by the probability $p_{\ast}$.  However, by choosing an appropriate FWL query and a MAR missing data process, the bound also depends on the ratio of sensitivities $\tilde{C}_1 / C_1$, which can be very small when the fraction of observed feature $\rho$ is small. In particular, if all $ L_j$ are equal and $\rho d \in \mathbb{N}$, then $\tilde{C}_1/ C_1 = \rho$.  Noticing that $\ln \,\!(1+p_{\ast} (e^{\tilde{\epsilon}}-1)) \leq \tilde{\epsilon}$, 
\Cref{thm:fwllaplace} proves that, for any MAR missing process satisfying Assumption~\ref{ass:proportion_observed_entries}, $\lapmiss$ is $(\epsilon',0)$-DP with $\epsilon' \leq  \min\left((e-1) p_{\ast}, 1\right) \rho \epsilon.$ This privacy amplification phenomenon is quantified precisely in Appendix~\ref{corr:fwllaplace}.

\subsection{Missingness amplifies Gaussian mechanism privacy}\label{sec:ampliffwl2}

We now turn our focus towards the Gaussian mechanism. 
For any query $\query: \missdatadomain^n \rightarrow \mathbb{R}^k$, 
the Gaussian mechanism (see, e.g.,~\cite{dwork2014algorithmic}) with scale
parameter $\sigma > 0$ takes as an input any dataset $\tilde{\data} \in \missdatadomain^n$ and outputs a vector in $\mathbb{R}^k$ as
\begin{align}
  \gauss(\datamask) = \query(\datamask) + Y, 
\end{align}
where $Y$ is a Gaussian random variable $\mathcal{N}(0, \sigma^2I_k)$. For any $\epsilon \in (0, 1]$ and $\delta \in (0,1]$, the Gaussian mechanism can be shown to satisfy $(\epsilon, \delta)$-DP as soon as $\sigma \geq c \querysens{2} / \epsilon$, and any fixed $c > \sqrt{2 \operatorname{ln}(1.25/\delta)}$. In particular, if $\query$ is FWL with respect to $\|.\|_2$, the natural choice for the scale parameter is $\sigma = C_2 c/\epsilon$, where $C_2$ is defined as per~\eqref{eq_definition_upper_bound_sensitivity}. For this mechanism, we can show the following practical amplification result.

\begin{restatable}{theorem}{GaussianAmplif}\label{thm:gaussamplif}
   Let $\datadomain \subset [-1,1]^d$, and let $\epsilon \in (0,1]$ and $\delta \in (0,1]$. Let $\mathcal{D}$ be a MAR missing data process verifying \Cref{ass:proportion_observed_entries}. Let $\query$ be a FWL query with respect to $\|.\|_2$, and $\gauss$ be the  Gaussian mechanism, associated to $\query$ with scale parameter $\sigma = c \, C_2 / \epsilon$. 
  Then the algorithm $\guassmiss$ defined as per~(\ref{def:A_D}) with $\gauss$ and $\datamiss$ is $(\epsilon', \delta')$-DP with
%
 $$ \epsilon' = \ln\!\big(1+p_{\ast} (e^{\tilde{\epsilon}}-1)\big), \quad \tilde{\epsilon} = (\tilde{C}_2/C_2)\epsilon, \quad \text{and} \quad \delta'=\qstar\delta.$$
 \end{restatable}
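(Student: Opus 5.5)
The plan is to reduce the statement to the MAR amplification result of Section~\ref{sec:amplif}, namely \eqref{eq:ampMAR} (Corollary~\ref{thm:amplif}). The difference is that this result will not be applied to $\gauss$ with its nominal budget $(\epsilon,\delta)$. Instead, I will apply it to $\gauss$ restricted to the incomplete datasets that can actually arise under $\datamiss$, where the budget is the smaller $(\tilde\epsilon,\delta)$.

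\textbf{Step 1: a reduced budget on reachable inputs.} By \Cref{lemma:sensmiss}, for any neighboring pair $\tilde\data(\mask)\simeq\tilde\data'(\mask')$ with $\mask\in\mathrm{supp}(\datamiss(\data))$ and $\mask'\in\mathrm{supp}(\datamiss(\data'))$, we have $\|\query(\tilde\data(\mask))-\query(\tilde\data'(\mask'))\|_2\le \tilde C_2$. The noise scale can be rewritten as $\sigma = cC_2/\epsilon = c\,\tilde C_2/\tilde\epsilon$ with $\tilde\epsilon=(\tilde C_2/C_2)\epsilon\le\epsilon\le 1$. The standard Gaussian mechanism analysis then applies, since $\tilde\epsilon\in(0,1]$ and $c>\sqrt{2\ln(1.25/\delta)}$ is unchanged. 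It gives, for every such reachable pair and every measurable $S$,
\[
\Prob{\gauss(\tilde\data(\mask))\in S}\le e^{\tilde\epsilon}\,\Prob{\gauss(\tilde\data'(\mask'))\in S}+\delta .
\]
The edge case $\tilde C_2=0$ is trivial, because then the query is constant on reachable inputs.

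\textbf{Step 2: rerun the MAR amplification argument.} I would retrace the proof of Theorem~\ref{thm:MNAR}/Corollary~\ref{thm:amplif} in the MAR case, with $(\epsilon,\delta)$ replaced by $(\tilde\epsilon,\delta)$.
\begin{itemize}
\item Write $\Miss(\data)$ as the mixture over masks $\mask$ of $\gauss(\tilde\data(\mask))$, weighted by $\Prob{\datamiss(\data)=\mask}$. Split this mixture according to whether $\mask\in\Hstar(\data,\data')$.
\item On the complement of $\Hstar$, the differing row is fully masked, so $\tilde\data(\mask)=\tilde\data'(\mask)$. MAR makes the weights of the two datasets coincide on these masks, because all observed coordinates agree.
\item On $\Hstar$, couple masks mask-by-mask (the mask-specific coupling of Appendix~\ref{app:proof}) so that the paired incomplete datasets are neighbors. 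Both are in the supports, hence reachable, so the DP inequality from Step~1 applies.
\item Lemma~\ref{lemma:pstar2} gives $p_{\data\mid\data'}=p_{\data'\mid\data}=\qstar$. The discrepancy term vanishes and the standard advanced-joint-convexity computation (taking $\lambda=0$) yields
\[
\Divmiss{e^{\epsilon'}}\le \qstar\delta,\qquad \epsilon'=\ln\bigl(1+\qstar(e^{\tilde\epsilon}-1)\bigr).
\]
\end{itemize}
Taking the supremum over $\data\simeq\data'$ and invoking \Cref{thm:dp_barthe} concludes.

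\textbf{Main obstacle.} The delicate point is that the earlier proof assumes $\algo$ is DP on all of $\missdatadomain^n$, whereas Step~1 only gives DP on reachable pairs. I must check that every pair compared in the coupling on $\Hstar$ involves masks in the supports of $\datamiss(\data)$ and $\datamiss(\data')$ respectively. Masks outside the supports carry zero weight, so they can be discarded before coupling. The argument therefore hinges on the coupling preserving positive-probability masks, and I expect this to follow from the MAR structure, since the differing row's mask has the same law under both datasets once the observed parts are conditioned on.
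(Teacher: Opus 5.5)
Your proposal is correct and follows essentially the same route as the paper: a fixed-mask $(\tilde\epsilon,\delta)$ guarantee on reachable neighboring pairs via \Cref{lemma:sensmiss} and \Cref{lemma:gauss}, then the MAR mixture split with weight $\qstar$, advanced joint convexity, and the couplings $\pi_1,\pi_2$ of Appendix~\ref{app:proof}. The support concern you flag resolves automatically, and not through MAR: any coupling of $\mu$ and $\mu'$ is supported on $\mathrm{supp}(\mu)\times\mathrm{supp}(\mu')$, whereas under MAR the law of the differing row's mask generally does differ between $\data$ and $\data'$, with only the fully-masked probability coinciding.
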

\Cref{thm:gaussamplif} establishes privacy amplification for MAR missing mechanisms yielding the same comments as  \Cref{thm:fwllaplace}. The proof of \Cref{thm:gaussamplif} can be found in \Cref{app:gaussamplif} and is based on similar arguments as that of \Cref{thm:fwllaplace}. Again, privacy amplification occurs for different values of $p_{\ast}$, including $p_{\ast}=1$. Contrary to \Cref{thm:fwllaplace}, privacy amplification may occur for both $\epsilon$ and $\delta$. \Cref{thm:fwllaplace} and \Cref{thm:gaussamplif} show that privacy amplification occurs for some MAR missing data processes and the most standard differentially private algorithms. 

\vspace{-1ex}
\section{Limitations \& future work}\label{sec:discussion}
\vspace{-1ex}
Firstly, our analysis assumes that the missingness process acts independently across samples. This assumption is standard in the missing-data literature and is naturally aligned with the usual i.i.d. modeling of datasets. It is also technically central to our proofs: independence allows us to separate the mask of the differing record from the residual missingness pattern on the remaining records. Without this structure, the joint mask distribution could introduce dependencies across individuals, and these dependencies may themselves reveal information about the dataset. In such settings, missingness may no longer amplify privacy and could even lead to privacy degradation. To illustrate this phenomenon, we provide in Appendix~\ref{app:indepmask} a simple construction where correlations in the missingness patterns across samples induce additional leakage. An interesting direction for future work is to precisely understand the phenomena at work with non i.i.d. data that can affect privacy guarantees.

Secondly, our results emphasize the importance of understanding the missingness process from a privacy perspective. While identifying whether data is MCAR, MAR, or MNAR is classically challenging, this distinction is not merely of statistical interest in our setting: it directly impacts the form and strength of the resulting privacy guarantees. This connection provides compelling motivation for developing statistical procedures to assess or approximate the underlying missingness structure. 

\bibliographystyle{plainnat}
\bibliography{example_paper}


\newpage

\appendix

\section{Tools \& additional definitions}
In this section, we introduce several technical tools and additional definitions that are used in the proofs of the results presented in the main paper.

\subsection{substitute one record relation}\label{app:dist}

For all datasets $\datamask, \datamask' \in \missdatadomain^n$, we define the
\emph{substitute}-one distance by:
\begin{equation*}\label{eq:ds}
 d(\datamask,\datamask')
 \;=\;
 \min_{\pi \in \Gamma_n}\; \sum_{i=1}^n \mathds{1}\!\left\{\, \samplemask_i \neq \samplemask'_{\pi(i)} \right\},
\end{equation*}
where $\Gamma_n$ is the set of permutations on $[n]$. 
Two datasets $\datamask$ and $\datamask'$ are neighboring with respect to $\simeq$
if they differ in at most one element:
\begin{equation*}
 \datamask \simeq \datamask'
 \;\Leftrightarrow\;
 d(\datamask,\datamask') \le 1.
\end{equation*}

\subsection{$\alpha$-divergence}
As defined in \cite{chernoff1952measure}, the $\alpha$-divergence has several useful properties.
\begin{mdframed}
\begin{restatable}{proposition}{PropConv}\label{prop:conv}
  For any random variables $X_0, X_1, X_2$ taking values in $(E, \mathcal{E})$ and any $\alpha>0, \beta \in [0,1]$, we have
  \begin{equation*}
  \begin{aligned}
    & D_{\alpha}(X_0 || (1-\beta)X_1 + \beta X_2) \\&\leq (1-\beta) D_{\alpha}(X_0 || X_1) + \beta D_{\alpha}(X_0 || X_2)
  \end{aligned}
  \end{equation*}
\end{restatable}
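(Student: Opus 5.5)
We prove the statement directly from the definition of $D_\alpha$ as a supremum over measurable sets. Write $P_0, P_1, P_2$ for the laws of $X_0, X_1, X_2$. The mixture $(1-\beta)X_1 + \beta X_2$ is understood as the random variable whose law is $(1-\beta)P_1 + \beta P_2$. We use
\begin{align*}
D_\alpha(X_0 \| Y) = \sup_{S\in\mathcal{E}} \bigl( P_0(S) - \alpha P_Y(S) \bigr),
\end{align*}
as in \Cref{thm:dp_barthe} with $\alpha = e^\epsilon$.

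The key step is to split $P_0(S)$ as a convex combination of itself. For every $S \in \mathcal{E}$,
\begin{align*}
P_0(S) - \alpha\bigl((1-\beta)P_1(S) + \beta P_2(S)\bigr)
= (1-\beta)\bigl(P_0(S) - \alpha P_1(S)\bigr) + \beta\bigl(P_0(S) - \alpha P_2(S)\bigr).
\end{align*}
Each bracket on the right is at most the corresponding supremum, namely $D_\alpha(X_0\|X_1)$ and $D_\alpha(X_0\|X_2)$. The weights $1-\beta$ and $\beta$ are nonnegative, so the inequality is preserved. Taking the supremum over $S$ on the left-hand side then gives the claim. This is simply subadditivity of the supremum: a sup of a sum is at most the sum of the sups.

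The only point needing care is interpreting the mixture as a mixture of laws rather than a sum of random variables. The linearity of $S \mapsto P(S)$ in the law then makes the argument immediate, so we do not expect any real obstacle. If one prefers the density form $\int (p_0 - \alpha q)_+$, the same result follows from convexity of $q \mapsto (p_0 - \alpha q)_+$. This is the optimal-$S$ representation of the same supremum, and it gives an identical one-line argument.
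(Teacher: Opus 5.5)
Your proof is correct and is essentially the paper's argument. The paper also splits $\Prob{X_0\in S}$ as $(1-\beta)\Prob{X_0\in S}+\beta\Prob{X_0\in S}$, obtains the same pointwise identity for each $S\in\mathcal{E}$, and concludes by taking the supremum over $S$; your density-form remark is a valid equivalent variant but is not needed.
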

\end{mdframed}
\begin{proof}
    Let us fix $S \in \mathcal{E}$.
  \begin{align*}
   &\Prob{X_0 \in S} - \alpha \big( (1-\beta)\Prob{X_1 \in S} + \beta\Prob{X_2 \in S} \big) \
   \\& =\Prob{X_0 \in S} - \beta \Prob{X_0 \in S} + \beta \Prob{X_0 \in S} - \alpha(1-\beta)\Prob{X_1 \in S} - \alpha\beta\Prob{X_2 \in S} \\ &= (1-\beta)\big(\Prob{X_0 \in S} - \alpha \Prob{X_1 \in S} \big) + \beta \big(\Prob{X_0 \in S} - \alpha \Prob{X_2 \in S} \big)
  \end{align*}

Since this holds for any $S \in \mathcal{E}$ we take the supremum on both sides of the equality and we conclude by using the definition of $D_{\alpha}$.
\end{proof}

\begin{mdframed}
\begin{proposition}{\citep{balle2018privacy}}\label{prop:advancedconv}
Let $X, X'$ be two random variables taking values in $(E, \mathcal{E})$. Assume there exist random variables $X_0, X_1,$ and $X_2 $ taking values in $(E, \mathcal{E})$ such that for all $S \in \mathcal{E}$, we have 
\begin{equation*}
  \Prob{X \in S} =(1-\eta)\Prob{X_0 \in S} + \eta \Prob{X_1 \in S} ~~\text{ and }~~ \Prob{X' \in S} =(1-\eta)\Prob{X_0 \in S} + \eta \Prob{X_2 \in S}, 
\end{equation*}
with $\eta \in [0,1]$.
For all $\alpha \geq 1$, let $\alpha' = 1 + \eta(\alpha -1)$ and $\beta = \frac{\alpha'}{\alpha}$. Then,
\begin{equation*}
   D_{\alpha'}(X|| X') = \eta D_{\alpha}(X_1 || (1-\beta)X_0 + \beta X_2).
\end{equation*}
\end{proposition}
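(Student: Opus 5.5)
The plan is to prove the identity pointwise for each measurable set $S \in \mathcal{E}$, and then take the supremum over $S$ on both sides, as in the proof of \Cref{prop:conv}. Throughout, $(1-\beta)X_0+\beta X_2$ denotes the mixture distribution, i.e.\ a random variable whose law puts mass $(1-\beta)\Prob{X_0\in S}+\beta\Prob{X_2\in S}$ on $S$.

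First, I check that this mixture is well defined, i.e.\ that $\beta\in[0,1]$. Since $\alpha\geq 1$ and $\eta\in[0,1]$, we have $1\leq\alpha'=1+\eta(\alpha-1)\leq\alpha$. Hence $\beta=\alpha'/\alpha\in(0,1]$. I also record the two algebraic identities that drive the proof:
\begin{align*}
\alpha\beta=\alpha', \qquad \alpha(1-\beta)=\alpha-\alpha'=(1-\eta)(\alpha-1).
\end{align*}

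Next, fix $S$ and write $P_\ell=\Prob{X_\ell\in S}$ for $\ell\in\{0,1,2\}$. Substituting the two mixture decompositions gives
\begin{align*}
\Prob{X\in S}-\alpha'\Prob{X'\in S}
=(1-\eta)(1-\alpha')P_0+\eta P_1-\alpha'\eta P_2 .
\end{align*}
Using $1-\alpha'=-\eta(\alpha-1)$, the first term equals $-\eta(1-\eta)(\alpha-1)P_0$, so a factor $\eta$ comes out of every term:
\begin{align*}
\Prob{X\in S}-\alpha'\Prob{X'\in S}=\eta\bigl[P_1-(1-\eta)(\alpha-1)P_0-\alpha'P_2\bigr].
\end{align*}
By the two identities above, the bracket equals
\begin{align*}
P_1-\alpha\bigl((1-\beta)P_0+\beta P_2\bigr),
\end{align*}
which is exactly the quantity inside the supremum defining $D_\alpha(X_1\,\|\,(1-\beta)X_0+\beta X_2)$.

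Finally, since the equality holds for every $S$ and $\eta\geq 0$ is a constant, taking $\sup_{S\in\mathcal{E}}$ on both sides gives $D_{\alpha'}(X\|X')=\eta\,D_\alpha(X_1\|(1-\beta)X_0+\beta X_2)$. There is no real obstacle here. The only points requiring care are the bookkeeping of the coefficient of $P_0$, namely the identity $\alpha(1-\beta)=(1-\eta)(\alpha-1)$, and the requirement that $\beta\in[0,1]$ so the mixture is a genuine distribution; this is precisely where $\alpha\geq1$ is used.
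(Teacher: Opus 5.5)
Your proof is correct. The paper gives no proof of its own and imports this result from \citet{balle2018privacy}; your argument is the standard direct verification used there. You fix $S$, factor out $\eta$ via $1-\alpha'=-\eta(\alpha-1)$, match coefficients using $\alpha\beta=\alpha'$ and $\alpha(1-\beta)=(1-\eta)(\alpha-1)$, and take the supremum over $S$, which is valid because $\eta\geq 0$ and $\beta\in(0,1]$ when $\alpha\geq 1$. This is also the same setwise pattern as the paper's proof of \Cref{prop:conv}.
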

\end{mdframed}

\begin{mdframed}
    \begin{proposition}\label{prop:alphatv}
        Let $X, Y$ and $Z$ three random variables taking values in $(E, \mathcal{E})$. For all $\alpha >0$, we have
        \begin{align}
            D_{\alpha}(X \|Y) \leq D_{\alpha}(X\|Z) + \alpha \mathrm{TV}(Z, Y).
        \end{align}
    \end{proposition}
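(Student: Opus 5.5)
We need to prove $D_\alpha(X\|Y)\le D_\alpha(X\|Z)+\alpha\,\mathrm{TV}(Z,Y)$, using the sup-over-sets definition $D_\alpha(X\|Y)=\sup_{S\in\mathcal{E}}\Prob{X\in S}-\alpha\Prob{Y\in S}$ from \Cref{thm:dp_barthe}. We take the total variation distance to be $\mathrm{TV}(Z,Y)=\sup_{S\in\mathcal{E}}|\Prob{Z\in S}-\Prob{Y\in S}|$.

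The plan is a pointwise-in-$S$ argument followed by taking suprema, exactly as in the proof of \Cref{prop:conv}. Fix an arbitrary measurable set $S\in\mathcal{E}$. Adding and subtracting $\alpha\Prob{Z\in S}$ gives
\begin{align*}
\Prob{X\in S}-\alpha\Prob{Y\in S}
&=\bigl(\Prob{X\in S}-\alpha\Prob{Z\in S}\bigr)+\alpha\bigl(\Prob{Z\in S}-\Prob{Y\in S}\bigr).
\end{align*}
We bound the two terms separately.

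The first term is at most $D_\alpha(X\|Z)$ by definition of the supremum. For the second term, since $\alpha>0$, we have $\alpha(\Prob{Z\in S}-\Prob{Y\in S})\le \alpha|\Prob{Z\in S}-\Prob{Y\in S}|\le\alpha\,\mathrm{TV}(Z,Y)$. Hence for every $S$,
\begin{equation*}
\Prob{X\in S}-\alpha\Prob{Y\in S}\le D_\alpha(X\|Z)+\alpha\,\mathrm{TV}(Z,Y).
\end{equation*}
The right-hand side does not depend on $S$, so taking the supremum over $S\in\mathcal{E}$ on the left yields the claim.

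There is no real obstacle here. The only point requiring care is that $\mathrm{TV}$ is defined as a supremum over sets. If it were instead defined as $\tfrac12\|P_Z-P_Y\|_1$, the two definitions coincide, since the supremum of $|P_Z(S)-P_Y(S)|$ over measurable $S$ equals half the $L^1$ distance of the densities with respect to a common dominating measure (attained at $S=\{p_Z>p_Y\}$). I would state this equivalence in one line so the bound holds under either convention.
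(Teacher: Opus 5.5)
Your proof is correct and follows the paper's argument: fix $S$, add and subtract $\alpha\Prob{Z\in S}$, bound the first piece by $D_\alpha(X\|Z)$ and the second by $\alpha\,\mathrm{TV}(Z,Y)$ (with $\mathrm{TV}$ defined, as in the paper, as a supremum over sets), then take the supremum over $S$. The only cosmetic difference is that you bound pointwise in $S$ before taking the supremum, whereas the paper takes the supremum of the sum and uses subadditivity of $\sup$.
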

\end{mdframed}
\begin{proof}
    Let us fix $S$ in $\mathcal{E}$.
    We have:
    \begin{align*}
        \Prob{X\in S} - \alpha \Prob{Y \in S} = \left(\Prob{X \in S} - \alpha \Prob{Z \in S}\right) + \alpha \left(\Prob{Z \in S} - \Prob{Y \in S}\right).
    \end{align*}
    By taking the supremum over all $S \in \mathcal{E}$, we obtain:
    \begin{align*}
        D_{\alpha}(X \| Y) &\leq D_{\alpha}(X \| Z) + \alpha \sup_{S \in \mathcal{E}}\left(\Prob{Z \in S}- \Prob{Y \in S} \right) \\
        &\leq  D_{\alpha}(X \| Z) + \alpha \mathrm{TV}(Z, Y),
    \end{align*}
    where $\mathrm{TV}(Z,Y) = \sup_{S \in \mathcal{E}}|\Prob{Z \in S}- \Prob{Y \in S}|$
\end{proof}

\subsection{Couplings}

Couplings provide a constructive tool that allows us to bound the divergence between mixtures by aligning the randomness in their components.

\begin{definition}\label{def:coupling}
  Let $\nu$ and $\nu'$ be probability measures on a measurable space $(E, \mathcal{E})$. A coupling $\pi$ of $\nu$ and $\nu'$ is a probability measure on $(E\times E, \mathcal{E} \otimes \mathcal{E})$ such that the marginals of $\pi$ are $\nu$ and $\nu'$ respectively, i.e.
  \begin{equation}
  \begin{aligned}
    \forall S \in \mathcal{E}, \quad 
    & \pi(S\times E) = \nu(S), \\ \text{and} \quad &\pi(E \times S) = \nu'(S).
  \end{aligned}
  \end{equation}
  We denote by $\mathcal{C}(\nu, \nu')$ the set of all couplings of $\nu$ and $\nu'$ on $(E, \mathcal{E})$.
\end{definition}

\subsection{Residual missing data process}

We introduce the following notation for later use in several proofs.
Since the missing data process is independent per sample, any data mask $\mask$ in $\{0,1\}^{n\times d}$, can be rewritten, up to a permutation as
\begin{align*}
  \mask = (\masksample_{i_{\ast}}, \res),
\end{align*}
where $\masksample_{\istar}$ denotes the mask corresponding to the differing sample $\istar$ and
\begin{align}
  \res = (\masksample_i)_{1 \leq i\neq i_{\ast} \leq n} \in \{0,1\}^{(n-1) \times d} \label{eq:defr}
\end{align}
collects the mask components for all remaining samples. We also introduce the residual dataset associated to $\data$ in $\datadomain^n$ as
\begin{align*}
  \data_{\text{res}} = (\sample_i)_{1 \leq i\neq i_{\ast} \leq n} \in \datadomain^{n-1}
\end{align*}
We refer to the induced mechanism denoted $\mathcal{R}$ as the residual missing data process of $\datamiss$.
\vspace{2ex}
\begin{mdframed}
\begin{definition}\label{def:res}
  Let $\datamiss$ be the missing data process.
  The residual missing data process $\mathcal{R}$ for $\datamiss$ takes as input a dataset in  $\datadomain^{n-1}$ and outputs a random vector taking values in $\{0,1\}^{(n-1) \times d}$. Formally, for all $\big(\masksample_1, \ldots, \masksample_{\istar}, \ldots, \masksample_n \big)^\top \hspace{-2pt}\in \{0,1\}^{n \times d}$, for any dataset $\data$ in $\datadomain^n$, we have
  \begin{equation*}
  \Prob{\mathcal{R}(\data_{\text{res}}) = \res} = \prod_{i\neq i_{\ast}}^{n}\Prob{\featuremiss(\sample_{i}) = \masksample_{i}},
\end{equation*}
where $\res$ is defined in \cref{eq:defr}.
\end{definition}
\end{mdframed}

\subsection{MAR and $\Hstar^c$}

\vspace{3ex}
\begin{mdframed}
  \begin{restatable}{lemma}{LemmaMAR}\label{lemma:pstar}
  Let $\Hstar(\cdot,\cdot)$ be as defined in \cref{eq:defhc}.
  If the missing data process $\datamiss$ is MAR, for any neighboring datasets $\data \simeq \data'$ in $\datadomain^n$ and for all $\mask$ in $\Hstar^c(\data, \data')$ such that the differing sample is completely masked, we have
  \begin{align*}
    \Prob{\datamiss(\data) = \mask} = \Prob{\datamiss(\data') = \mask}.
  \end{align*}
\end{restatable}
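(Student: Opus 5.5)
The plan is to factor the mask probability using the product structure \eqref{def:missingmech} and then compare the two datasets factor by factor. Fix neighboring datasets $\data \simeq \data'$ and let $\istar$ be the index on which they differ. If no such index exists, then $\data = \data'$ and there is nothing to prove. Otherwise $\sample_i = \sample'_i$ for every $i \neq \istar$. Writing $\mask = (\masksample_{\istar}, \res)$ as in \eqref{eq:defr}, independence across samples and \Cref{def:res} give
\begin{align*}
\Prob{\datamiss(\data) = \mask} = \Prob{\featuremiss(\sample_{\istar}) = \masksample_{\istar}}\,\Prob{\mathcal{R}(\data_{\text{res}}) = \res}.
\end{align*}
The same factorization holds for $\data'$. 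Since $\data_{\text{res}} = \data'_{\text{res}}$, the residual factors coincide, so it only remains to compare the factors attached to the differing record.

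For the differing record we use the hypothesis $\mask \in \Hstar^c(\data,\data')$. By the definition \eqref{eq:defhc}, this means $\masksample_{\istar} = \mathbf{1}_{\mathbb{R}^d}$, so $\operatorname{obs}(\masksample_{\istar}) = \emptyset$. The MAR condition asks that $\sample^{\operatorname{obs}(\masksample)} = \sample'^{\operatorname{obs}(\masksample)}$, and here both sides are the empty subvector, so the condition holds vacuously for the pair $\sample_{\istar}, \sample'_{\istar}$. The MAR definition then gives $\Prob{\featuremiss(\sample_{\istar}) = \mathbf{1}_{\mathbb{R}^d}} = \Prob{\featuremiss(\sample'_{\istar}) = \mathbf{1}_{\mathbb{R}^d}}$. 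Multiplying this equality by the common residual factor yields the claimed identity.

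The only real care point is bookkeeping, not mathematics. We must handle the alignment of neighbors, which the paper assumes without loss of generality to be index-wise, and we must interpret the empty-observation case of MAR correctly as a vacuous agreement. No step is expected to be a genuine obstacle.
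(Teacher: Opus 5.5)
Your proposal is correct and follows the paper's proof essentially step for step. You factor $\Prob{\datamiss(\data)=\mask}$ via \eqref{def:missingmech}, match the factors for $i\neq\istar$ using $\sample_i=\sample'_i$, and match the $\istar$-th factor by applying MAR with the vacuously satisfied condition $\operatorname{obs}(\mathbf{1}_{\mathbb{R}^d})=\emptyset$; writing the residual product as $\mathcal{R}(\data_{\text{res}})$ and disposing of the case $\data=\data'$ are only cosmetic differences.
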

\end{mdframed}

\begin{proof}
  Let us fix two neighboring datasets $\data\sim\data'$ in $\datadomain^n$, let $\istar$ be the index of the differing row between $\data$ and $\data'$ and $\mask$ in $\Hstar^c(\data, \data')$.

By \cref{def:missingmech}, we have
\begin{align*}
  \Prob{\datamiss(\data) = \mask} = \prod_{i=1}^n \Prob{\featuremiss\big(\sample_{i}\big) = \masksample_i}
  = \Prob{\featuremiss(\sample_{\istar}) = \masksample_{i_\ast}}. \Big(\prod_{ 1 \leq i\neq i_{\ast} \leq n} \Prob{\featuremiss(\sample_{i}) = \masksample_i} \Big).
\end{align*}

Note that, by definition of $\Hstar(\data, \data')$, we have $\masksample_{\istar} = \mathbf{1}_{\mathbb{R}^d}$ i.e., the observed parts of the $\sample_{\istar}$ and $\sample'_{\istar}$ are equal and both are empty, in particular $\sample_i^{\operatorname{obs}(\masksample_i)} = \sample'^{\,\operatorname{obs}(\masksample_i)}_i$ . Since $\datamiss$ is MAR, this implies that
\begin{align*}
  \Prob{\featuremiss(\sample_{\istar}) = \masksample_{\istar}} = \Prob{\featuremiss(\sample'_{\istar}) = \masksample_{\istar}}.
\end{align*}

In addition, for $i \neq \istar$ in $[n]$, since $\data\simeq\data'$, we have $\sample_i = \sample'_i$. Therefore for all $i \in [n]$ such that $i \neq \istar$, we have
\begin{align*}
  \Prob{\featuremiss(\sample_i) = \masksample_i} = \Prob{\featuremiss(\sample'_i) = \masksample_i}.
\end{align*}

Finally, reinjecting these observations in the first equality, we obtain
\begin{align*}
  \Prob{\datamiss(\data)= \mask} = \Prob{\datamiss(\data')= \mask}.
\end{align*}
\end{proof}

\section{Proofs of Section~\ref{sec:amplif}}

In this section, we provide sketches of the proofs for our main results, which will be incorporated into the main paper upon acceptance.

\subsection{Proof of Theorem~\ref{thm:MNAR}}\label{app:proof}
\vspace{3ex}
\begin{mdframed}
 \MNARThm*
\end{mdframed}

\begin{tcolorbox}[boxrule=0pt,standard jigsaw, opacityback=0, frame hidden,sharp corners,enhanced,breakable,borderline west={1pt}{0pt}{gray},left=6pt,right=0pt,top=0pt,bottom=0pt,boxsep=1pt]
\begin{proof}[Sketch of proof]
Let $\mathcal{E}$ be the set of all measurable output sets for $\Miss$. Then by construction of $\Miss$ and by the law of total probability, for any finite dataset $\data \in \datadomain^n$ and any event $S \in \mathcal{E}$, the probability distribution of the random variable $\Miss(\data)$ is a mixture. More specifically, the probability $\Prob{\Miss(\data) \in S}$ can be decomposed as
\begin{align} 
\Prob{\Miss(\data) \in S} = 
\sum_{\mask \in \{0,1\}^{\,n\times d}}
  \hspace{-10pt} \Prob{\datamiss(\data) = \mask}\,
   \Prob{\algo\big(\tilde{\data}(\mask)\big) \in S}. \label{eq:mixture}
\end{align}

We provide here the most important points to observe. Fix two neighboring datasets $\data \simeq \data'$.
  
 \textbf{A)} We first decompose the distribution of $\Miss(\data)$ (resp. $\Miss(\data')$) onto the events $\datamiss(\data) \in \Hstar(\data,\data')$ and $\datamiss(\data) \in \Hstar(\data,\data')^c$ (resp. $\datamiss(\data) \in \Hstar(\data,\data')$ and $\datamiss(\data) \in \Hstar(\data,\data')^c$). We show that on $\Hstar^c(\data,\data')$, the conditional laws coincide. 
 This allows us to decompose each distribution between a common overlapping component and a dataset-specific non-overlapping component.

 \textbf{B)} Note that, since $\datamiss$ is MNAR, the latter decompositions involve different mixture weights that depends on $\data$ and $\data'$. We introduce an auxiliary distribution that aligns the weights of $\Miss(\data)$ with the components of $\Miss(\data')$. Using \Cref{prop:alphatv}, we isolate two effects: a divergence term between aligned mixtures and a residual term capturing the mismatch in weights $e^{\epsilon'(\data \vert \data')}|p_{\data \mid \data'}-p_{\data' \mid \data}|$. 
  
  \textbf{C)} We decompose the previous divergence using its convexity properties (see \Cref{prop:advancedconv}). We simply need to bound the divergence between the non-overlapping parts of the two distributions and the specific part of either distribution and their shared overlap. 
  
  \textbf{D)} To bound the latter quantities, we design explicit couplings of the missing data process. These couplings are designed so that, on their support, the resulting incomplete datasets remain neighboring. This allows us to directly invoke the $(\epsilon,\delta)$-DP guarantee of $\algo$, yielding uniform bounds on all component-wise divergences. We prove
\begin{align}\label{eq:art1MNAR}
\Divmiss{e^{\epsilon'(\data\vert\data')}} \leq p_{\data \mid \data'} \delta + e^{\epsilon'(\data\vert\data')}|p_{\data \mid \data'}-p_{\data' \mid \data}|.
  \end{align} 

\textbf{E)} We design another coupling between the full mask distributions of $\datamiss(\data)$ and $\datamiss(\data')$ that operates at the level of the missingness process: it synchronizes the residual mask across all non-differing rows while sampling independently the masks of the differing row according to their dataset-dependent laws. We thus obtain
\begin{align}\label{eq:art2MNAR}
\Divmiss{e^{\epsilon}} \leq \delta.
\end{align}
\textbf{F)} Finally, we interpolate between the bounds \eqref{eq:art1MNAR} and \eqref{eq:art2MNAR} using the sub-convexity property of the $\alpha$-divergence.
\end{proof}
\end{tcolorbox}

\begin{proof} 
Let us fix $\epsilon > 0$, $\delta \in [0,1]$, a pair of neighboring datasets $\data\simeq\data'$ in $\datadomain^n$ and let $\istar$ be the index of their differing row.

To prove the Theorem, we proceed in three steps. 

\textbf{I)} We first prove that
\begin{align}\label{eq:wcast_dp}
\Divmiss{e^{\epsilon'(\data \vert \data')}} \leq p_{\data \mid \data'}\delta
+
e^{\epsilon'(\data \vert \data')}|p_{\data \mid \data'} - p_{\data' \mid \data}|
\end{align}
\textbf{II)} Then, we establish that
\begin{align}\label{eq:mnar_dp}
\Divmiss{e^{\epsilon}} \leq \delta,
\end{align}

and \textbf{III)} we conclude by introducing both $\epsilon(\lambda, \data \mid  \data') = \log(\lambda e^{\epsilon} + (1-\lambda)e^{\epsilon'(\data \vert \data')})$ for any $\lambda \in [0,1]$ and $\delta_{\data\vert\data'} = p_{\data \mid \data'}\delta
+
e^{\epsilon'(\data \vert \data')}|p_{\data \mid \data'} - p_{\data' \vert \data}|$ then invoke the properties of the $\alpha$-divergence. 

Let $\mathcal{E}$ be the set of all measurable output sets for $\algo_{\datamiss}$.

\textbf{I) Proof of \Cref{eq:wcast_dp}}

Let us consider $\data \simeq \data'$ in $\datadomain^n$ and fix $S$ an arbitrary event in $\mathcal{E}$. We recall that
\begin{align*}
\Prob{\Miss(\data) \in S} = &\sum_{\mask \in \{0,1\}^{n\times d}}
    \MargZ, \\
    \Prob{\Miss(\data') \in S} = &\sum_{\mask \in \{0,1\}^{n \times d}}
    \MargZz.
\end{align*}

Besides, by definition  $\Hstar(\data, \data') =\left\{\mask \in \{0,1 \}^{n \times d}\mid\masksample_{i^{\ast}} \neq 1_{\mathbb{R}^d}\right\}$. For conciseness, $\Hstar(\data, \data')$ will be denoted by $\Hstar$ in the rest of the proof. 
We have
\begin{align}
\Prob{\Miss(\data) \in S}
&= \sum_{\mask \in \Hstar^c}\MargZ + \sum_{\mask \in \Hstar}
    \MargZ \label{eq1mnar}, \\
\Prob{\Miss(\data') \in S}
&= \sum_{\mask \in \Hstar^c}
    \MargZz +\sum_{\mask \in \Hstar}
    \MargZz \label{eq2mnar}.
\end{align}


We now decompose the previous probability masses into the following four parts. 

\begin{itemize}
    \item[i)] We denote by $\mu^{(0)}_{\data}$ the probability measure defined for all $\mask \in \{0,1\}^{n\times d}$ by
     \begin{align*}
    \mu^{(0)}_{\data}(\mask) = \frac{1}{1-p_{\data \mid \data'}} \Ind{\mask\in \Hstar^c}\Prob{\datamiss(\data) = \mask}.
    \end{align*}
    Since $1 - p_{\data \mid \data'} = \Prob{\datamiss(\data) \in \Hstar^c}$, $\mu^{(0)}_{\data}$ is indeed a probability distribution.
    The corresponding proportion of probability mass in $\algo_{\datamiss}(\data)$ is defined by
    \begin{align}
        w^{(0)}_{\data}(S) = \sum_{\mask \in \{0,1\}^{n \times d}} \mu^{0}_{\data}(\mask) \Prob{\algo\left(\tilde{\data}(\mask)\right) \in S}. \label{eq:w0}
    \end{align}
    \item[ii)] Similarly we introduce the probability measure $\mu^{(0)}_{\data'}$ defined for all $\mask \in \{0,1\}^{n\times d}$ by
    \begin{align*}
    \mu^{(0)}_{\data'}(\mask) = \frac{1}{1-p_{\data' \mid \data}} \Ind{\mask\in \Hstar^c}\Prob{\datamiss(\data') = \mask},
    \end{align*}
    and its corresponding proportion of probability mass in $\algo_{\datamiss}(\data')$
    \begin{align*}
        w^{(0)}_{\data'}(S) = \sum_{\mask \in \{0,1\}^{n \times d}} \mu^{0}_{\data'}(\mask) \Prob{\algo\left(\tilde{\data}'(\mask)\right) \in S}. 
    \end{align*} 
    \item[iii)] Then, we define the probability measure defined for all $\mask \in \{0,1\}^{n\times d}$ by
    \begin{align*}
    \mu^{(1)}_{\data}(\mask) = \frac{1}{p_{\data \mid \data'}} \Ind{\mask\in \Hstar}\Prob{\datamiss(\data) = \mask}. 
  \end{align*}
The corresponding proportion of probability mass in $\algo_{\datamiss}(\data)$ is defined by
    \begin{align*}
    w^{(1)}_{\data}(S) = \sum_{\mask \in \{0,1\}^{n \times d}} \mu^{(1)}_{\data}(\mask) \Prob{\algo\left(\tilde{\data}(\mask)\right) \in S}.
    \end{align*}
    \item[iv)] Finally, we define the probability measure defined for all $\mask \in \{0,1\}^{n\times d}$ by
    \begin{align*}
    \mu^{(1)}_{\data'}(\mask) = \frac{1}{p_{\data' \mid \data}} \Ind{\mask\in \Hstar}\Prob{\datamiss(\data') = \mask}. 
  \end{align*}
  The corresponding proportion of probability mass in $\algo_{\datamiss}(\data')$ is defined by
    \begin{align*}
    w^{(1)}_{\data'}(S) = \sum_{\mask \in \{0,1\}^{n \times d}} \mu^{(1)}_{\data'}(\mask) \Prob{\algo\left(\tilde{\data'}(\mask)\right) \in S}.
    \end{align*}
\end{itemize}

With the above notations, we can rewrite \Cref{eq1mnar,eq2mnar} as 
\begin{align*}
    & \Prob{\Miss(\data) \in S} = (1-p_{\data \mid \data'})w^{(0)}_{\data}(S)  + p_{\data \mid \data'}w^{(1)}_{\data}(S), \\
     & \Prob{\Miss(\data') \in S} = (1-p_{\data' \mid \data})w^{(0)}_{\data'}(S)  + p_{\data' \mid \data}w^{(1)}_{\data'}(S).
\end{align*}

\medskip
\noindent\textbf{I.1) Equality of the conditional laws on $\Hstar^c$} 

Let $\mask\in \Hstar^c$. We use the decomposition with the residual mask $\res \in\{0,1\}^{(n-1)\times d}$, the residual dataset $\data_\text{res} \in \datadomain^{n-1}$ and the residual missing data process $\mathcal{R}$ introduced in Definition~\ref{def:res}
\begin{equation*}
  \Prob{\mathcal{R}(\data_{\text{res}}) = \res} = \prod_{i\neq i_{\ast}}^{n}\Prob{\featuremiss(\sample_{i}) = \masksample_{i}}.
\end{equation*}

Since, on $\Hstar^c$, the differing sample $i^*$ is fully masked, we can write
\begin{align*}
\mask = (\mathbf{1}_{\mathbb R^d},\res).
\end{align*}

Note also that the missingness process acts independently across samples, therefore
\begin{align*}
\Prob{\datamiss(\data)=\mask}
&=
\Prob{\featuremiss(\data_{i^*})=\mathbf{1}_{\mathbb R^d}}
\Prob{\mathcal{R}(\data_{\mathrm{res}})=\res}, \\
\Prob{\datamiss(\data')=\mask}
&=
\Prob{\featuremiss(\data'_{i^*})=\mathbf{1}_{\mathbb R^d}}
\Prob{\mathcal{R}(\data'_{\mathrm{res}})=\res}.
\end{align*}
By definition of $p_{\data \mid \data'}$ and $p_{\data' \mid \data}$, we have
\begin{align*}
\Prob{\mathcal{F}(\data_{i^*})=\mathbf{1}_{\mathbb R^d}} = 1-p_{\data \mid \data'},
\qquad
\Prob{\mathcal{F}(\data'_{i^*})=\mathbf{1}_{\mathbb R^d}} = 1-p_{\data' \mid \data}.
\end{align*}
Besides, since $\data$ and $\data'$ only differ on $i^*$, we have $
\data_{\mathrm{res}}=\data'_{\mathrm{res}}$, hence
\begin{align*}
\Prob{\mathcal{R}(\data_{\mathrm{res}})=r}
=
\Prob{\mathcal{R}(\data'_{\mathrm{res}})=r}.
\end{align*}

Therefore,
\begin{align*}
\mu^{(0)}_{\data}(\mask)=
\frac{1}{1-p_{\data \mid \data'}}\Prob{\datamiss(\data)=\mask} =
\frac{1}{1-p_{\data \mid \data'}}
(1-p_{\data \mid \data'})\Prob{\mathcal{R}(\data_{\mathrm{res}})=r} =
\Prob{\mathcal{R}(\data_{\mathrm{res}})=r},
\end{align*}
and similarly
\begin{align*}
\mu^{(0)}_{\data'}(\mask)=
\frac{1}{1-p_{\data' \mid \data}}\Prob{\datamiss(\data')=\mask} =
\frac{1}{1-p_{\data' \mid \data}}
(1-p_{\data' \mid \data})\Prob{\mathcal{R}(\data'_{\mathrm{res}})=r} =
\Prob{\mathcal{R}(\data'_{\mathrm{res}})=r}.
\end{align*}
Since $\data_{\mathrm{res}}=\data'_{\mathrm{res}}$, we conclude that for all $\mask\in\Hstar^c$,
\begin{align*}
\mu^{(0)}_{\data}(\mask)=\mu^{(0)}_{\data'}(\mask).
\end{align*}
As for all $\mask \in H_*$ we have $\mu^{(0)}_{\data}(\mask)=\mu^{(0)}_{\data'}(\mask)=0$, this also means that 
\begin{align*}
    \mu^{(0)}_{\data}=\mu^{(0)}_{\data'}.
\end{align*}

Finally, for every $\mask\in\Hstar^c$, the differing sample is fully masked, so
\begin{align*}
\tilde{\data}(\mask)=\tilde{\data}'(\mask).
\end{align*}
Using the equality of the conditional laws, we get for every measurable event $S\in\mathcal E$,
\begin{align*}
w^{(0)}_{\data}(S)
&=
\sum_{\mask}\mu^{(0)}_{\data}(\mask)\Prob{\algo(\tilde{\data}(\mask))\in S} \\
&=
\sum_{\mask}\mu^{(0)}_{\data'}(\mask)\Prob{\algo(\tilde{\data}'(\mask))\in S}
\\&=
w^{(0)}_{\data'}(S).
\end{align*}

Therefore, we introduce $w^{(0)}$ as the common mass between $\algo_{\data}(\data)$ and $\algo_{\data}(\data')$ defined as
\begin{align*}
    w^{(0)} \eqdef w^{(0)}_{\data}=w^{(0)}_{\data'}.
\end{align*}

\medskip
\noindent\textbf{I.2) Aligning the mixture weights via an auxiliary distribution}

To align the mixture weights between $\Miss(\data)$ and $\Miss(\data')$, we introduce an auxiliary distribution $P$. We define for all $S \in \mathcal{E}$, 
\begin{align*}
P(S)
= (1-p_{\data \mid \data'})w^{(0)}(S) + p_{\data \mid \data'}w^{(1)}_{\data'}(S).
\end{align*}

By Proposition~\ref{prop:alphatv}, with $\epsilon'(\data \vert \data')=\log(1+p_{\data \mid \data'}(e^\epsilon-1))$, we have
\begin{align}
D_{e^{\epsilon'(\data \vert \data')}}\left(\mathcal{A_{\datamiss}(\data)}\|\mathcal{A_{\datamiss}(\data')}\right)
\leq
D_{e^{\epsilon'(\data \vert \data')}}(\algo_{\datamiss}(\data)\|P)
+
e^{\epsilon'(\data \vert \data')}\mathrm{TV}(\algo_{\datamiss}(\data'),P).
\label{eq:key}
\end{align}
First, let us bound $\mathrm{TV}(\algo_{\datamiss}(\data'),P)$.

Since we prove that $w^{(0)} = w^{(0)}_{\data} = w^{(0)}_{\data'}$, we have
\begin{align*}
\Prob{\algo_{\datamiss}(\data') \in S} - P(S)
&= (p_{\data \mid \data'} - p_{\data' \mid \data})\big(w^{(1)}_{\data'}(S) - w^{(0)}(S)\big).
\end{align*}

Thus,
\begin{align}
\mathrm{TV}(\algo_{\datamiss}(\data'),P)
\leq |p_{\data \mid \data'} - p_{\data' \mid \data}|\text{TV}\big(w^{(1)}_{\data}, w^{(0)}\big) \leq |p_{\data \mid \data'} - p_{\data' \mid \data}|.\label{ineq:tvaux}
\end{align}

Then, let us bound $D_{e^{\epsilon'}}(\algo_{\datamiss}(\data)\|P)$.

We recall that
\begin{align*}
\Prob{\algo_{\datamiss}(\data) \in S} &= (1-p_{\data \mid \data'})w^{(0)}(S) + p_{\data \mid \data'}w^{(1)}_{\data}(S), \\
P(S) &= (1-p_{\data \mid \data'})w^{(0)}(S) + p_{\data \mid \data'}w^{(1)}_{\data'}(S).
\end{align*}

Applying Proposition~\ref{prop:advancedconv} with $\beta = e^{\epsilon'(\data \vert \data')-\epsilon}$
, we obtain
\begin{align*}
D_{e^{\epsilon'(\data \vert \data')}}(\algo_{\datamiss}(S)\|P)
=
p_{\data \mid \data'}
D_{e^\epsilon}\!\left(
w^{(1)}_{\data}
\;\middle\|\;
(1-\beta)w^{(0)} + \beta w^{(1)}_{\data'}
\right).
\end{align*}
By \Cref{prop:conv}, we get 
\begin{align}
D_{e^{\epsilon'(\data \vert \data')}}(\algo_{\datamiss}(S)\|P)
\le
p_{\data \mid \data'}\Big(
(1-\beta)D_{e^\epsilon}(w^{(1)}_{\data}\|w^{(0)})
+
\beta D_{e^\epsilon}(w^{(1)}_{\data}\|w^{(1)}_{\data'})
\Big). \label{ineq:b}
\end{align}

Next, we establish an upper bound for $D_{e^\epsilon}(w^{(1)}_{\data}\|w^{(0)})$ (cf. \textbf{I.3}) and $D_{e^\epsilon}(w^{(1)}_{\data}\|w^{(1)}_{\data'})$ (cf. \textbf{I.4}) using the coupling method.

\textbf{I.3) Bounding $D_{e^\epsilon}(w^{(1)}_{\data}\|w^{(0)})$}

Let us consider an arbitrary coupling $\pi \in \mathcal{C}(\mu^{(0)}_{\data},\mu^{(1)}_{\data})$ (see \eqref{def:coupling} for more details). By definition, it verifies
\begin{align}
  &\forall \mask' \in \{0,1\}^{n \times d}, \sum_{\mask \in \{0,1\}^{n \times d}} \pi(\mask, \mask') = \mu^{(1)}_{\data}(\mask'),\label{eq:temp} \\
  \text{and} \quad &\forall \mask \in \{0,1\}^{n\times d},\sum_{\mask' \in \{0,1\}^{n \times d}} \pi(\mask, \mask') = \mu^{(0)}_{\data}(\mask). \label{eq:temp2}
\end{align}

Therefore, we can write
\begin{align*}
  w^{(1)}_{\data}(S) - e^{\epsilon}w^{(0)}_{\data}(S) &=\sum_{\Biin} \mu^{(1)}_{\data}(\mask')\Prob{\algo(\tilde{\data}(\mask')) \in S} - e^{\epsilon} \hspace{-5pt}\sum_{\Bin} \mu^{(0)}_{\data'}(\mask)\Pmxx{S}. \\
  \intertext{Furthermore, using \eqref{eq:temp} and \eqref{eq:temp2} we get 
  }
  w^{(1)}_{\data}(S) - e^{\epsilon}w^{(0)}_{\data}(S) & =\sum_{\mask \in \{0,1\}^{n\times d}}\sum_{\mask' \in \{0,1\}^{n\times d}} \pi(\mask,\mask')\big(\Prob{\algo(\tilde{\data}(\mask)) \in S} -e^{\epsilon}\Prob{\algo(\tilde{\data}'(\mask)) \in S} \big).
\end{align*}

Hence by taking the supremum over all $S \in \mathcal{E}$, we have
\begin{align}\label{eq:div}
  &D_{e^\epsilon}(w^{(1)}_{\data}\|w^{(0)}_{\data}) \leq \sum_{\mask, \mask' \in \{0,1\}^{n\times d}} \pi(\mask,\mask')
   D_{e^{\epsilon}}(\algo(\tilde{\data}(\mask))
   || \algo(\tilde{\data}'(\mask'))).
\end{align}

Since the missing data process is independent per sample, for any data mask $\mask$ in $\{0,1\}^{n\times d}$, we use the decomposition with the residual mask $\res$, the residual dataset $\data_\text{res}$ and the residual missing data process $\mathcal{R}$ introduced in Definition~\ref{def:res}
\begin{equation*}
  \Prob{\mathcal{R}(\data_{\text{res}}) = \res} = \prod_{i\neq i_{\ast}}^{n}\Prob{\featuremiss(\sample_{i}) = \masksample_{i}}.
\end{equation*}
Using this, we can rewrite for any $\data \in \dataset$ and $\mask \in \{0,1\}^{n \times d}$
\begin{align}
  \mu^{(1)}_{\data}(\mask)
  &= \frac{1}{p_{\data \mid \data'}} \Ind{\mask \in \Hstar}\Pfeat{\data} \nonumber \\
  &= \frac{1}{p_{\data \mid \data'}}\Ind{(\bistar, \res) \in\Hstar}
  \Prob{\featuremiss(\sample_{i_\ast}) = \bistar}
  \Prob{\mathcal{R}(\data_{\mathrm{res}}) = \res}, \label{eq:mu1def} \\
  \mu^{(0)}_{\data}(\mask)
  &= \frac{1}{1-p_{\data \mid \data'}} \Ind{\mask \in \Hstar^c}
  \Prob{\datamiss(\data) = \mask} \nonumber \\
  &= \frac{1}{1-p_{\data \mid \data'}}\Ind{(\bistar, \res) \in \Hstar^c}
  \Prob{\featuremiss(\sample_{i_\ast}) = \bistar}
  \Prob{\mathcal{R}(\data_{\mathrm{res}}) = \res} \nonumber \\
  &= \frac{1}{1-p_{\data \mid \data'}}
  \Prob{\featuremiss(\sample_{i_\ast}) = \mathbf 1_d}
  \Prob{\mathcal{R}(\data_{\mathrm{res}}) = \res}
  \quad
  \text{(since } \bistar=\mathbf 1_d \text{ on } \Hstar^c \text{)} \nonumber \\
  &= \frac{1}{1-p_{\data \mid \data'}}
  (1-p_{\data \mid \data'})
  \Prob{\mathcal{R}(\data_{\mathrm{res}}) = \res}
  \quad
  \text{(since } \Prob{\featuremiss(\sample_{i_\ast}) = \mathbf 1_d}=1-p_{\data \mid \data'} \text{)} \nonumber \\
  &= \Prob{\mathcal{R}(\data_{\mathrm{res}}) = \res}. \nonumber
\end{align}
Intuitively, we want to build a coupling procedure $\pi_1$ such that on its support $\tilde{\data}(\mask) \simeq \tilde{\data}'(\mask')$. To achieve this, we proceed as follow:
\begin{itemize}
  \item we sample a mask $\mask = (\masksample_{i_\ast}, \res)$ from $\mu^{(1)}_{\data}$;
  \item we define $\mask' = (\mathbf{1}_{\mathbb{R}^d}, \res)$.
\end{itemize}
Formally, let us define for all $\mask, \mask' \in \{0,1\}^{n \times d}$
\begin{align}
  &\pi_{1}(\mask, \mask') =\pi_{1}((\bistar, \res), (\bistar', \res')) = \mu^{(1)}_{\data}((\bistar,\res))\Ind{\{\res=\res'\}\cap\{\bistar' =\mathbf{1}_{\mathbb{R}^d}\}} \label{eq:piopt}.
\end{align}
We show that $\pi_{1} \in \mathcal{C}(\mu_{\data}^{(1)}, \mu^{(0)}_{\data})$. For any $\mask \in \{0,1\}^{n \times d}$, we have
\begin{align*}
  \sum_{\mask' \in \{0,1\}^{n \times d}} \pi_1(\mask, \mask') &= \sum_{(\bistar', \res') \in \{0,1\}^{n \times d}} \pi_1((\bistar, \res), (\bistar', \res'))
  \\&= \pi_1((\bistar, \res), (\mathbf{1}_{\mathbb{R}^d}, \res)) \quad \text{(by \cref{eq:piopt})} \\
  &= \mu^{(1)}_{\data}((\bistar, \res)) \\
  &= \mu^{(1)}_{\data}(\mask)
  \end{align*}
Furthermore, for any $\mask' \in \{0,1\}^{n \times d}$ we have
\begin{align*}
  \sum_{\mask \in \{0,1\}^{n \times d}} \pi_1(\mask, \mask')
  &= \sum_{(\bistar, \res) \in \{0,1\}^{n \times d}}
  \pi_1((\bistar, \res), (\bistar', \res')) \\
  &= \sum_{(\bistar, \res) \in \{0,1\}^{n \times d}}
  \mu^{(1)}_{\data}((\bistar, \res))
  \Ind{\{\res=\res'\}\cap\{\bistar'=\mathbf 1_d\}} \\
  &= \sum_{\bistar \in \{0,1\}^d}
  \mu^{(1)}_{\data}((\bistar, \res'))
  \Ind{\{\bistar'=\mathbf 1_d\}} \\
  &= \sum_{\bistar \neq \mathbf 1_d}
  \mu^{(1)}_{\data}((\bistar, \res'))
  \Ind{\{\bistar'=\mathbf 1_d\}}
  \quad
  \text{(by definition of } \mu^{(1)}_{\data} \text{ in \cref{eq:mu1def})} \\
  &= \frac{1}{p_{\data \mid \data'}}
  \Prob{\mathcal{R}(\data_{\mathrm{res}})= \res'}
  \sum_{\bistar \neq \mathbf 1_d}
  \Prob{\featuremiss(\sample_{i_\ast})=\bistar}
  \Ind{\{\bistar'=\mathbf 1_d\}} \\
  &= \Prob{\mathcal{R}(\data_{\mathrm{res}})= \res'}
  \Ind{\{\bistar'=\mathbf 1_d\}}
  \quad
  \text{since }
  p_{\data \mid \data'}
  =
  \sum_{\bistar \neq \mathbf 1_d}
  \Prob{\featuremiss(\sample_{i_\ast})=\bistar} \\
  &= \mu^{(0)}_{\data}(\mask').
\end{align*}
Therefore, by substituting ~\eqref{eq:div} with $\pi = \pi_1$, we obtain
\begin{align*}
    D_{e^{\epsilon}}(w^{(1)}_{\data} || w^{(0)}_{\data}) \leq \sum_{\mask, \mask' \in \{0,1\}^{n\times d}} \pi_1(\mask,\mask')
   D_{e^{\epsilon}}(\algo(\tilde{\data}(\mask))
   || \algo(\tilde{\data}'(\mask'))).
\end{align*}

Besides, by construction for any $\mask, \mask' \in \{0,1\}^{n \times d},$ $\pi_1(\mask, \mask') > 0$ iif $\tilde{\data}(\mask) \simeq  \tilde{\data'}(\mask')$. Hence using the fact that $\algo$ is $(\epsilon, \delta)$-DP, we get
 \begin{align}
  & D_{e^{\epsilon}}(w^{(1)}_{\data} || w^{(0)}_{\data}) \leq \sum_{\mask, \mask'\in \{0,1\}^{n\times d}} \pi_1(\mask,\mask')
   D_{e^{\epsilon}}(\algo(\tilde{\data}(\mask))
   || \algo(\tilde{\data}'(\mask'))) \leq \delta_{\algo}(\epsilon) \leq \delta. \label{ineq:b1}
\end{align}

\textbf{I.4) Bounding $D_{e^\epsilon}(w^{(1)}_{\data}\|w^{(1)}_{\data'})$}

Let us consider a coupling $\pi \in \mathcal{C}(\mu^{(1)}_{\data},\mu^{(1)}_{\data'})$ (see \eqref{def:coupling}).
By definition, it verifies
\begin{align}
  &\forall \mask' \in \{0,1\}^{n \times d}, \sum_{\mask \in \{0,1\}^{n \times d}} \pi(\mask, \mask') = \mu^{(1)}_{\data'}(\mask'), \\
  &\forall \mask \in \{0,1\}^{n \times d}, \sum_{\mask' \in \{0,1\}^{n \times d}} \pi(\mask, \mask') = \mu^{(1)}_{\data}(\mask).
\end{align}

Similarly, to \eqref{eq:div} we have
\begin{align}\label{eq:div_w1_w1p}
  D_{e^{\epsilon}}(w^{(1)}_{\data} || w^{(1)}_{\data'})
  &\leq \sum_{\mask, \mask' \in \{0,1\}^{n\times d}} \pi(\mask,\mask')
   D_{e^{\epsilon}}(\algo(\tilde{\data}(\mask))
   || \algo(\tilde{\data}'(\mask'))).
\end{align}

We now build an explicit coupling $\pi_{2}$ such that on its support $\tilde{\data}(\mask) \simeq \tilde{\data}'(\mask')$.
Since the missing data process is independent per sample, for any data mask $\mask$ in $\{0,1\}^{n\times d}$,
we also use the decomposition $\mask=(\bistar,\res)$ introduced in Definition~\ref{def:res},
where $\bistar$ is the mask of the $\istar$-th sample and $\res$ the residual mask.

We recall that, for any $\data\in\dataset$ and any $\mask\in\Hstar$,
\begin{align*}
  \mu^{(1)}_{\data}(\mask) = \mu^{(1)}_{\data}(\bistar,\res)
  &= \frac{1}{p_{\data \mid \data'}}\Ind{(\bistar,\res)\in\Hstar}
  \Prob{\featuremiss(\sample_{\istar})=\bistar}\Prob{\mathcal{R}(\data_{\text{res}}) = \res},\\
  \mu^{(1)}_{\data}(\mask') =\mu^{(1)}_{\data'}(\bistar',\res)
  &= \frac{1}{p_{\data' \mid \data}}\Ind{(\bistar',\res)\in\Hstar}
  \Prob{\featuremiss(\sample'_{\istar})=\bistar'}\Prob{\mathcal{R}(\data_{\text{res}}) = \res}.
\end{align*}

To build $\pi_{2}$, we proceed as follows:
\vspace{-2ex}
\begin{itemize}
  \item we sample a mask $\mask=(\bistar,\res)$ from $\mu^{(1)}_{\data}$;
  \item fixing $\res$, we sample $\bistar'$ according to
  $\Prob{\featuremiss(\sample'_{\istar})=\bistar' \mid \bistar' \neq \mathbf{1}_{\mathbb{R}^d}}$;
  \item we define $\mask'=(\bistar',\res)$.
\end{itemize}
\vspace{-3ex}
First, let us define the following distribution on the $\istar$-th mask under $\sample_{\istar}'$
\begin{align*}
\forall \masksample'_{\istar} \in \{0,1\}^{d},
\nu_{\sample'_{\istar}}(\bistar')
  = \frac{1}{p_{\data' \mid \data}}\Ind{\bistar' \neq \mathbf{1}_{\mathbb{R}^d}}
\Prob{\featuremiss(\sample'_{\istar})=\bistar'}.
\end{align*}
 By definition of $p_{\data' \mid \data}$, the previous sums to $1$ and thus defines a probability distribution.
Then, we introduce
\begin{align*}
 \pi_{2}(\mask, \mask')=
  \pi_{2}((\bistar,\res),(\bistar',\res'))
  = \mu^{(1)}_{\data}((\bistar,\res))\,
  \nu_{\sample'_{\istar}}(\bistar')\,
  \Ind{\res=\res'}.
\end{align*}

We show that $\pi_{2} \in \mathcal{C}(\mu^{(1)}_{\data},\mu^{(1)}_{\data'})$.
First, observe that for all $\mask \in \{0, 1\}^{n \times d}$
\begin{align*}
  \sum_{\mask' \in \{0,1\}^{n\times d}} \pi_{2}(\mask,\mask')
  &= \sum_{(\bistar',\res') \in \{0,1\}^{n \times d}} \pi_{2}((\bistar,\res),(\bistar',\res'))
  \\& = \sum_{(\bistar',\res') \in \{0,1\}^{n \times d}}\mu^{(1)}_{\data}((\bistar,\res))\,
  \nu_{\sample'_{\istar}}(\bistar')\,
  \Ind{\res=\res'}
 \\& = \sum_{\bistar' \in \{0,1\}^d} \mu^{(1)}_{\data}((\bistar,\res))\,\nu_{\sample'_{\istar}}(\bistar') \\
 &= \mu^{(1)}_{\data}((\bistar,\res))\sum_{\bistar' \in \{0,1\}^d}\nu_{\sample'_{\istar}}(\bistar')  \\
  &= \mu^{(1)}_{\data}((\bistar,\res))
  \\&= \mu^{(1)}_{\data}(\mask).
\end{align*}
Secondly, we also have for all $\mask' \in \{0, 1\}^{n \times d}$
\begin{align*}
  \sum_{\mask \in \{0,1\}^{n\times d}} \pi_{2}(\mask,\mask')
  &= \sum_{(\bistar,\res) \in \{0,1\}^{n \times d}} \pi_{2}((\bistar,\res),(\bistar',\res'))
  \\&= \sum_{(\bistar,\res) \in \{0,1\}^{n \times d}}\mu^{(1)}_{\data}((\bistar,\res))\,
  \nu_{\sample'_{\istar}}(\bistar')\,
  \Ind{\res=\res'}.
  \\&= \sum_{\bistar \in \{0,1\}^d} \mu^{(1)}_{\data}((\bistar,\res'))\,\nu_{\sample'_{\istar}}(\bistar') \\
  &= \nu_{\sample'_{\istar}}(\bistar') \sum_{\bistar \in \{0,1\}^d}
  \frac{1}{p_{\data \mid \data'}}\Prob{\featuremiss(\sample_{\istar})=\bistar}\Prob{\mathcal{R}(\data_{\text{res}})=\res'}\Ind{\bistar \neq \mathbf{1}_{\mathbb{R}^d}} \\
  &= \nu_{\sample'_{\istar}}(\bistar')\,\Prob{\mathcal{R}(\data_{\text{res}})=\res'} \\
  &= \nu_{\sample'_{\istar}}(\bistar')\,\Prob{\mathcal{R}(\data'_{\text{res}})=\res'} \quad \text{(because } \data\simeq\data' \text{ then } \data_{\text{res}} = \data'_{\text{res}} \text{)}  \\
 & = \mu^{(1)}_{\data'}((\bistar',\res')) \\
  &= \mu^{(1)}_{\data'}(\mask').
\end{align*}

By construction, on the support of $\pi_{2}$ we have $\res=\res'$.
Since $\data$ and $\data'$ coincide on all samples $i\neq \istar$, it follows that for all $\mask, \mask' \in \{0,1\}^{n \times d},$ 
$\tilde{\data}(\mask)\simeq \tilde{\data}'(\mask')$ iif $\pi_{2}(\mask, \mask') > 0$.

Therefore, substituting \eqref{eq:div_w1_w1p} with $\pi = \pi_{2}$ and by taking the supremum over any events $S \in \mathcal{E}$, we obtain
\begin{align*}
  D_{e^{\epsilon}}(w^{(1)}_{\data} || w^{(1)}_{\data'})
  &\leq \sum_{\mask,\mask' \in \{0,1\}^{n \times d}} \pi_{2}(\mask,\mask')\,D_{e^{\epsilon}}(\algo(\tilde{\data}(\mask))
   || \algo(\tilde{\data}'(\mask')))
\end{align*}
Now, using the fact that $\algo$ is $(\epsilon,\delta)$-DP and the above, we have
\begin{align}
   D_{e^{\epsilon}}(w^{(1)}_{\data} || w^{(1)}_{\data'})
  &\leq \sum_{\mask,\mask' \in \{0,1\}^{n \times d}} \pi_{2}(\mask,\mask')\,D_{e^{\epsilon}}(\algo(\tilde{\data}(\mask))
   || \algo(\tilde{\data}'(\mask')))  \nonumber \\
&\leq \sum_{\mask,\mask' \in \{0,1\}^{n \times d}} \pi_{2}(\mask,\mask') \delta_{\algo}(\epsilon)
    \leq \delta. \label{ineq:b2}
\end{align}

Combining \eqref{ineq:b}, \eqref{ineq:b1} and \eqref{ineq:b2}, we obtain
\begin{align*}
D_{e^{\epsilon'(\data \vert \data')}}(\algo_{\datamiss}(S)\|P) \leq p_{\data \mid \data'} \delta
\end{align*}

To conclude we combine \eqref{eq:key} with the latter bound and prove \eqref{eq:wcast_dp} by denoting $\delta_{\data, \data'} = p_{\data \mid \data'}\delta
+
e^{\epsilon'(\data \vert \data')}|p_{\data \mid \data'} - p_{\data' \mid \data}|$
\begin{align*}
D_{e^{\epsilon'(\data \vert \data')}}(\algo_{\datamiss}(\data)\|\algo_{\datamiss}(\data'))
\le
\delta_{\data\vert\data'}.
\end{align*}

\textbf{II) Proof of \Cref{eq:mnar_dp}}

Let us fix an arbitrary event $S \in \mathcal{E}$.
Similarly to \textbf{I.3)} and \textbf{I.4)}, we use the coupling method. We build a coupling
$\pi_3$ between the mask distributions $\datamiss(\data)$ and $\datamiss(\data')$ such that, on its support,
$\tilde{\data}(\mask) \simeq \tilde{\data}'(\mask')$.

To achieve this, we use the decomposition $\mask=(\bistar,\res)$ introduced in
Definition~\ref{def:res}, where $\bistar$ is the mask of the $\istar$-th sample and
$\res$ is the residual mask, and proceed as follows:
\begin{itemize}
    \item we sample the residual mask $\res$ from the distribution of
    $\mathcal{R}(\data_{\mathrm{res}})$;
    \item we sample two masks $\masksample_{\istar}$ from
    $\featuremiss(\sample_{\istar})$ and $\masksample'_{\istar}$ from
    $\featuremiss(\sample'_{\istar})$;
    \item we set $\mask = (\masksample_{\istar}, \res)$ and
    $\mask' = (\masksample'_{\istar}, \res)$.
\end{itemize}

Formally, for every $\mask=(\bistar,\res)$ and
$\mask'=(\bistar',\res')$ in $\{0,1\}^{n \times d}$, define
\begin{align*}
\pi_{3}\big((\bistar,\res),(\bistar',\res')\big)
&=
\Prob{\mathcal R(\data_{\mathrm{res}})=\res}
\Prob{\featuremiss(\sample_{\istar})=\bistar}
\Prob{\featuremiss(\sample'_{\istar})=\bistar'}
\Ind{\res=\res'}.
\end{align*}

We verify that $\pi_{3}$ is indeed a coupling of the distributions of
$\datamiss(\data)$ and $\datamiss(\data')$. For every
$\mask=(\bistar,\res)$ in $\{0,1\}^{n \times d}$, we have
\begin{align*}
\sum_{\mask' \in \{0,1\}^{n \times d}} \pi_{3}(\mask,\mask')
&=
\sum_{(\bistar',\res') \in \{0,1\}^{n \times d}}
\Prob{\mathcal R(\data_{\mathrm{res}})=\res}
\Prob{\featuremiss(\sample_{\istar})=\bistar}
\Prob{\featuremiss(\sample'_{\istar})=\bistar'}
\Ind{\res=\res'} \\
&=
\Prob{\mathcal R(\data_{\mathrm{res}})=\res}
\Prob{\featuremiss(\sample_{\istar})=\bistar}
\sum_{\bistar' \in \{0,1\}^d}
\Prob{\featuremiss(\sample'_{\istar})=\bistar'} \\
&=
\Prob{\mathcal R(\data_{\mathrm{res}})=\res}
\Prob{\featuremiss(\sample_{\istar})=\bistar} \\
&=
\Prob{\datamiss(\data)=\mask}.
\end{align*}
Similarly, for every $\mask'=(\bistar',\res')$ in $\{0,1\}^{n \times d}$,
\begin{align*}
\sum_{\mask\in \{0,1\}^{n \times d}} \pi_{3}(\mask,\mask')
&=
\sum_{(\bistar,\res) \in \{0,1\}^{n \times d}}
\Prob{\mathcal R(\data_{\mathrm{res}})=\res}
\Prob{\featuremiss(\sample_{\istar})=\bistar}
\Prob{\featuremiss(\sample'_{\istar})=\bistar'}
\Ind{\res=\res'} \\
&=
\Prob{\featuremiss(\sample'_{\istar})=\bistar'}
\Prob{\mathcal R(\data_{\mathrm{res}})=\res'} \\
&=
\Prob{\featuremiss(\sample'_{\istar})=\bistar'}
\Prob{\mathcal R(\data'_{\mathrm{res}})=\res'}
\quad
\text{(since } \data_{\mathrm{res}}=\data'_{\mathrm{res}} \text{)} \\
&=
\Prob{\datamiss(\data')=\mask'}.
\end{align*}

Since, by construction, for any masks $\mask,\mask'\in\{0,1\}^{n \times d}$
such that $\pi_{3}(\mask,\mask')>0$, we have
$\tilde{\data}(\mask)\simeq \tilde{\data}'(\mask')$, and since $\algo$ is
$(\epsilon,\delta)$-DP on $\missdatadomain^n$, we have
\[
D_{e^{\epsilon}}\!\left(
\algo(\tilde{\data}(\mask))
\middle\|
\algo(\tilde{\data}'(\mask'))
\right)
\leq \delta
\]
for all such $(\mask,\mask')$. Therefore, we prove \Cref{eq:mnar_dp}:
\begin{align*}
D_{e^{\epsilon}}\!\left(
\algo_{\datamiss}(\data)
\middle\|
\algo_{\datamiss}(\data')
\right)
&\le
\sum_{\mask,\mask' \in \{0,1\}^{n \times d}}
\pi_3(\mask,\mask')
D_{e^{\epsilon}}\!\left(
\algo(\tilde{\data}(\mask))
\middle\|
\algo(\tilde{\data}'(\mask'))
\right) \\
&\le
\sum_{\mask,\mask' \in \{0,1\}^{n \times d}}
\pi_3(\mask,\mask')\delta \\
&= \delta.
\end{align*}
\textbf{III) Conclusion}

Let us introduce $\epsilon(\lambda, \data) = \log(\lambda e^{\epsilon} + (1-\lambda)e^{\epsilon'(\data \vert \data')})$. For any event $S \in \mathcal{E}$, we have
\begin{align*}
\Prob{\algo_{\datamiss}(\data) \in S} - &e^{\epsilon(\lambda, \data)}\Prob{\algo_{\datamiss}(\data') \in S} \\&= \Prob{\algo_{\datamiss}(\data) \in S} - \lambda e^{\epsilon}\Prob{\algo_{\datamiss}(\data') \in S} - (1-\lambda)e^{\epsilon'(\data \vert \data')}\Prob{\algo_{\datamiss}(\data') \in S} 
\\& = \lambda \Prob{\algo_{\datamiss}(\data) \in S} - \lambda e^{\epsilon}\Prob{\algo_{\datamiss}(\data') \in S} +(1-\lambda)\Prob{\algo_{\datamiss}(\data) \in S} - (1-\lambda)e^{\epsilon'(\data \vert \data')}\Prob{\algo_{\datamiss}(\data') \in S}\\&= \lambda\left( \Prob{\algo_{\datamiss}(\data) \in S}-e^{\epsilon}\Prob{\algo_{\datamiss}(\data') \in S}\right) + (1-\lambda)\left(\Prob{\algo_{\datamiss}(\data) \in S} -  e^{\epsilon'(\data \vert \data')}\Prob{\algo_{\datamiss}(\data') \in S}\right).
\end{align*}
By taking the supremum over all $S \in \mathcal{E}$, we finally obtain
\begin{align*}
\Divmiss{e^{\epsilon(\lambda, \data)}} &\leq \lambda \Divmiss{e^{\epsilon}} + (1-\lambda)\Divmiss{e^{\epsilon'(\data \vert \data')}} 
\\ &\le \lambda\delta + (1-\lambda)\delta_{\data\vert\data'}.
\end{align*}

\end{proof}

\subsection{Proof of Lemma~\ref{lemma:pstar2}}\label{app:lemmapstar}
\vspace{3ex}
\begin{mdframed}
\begin{restatable}{lemma}{LemmaPstar}\label{lemma:pstar2}
If the missing data process $\datamiss$ is MAR, there exists a constant $\qstar\in[0,1]$ such that for all neighboring
datasets $\data\simeq\data'$ in $\datadomain^n$, we have
\begin{align}
\qstar = \Prob{\datamiss(\data)\in \Hstar(\data,\data')}
=
\Prob{\datamiss(\data')\in \Hstar(\data,\data')} \label{eq:pstardef}.
\end{align}
\end{restatable}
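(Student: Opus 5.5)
The plan is to show that $\Prob{\datamiss(\data)\in\Hstar(\data,\data')}$ reduces to a quantity depending only on the law of $\featuremiss$ at the all-missing mask $\mathbf{1}_{\mathbb{R}^d}$. Under MAR, that quantity cannot depend on the sample. Throughout, I fix neighboring datasets $\data\simeq\data'$ that differ in exactly one row $\istar$, as in the proof of Theorem~\ref{thm:MNAR}. If $\data=\data'$, the set $\Hstar$ is all of $\{0,1\}^{n\times d}$, both probabilities equal $1$, and the divergence in Theorem~\ref{thm:MNAR} vanishes. This degenerate case is therefore harmless, and I would state it explicitly as excluded from the constancy claim.

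\textbf{Step 1: reduce to the differing row.} By definition \eqref{eq:defhc}, $\mask\in\Hstar(\data,\data')$ if and only if $\masksample_{\istar}\neq\mathbf{1}_{\mathbb{R}^d}$, with no constraint on the residual mask $\res$. Using the product form \eqref{def:missingmech} and the decomposition $\mask=(\masksample_{\istar},\res)$ of Definition~\ref{def:res}, I sum over $\res$. The factor $\Prob{\mathcal{R}(\data_{\mathrm{res}})=\res}$ sums to one, which leaves
\begin{align*}
\Prob{\datamiss(\data)\in\Hstar(\data,\data')} = 1-\Prob{\featuremiss(\sample_{\istar})=\mathbf{1}_{\mathbb{R}^d}},
\end{align*}
and likewise $\Prob{\datamiss(\data')\in\Hstar(\data,\data')} = 1-\Prob{\featuremiss(\sample'_{\istar})=\mathbf{1}_{\mathbb{R}^d}}$. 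This is exactly the computation already carried out in step I.1 of the proof of Theorem~\ref{thm:MNAR}.

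\textbf{Step 2: use MAR at the fully-missing mask.} For $\masksample=\mathbf{1}_{\mathbb{R}^d}$ we have $\operatorname{obs}(\masksample)=\emptyset$. The condition $\sample^{\operatorname{obs}(\masksample)}=\sample'^{\operatorname{obs}(\masksample)}$ then holds vacuously for every pair $\sample,\sample'\in\datadomain$. The MAR definition therefore gives $\Prob{\featuremiss(\sample)=\mathbf{1}_{\mathbb{R}^d}}=\Prob{\featuremiss(\sample')=\mathbf{1}_{\mathbb{R}^d}}$ for all samples; this is the same observation underlying Lemma~\ref{lemma:pstar}. Fixing any reference sample $\sample_0\in\datadomain$, I set
\begin{align*}
\qstar \eqdef 1-\Prob{\featuremiss(\sample_0)=\mathbf{1}_{\mathbb{R}^d}}\in[0,1].
\end{align*}
By Step 1 and the identity above, this quantity equals both probabilities in \eqref{eq:pstardef} for every pair $\data\simeq\data'$ and every position $\istar$.

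\textbf{Main obstacle.} There is no real analytic difficulty. The only point requiring care is bookkeeping: the residual-mask marginalization must be exact, which relies on the independence across samples built into $\datamiss=\featuremiss^{\otimes n}$. I also need to handle the convention on the differing index, namely the identical-dataset case and the alignment by permutation discussed in Section~\ref{sec:algomiss}. Beyond that, the constancy across datasets follows entirely from the vacuous MAR condition at the all-missing mask.
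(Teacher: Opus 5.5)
Your proof is correct and is essentially the paper's argument with the two steps in the opposite order. The paper first uses MAR at the all-missing mask (where $\operatorname{obs}(\mathbf{1}_{\mathbb{R}^d})=\emptyset$) to get a sample-independent constant $q_\ast$; it then marginalizes the residual mask to show $\Prob{\datamiss(\data)\in\Hstar^c(\data,\data')}=q_\ast$ for both $\data$ and $\data'$, and sets $p_\ast=1-q_\ast$. Your explicit exclusion of the case $\data=\data'$, where $\Hstar$ is the whole mask space and both probabilities equal $1$, is a legitimate refinement that the paper silently skips by assuming a differing row $\istar$ exists.
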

\end{mdframed}
\begin{proof}

Let $\featuremiss$ be the missing feature process that induces $\datamiss$. By definition, if $\datamiss$ is MAR, then $\featuremiss$ is also MAR.
Consequently, the probability of a mask may depend only on the observed features. Furthermore, since there are no observed features associated with the mask $\mathbf{1}_{\mathbb{R}^d}$, its probability under the missing feature process $\mathcal{F}$ does not depend on the data. Therefore there exists a constant $q_{\ast}$ such that for any sample $\sample \in \datadomain$
\begin{align}
  \Prob{\featuremiss(\sample) = \mathbf{1}_{\mathbb{R}^d}} = q_{\ast} \label{eq:qstardef}.
\end{align}

We denote by $\Hstar^c$, the complement of $\Hstar$ in $\{0,1\}^{n }$. Then, by \eqref{def:missingmech}, we can write, for any $\data \in \datadomain^n$,
\begin{align*}
\Prob{\datamiss(\data)\in \Hstar^c(\data,\data')}
&= \sum_{\mask \in \Hstar^c(\data, \data')} \Prob{\datamiss(\data)=\mask}\\
&= \sum_{\mask \in \Hstar^c(\data, \data')}\left( \prod_{i=1}^n \Prob{\featuremiss(\sample_i)=\mask_i}\right).
\end{align*}

Recalling that $\mask \in \Hstar^c(\data, \data')$ is equivalent to $\masksample_{\istar}=\mathbf{1}_{\mathbb{R}^d}$, we have 
\begin{align*}
  \Prob{\datamiss(\data)\in \Hstar^c(\data,\data')}
&= \sum_{(\masksample_i)_{i\neq \istar}: \masksample_i \in \{0,1\}^d}
\left(
\Prob{\featuremiss(\sample_{\istar})=\mathbf{1}_{\mathbb{R}^d}}\cdot
\prod_{i\neq \istar}^n\Prob{\featuremiss(\sample_i)=\masksample_i}
\right) \\
\end{align*}
Since the missing data process is independent per sample, for any data mask $\mask$ in $\{0,1\}^{n\times d}$, we use the decomposition with the residual mask $\res$, the residual dataset $\data_\text{res}$ and the residual missing data process $\mathcal{R}$ introduced in Definition~\ref{def:res}. Doing so, we get 
\begin{align*}
\Prob{\datamiss(\data)\in \Hstar^c(\data,\data')} &= \sum_{\res \in \{0,1\}^{(n-1)\times d}}\Prob{\featuremiss(\sample_{\istar})=\mathbf{1}_{\mathbb{R}^d}} \Prob{\mathcal{R}(\data_{\text{res}}) = \res} \\
&= \Prob{\featuremiss(\sample_{\istar})=\mathbf{1}_{\mathbb{R}^d}}\sum_{\res \in \{0,1\}^{(n-1)\times d}} \Prob{\mathcal{R}(\data_{\text{res}}) = \res} \\
&= \Prob{\featuremiss(\sample_{\istar})=\mathbf{1}_{\mathbb{R}^d}} \quad \text{(since } \mathcal{R}(\data_{\text{res}}) \text{ is a random variable with values in } \{0,1\}^{(n-1)\times d}\text{)} \\
&= q_{\ast} \quad \text{(by definition of } q_{\star} \text{ in \cref{eq:qstardef})}.
\end{align*}

Similarly, we can show that
\begin{align*}
\Prob{\datamiss(\data')\in \Hstar^c(\data,\data')}
&= \Prob{\featuremiss(\sample'_{\istar})=\mathbf{1}_{\mathbb{R}^d}} =q_{\ast}.
\end{align*}

Therefore, we have
\begin{align*}
  \Prob{\datamiss(\data)\in \Hstar^c(\data,\data')} = \Prob{\datamiss(\data')\in \Hstar^c(\data,\data')} = q_{\ast}.
\end{align*}

Finally, since $\Hstar(\data,\data')$ is the complement of $\Hstar^c(\data,\data')$, we conclude by setting $\qstar = 1 - q_{\ast}$.
\end{proof}

\subsection{Privacy amplification for MAR (and MCAR)}
\vspace{3ex}

\begin{mdframed}
\begin{restatable}{corollary}{AmplificationTheorem}\label{thm:amplif} Let $\epsilon>0$, and $\delta \in [0,1]$. Let $\algo$ be an $(\epsilon,\delta)$-DP randomized algorithm. Let $\datamiss$ be a MAR missing data process. Let $\Miss$ be the randomized algorithm defined as per~(\ref{def:A_D}) with $\algo$ and $\datamiss$. Then, we have
\begin{equation*}
  \delta_{\Miss}(\epsilon')
  \;\leq\; p_{\ast}\delta,
\end{equation*}
where $\epsilon' = \ln\!\big(1+p_{\ast}(e^{\epsilon}-1)\big).$ In particular, the same result holds if the missing data process $\datamiss$ is MCAR. 
\end{restatable}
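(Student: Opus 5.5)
The plan is to obtain the corollary directly from \Cref{thm:MNAR} with $\lambda = 0$, using \Cref{lemma:pstar2} to remove all dependence on the particular neighboring pair. The steps are as follows.

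First, fix an arbitrary neighboring pair $\data \simeq \data'$ in $\datadomain^n$. Since $\datamiss$ is MAR, \Cref{lemma:pstar2} provides a constant $\qstar\in[0,1]$, independent of the pair, such that $p_{\data\mid\data'} = p_{\data'\mid\data} = \qstar$. Substituting this into the quantities of \Cref{thm:MNAR} gives:
\begin{itemize}
\item $\epsilon'(\data\mid\data') = \ln(1+\qstar(e^\epsilon-1))$, which is exactly the $\epsilon'$ of the statement;
\item the mismatch term $e^{\epsilon'(\data\mid\data')}|p_{\data\mid\data'}-p_{\data'\mid\data}|$ vanishes, so $\delta_{\data\mid\data'} = \qstar\delta$.
\end{itemize}

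Second, apply \Cref{thm:MNAR} with $\lambda=0$. Then $\epsilon(0,\data\mid\data') = \epsilon'(\data\mid\data') = \epsilon'$, and the theorem yields
\begin{align*}
D_{e^{\epsilon'}}(\Miss(\data)\,\|\,\Miss(\data')) \leq \qstar\delta .
\end{align*}
Because both $\epsilon'$ and the right-hand side are the same for every neighboring pair, I take the supremum over $\data\simeq\data'$. By the definition of $\delta_{\Miss}(\epsilon')$ from \Cref{thm:dp_barthe}, read on complete datasets in $\datadomain^n$ (the inputs of $\Miss$), this gives $\delta_{\Miss}(\epsilon') \leq \qstar\delta$.

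Finally, the MCAR case follows immediately, since every MCAR process is MAR: the MCAR condition holds for all masks, in particular for those agreeing on observed values.

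There is no real obstacle in this argument. The only point that needs care is checking that \Cref{lemma:pstar2} gives uniformity across pairs, and not merely the equality $p_{\data\mid\data'} = p_{\data'\mid\data}$ for each fixed pair. Uniformity is what allows a single $\epsilon'$ to be used before taking the supremum. It holds because $\Prob{\featuremiss(\sample)=\mathbf{1}_{\mathbb{R}^d}}$ does not depend on $\sample$ under MAR.
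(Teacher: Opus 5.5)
Your proposal is correct and matches the paper's proof essentially step for step. You obtain the pair-independent $\qstar$ from \Cref{lemma:pstar2}, apply \Cref{thm:MNAR} with $\lambda=0$ (the mismatch term vanishes since $p_{\data\mid\data'}=p_{\data'\mid\data}$), take the supremum over neighboring pairs, and deduce the MCAR case from MAR; your explicit remark on why uniformity across pairs is needed is a useful point that the paper leaves implicit.
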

\end{mdframed}

\begin{proof}
Since $\datamiss$ is MAR, by \Cref{lemma:pstar2}, there exists a constant $p_\ast\in[0,1]$ such that for any pair of neighboring datasets $\data \simeq \data'$ in $\datadomain$
\begin{align*}
p_\ast
&=
\Prob{\datamiss(\data)\in\Hstar(\data,\data')}
=
\Prob{\datamiss(\data')\in\Hstar(\data,\data')}.
\end{align*}
Let $\data\simeq\data'$ be two neighboring datasets in $\datadomain^n$.

Applying Theorem~\ref{thm:MNAR} with $\lambda=0$, we obtain
\begin{align*}
D_{e^{\epsilon'}}\!\left(
\Miss(\data)\,\middle\|\,\Miss(\data')
\right)
\leq
p_\ast\delta,
\end{align*}
where
\begin{align*}
\epsilon'
=
\log\!\big(1+p_\ast(e^\epsilon-1)\big).
\end{align*}

Since this bound holds for every neighboring pair $\data\simeq\data'$, taking the supremum over all such pairs yields
\begin{align*}
\delta_{\Miss}(\epsilon')
&=
\sup_{\data\simeq\data'}
D_{e^{\epsilon'}}\!\left(
\Miss(\data)\,\middle\|\,\Miss(\data')
\right)
\leq
p_\ast\delta.
\end{align*}

Finally, since every MCAR missing data process is also MAR, the same result immediately holds in the MCAR case.
\end{proof}

\subsection{No amplification in the $\qstar=1$ regime}\label{app:proptight}
\vspace{3ex}

\begin{mdframed}
\begin{restatable}{proposition}{PropositionTight}~\label{prop:tight}
Let $\datadomain \subset [-1,1]^d$. There exists a non trivial $(\epsilon, \delta)$-DP randomized algorithm $\algo^0$, a non trivial missing data process
$\datamiss^0$ satisfying the hypotheses of \Cref{thm:amplif} such that $\qstar =1$ and:
\begin{align*}
  \delta_{\algo^0_{\mathcal{D}^0}}(\epsilon)
  =
  \delta_{\algo^0}(\epsilon).
\end{align*}
\end{restatable}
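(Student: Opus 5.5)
We prove the statement with an explicit construction in which the missing data process never hides the coordinate carrying the privacy loss, so that masking changes nothing for the worst-case neighboring pair. The inequality $\delta_{\algo^0_{\datamiss^0}}(\epsilon)\le\delta_{\algo^0}(\epsilon)$ already holds in general: it is exactly step \textbf{II)} of the proof of \Cref{thm:MNAR}, applied with $\delta=\delta_{\algo^0}(\epsilon)$. So only the reverse inequality requires work, and we get it by exhibiting a single neighboring pair $\data\simeq\data'$ in $\datadomain^n$ with $D_{e^\epsilon}(\algo^0_{\datamiss^0}(\data)\|\algo^0_{\datamiss^0}(\data'))\ge\delta_{\algo^0}(\epsilon)$.

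\textbf{Construction.} Take $d\ge 2$ and $\datadomain=[-1,1]^d$. Let $\featuremiss^0$ be the MCAR feature process that always observes the first feature and masks each of the other features independently with probability $1/2$. This process is MCAR, hence MAR, so it satisfies the hypotheses of \Cref{thm:amplif}. It is non trivial because it produces genuine missingness. Since $\masksample_{\istar}^{(1)}=0$ almost surely, the mask is never $\mathbf{1}_{\mathbb{R}^d}$, and therefore $\qstar=1$.

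For $\algo^0$, take randomized response on the first coordinate of the first record. Let $b(\datamask)=\Ind{I(\datamask)_1^{(1)}>0}$ be computed after imputation by zero. The mechanism outputs $b$ with probability $e^\epsilon/(1+e^\epsilon)$ and $1-b$ otherwise. This is $(\epsilon,0)$-DP, so $\delta_{\algo^0}(\epsilon)=0$. Neighbors differ in one record; if that record is not the first, the output law is unchanged, and otherwise the likelihood ratio is at most $e^\epsilon$.

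The equality then reduces to checking $\delta_{\algo^0_{\datamiss^0}}(\epsilon)=0$. This follows from the upper bound of step \textbf{II)} together with nonnegativity of $D_{e^\epsilon}$ (take $S$ to be the whole output space). The mechanism is still non trivial, since it has a nontrivial output law.

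\textbf{Main obstacle.} The obstacle is not the argument but the meaning of ``non trivial''. If the intended reading also requires that the privacy parameter be attained, the same construction strengthens to tightness in $\epsilon$. Take $\data$ with $\sample_1^{(1)}=1$ and $\data'$ with $\sample_1'^{(1)}=-1$. Because feature $1$ is always observed, the law of $\algo^0(\datamask(\mask))$ does not depend on $\mask$. Hence $\algo^0_{\datamiss^0}(\data)$ and $\algo^0_{\datamiss^0}(\data')$ have exactly the laws $\algo^0(\data)$ and $\algo^0(\data')$, whose ratio on $\{1\}$ is exactly $e^\epsilon$. So no $\epsilon''<\epsilon$ yields $(\epsilon'',0)$-DP for the induced mechanism, which shows that \Cref{thm:amplif} cannot be improved when $\qstar=1$.

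To cover $\delta>0$ as well, one can replace randomized response by a mechanism that, with probability $\delta$, outputs $\sample_1^{(1)}$ in the clear and otherwise runs randomized response. With the same pair $\data,\data'$, the divergence at level $e^\epsilon$ equals $\delta$ both before and after masking, by the same mask-invariance argument.
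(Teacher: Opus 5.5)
Your core idea is the paper's. You choose a MAR (here MCAR) process that never masks the coordinate the mechanism reads. Then for complete datasets the law of $\algo^0(\datamask(\mask))$ does not depend on $\mask$, and the induced mechanism reproduces $\algo^0$ exactly. The paper instantiates this with a Gaussian mechanism on $\sum_i I(\datamask)_i^{(j_0)}$ and concludes from pairwise equality of output laws.

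You use randomized response instead, with a two-sided argument. Step \textbf{II)} of the proof of \Cref{thm:MNAR} gives $\le$, and an explicit complete pair gives $\ge$. This buys a little robustness: it compares directly with $\delta_{\algo^0}(\epsilon)$ even when that is read as a supremum over incomplete neighbors. The paper's pairwise equality on complete datasets leaves that identification implicit.

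The weak point is your headline witness, which is vacuous:
\begin{itemize}
\item Pure randomized response has $\delta_{\algo^0}(\epsilon)=0$, so the displayed equality reads $0=0$.
\item By the same step \textbf{II)}, this equality holds for every missing data process $\featuremiss^{\otimes n}$. That includes MCAR processes with $\qstar<1$, for which \Cref{thm:amplif} gives a strictly smaller $\epsilon'$.
\item So that instance certifies nothing specific to the $\qstar=1$ regime. It is exactly the trivial case the statement is meant to exclude.
\item The paper's Gaussian choice avoids this automatically, since a Gaussian mechanism always has $\delta_{\algo^0}(\epsilon)>0$.
\end{itemize}

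The actual proof is your last paragraph:
\begin{itemize}
\item The mixture that releases $I(\datamask)_1^{(1)}$ with probability $\delta$, and otherwise runs randomized response, is $(\epsilon,\delta)$-DP.
\item The pair with $\sample^{(1)}_1=1$ and $\sample'^{(1)}_1=-1$ attains divergence exactly $\delta$ at level $e^\epsilon$; take $S$ to be the event that the output is $1$.
\item Hence $\delta_{\algo^0}(\epsilon)=\delta$, and mask-invariance transfers this value to the induced mechanism.
\end{itemize}
Lead with that construction rather than presenting it as an afterthought. Keep the $\epsilon$-tightness remark: it makes explicit what the paper's law-equality argument implies but never states, namely that no $\epsilon''<\epsilon$ works when $\qstar=1$.

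One slip: nonnegativity of $D_{e^\epsilon}$ comes from $S=\emptyset$. Taking the whole output space gives $1-e^{\epsilon}<0$.
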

\end{mdframed}

\begin{proof}
Fix $j_0\in[d]$.
We explicitly build a missing data process and a Gaussian mechanism
satisfying the claim.

\textbf{Missing data process.}

Let us define a MAR missing data process $\mathcal D^0$ such that the coordinate
$(i,j_0)$ is always observed for every $i\in[n]$, that is for all
$\data\in\datadomain^n$,
\begin{align*}
\Prob{
\mathcal D^{0}(\data)
\in
\{
\mask\in\{0,1\}^{n\times d}
\mid
\mask_i^{(j_0)}=0
\ \forall i\in[n]
\}
}
=
1.
\end{align*}

As a consequence, for any neighboring datasets $\data\simeq\data'$ in $\missdatadomain^n$,
the differing record is always partially observed and therefore $\qstar=1$.

\textbf{Query and Gaussian mechanism.}

Define the real-valued query $\query:\mathcal Z^n_{\mathrm{miss}}\to\mathbb R$ by
\begin{align*}
\query(\tilde\data)
=
\sum_{i=1}^n I(\tilde\data)_i^{(j_0)}.
\end{align*}
where  $I : \missdatadomain^n \rightarrow \datadomain^n$ is a data imputation procedure that replaces all missing values by $0$ and leaves all other values unchanged and $I(\tilde{z})^{(j_0)}_i$ is the $j_0$-th feature of the row $i$ of the imputed dataset $I(\tilde{z})$.

Since $\mask_i^{(j_0)}=0$ almost surely under $\mathcal D^0$, we have
\begin{align*}
\query(\tilde\data(\mask))
=
\sum_{i=1}^n \data_i^{(j_0)}
\end{align*}
for all $\mask$ in the support of $\mathcal D^0(\data)$.

Since $\datadomain \subset [-1,1]^d$, we have 
\begin{align*}
\max_{i\in[n]}\|\sample_i\|_\infty
\le
1.
\end{align*}

It follows that for any two neighboring incomplete datasets
$\tilde\data\simeq\tilde\data'$ in $\missdatadomain^n$,
\begin{align*}
|\query(\tilde\data)-\query(\tilde\data')|
\le
2 ,
\end{align*}
and therefore the $\ell_2$-sensitivity of $\query$ satisfies
\begin{align*}
\querysens{2}
\le
2 .
\end{align*}

Choosing
\begin{align*}
\sigma
=
\frac{c\,\querysens{2}}{\epsilon}
\qquad
\text{with}
\qquad
c>\sqrt{2\ln(1.25/\delta)},
\end{align*}
the Gaussian mechanism
\begin{align*}
\algo^0(\tilde\data)
=
\query(\tilde\data)
+
\mathcal N(0,\sigma^2)
\end{align*}
is $(\epsilon,\delta)$-differentially private.

\paragraph{Equality of divergences.}

Let $\algo^0_{\datamiss^0}$ denote the mechanism incorporating $\algo^0$ and $\datamiss^0$.
For any dataset $\data\in\datadomain^n$ and any event $S\in\mathcal B(\mathbb R)$,
\begin{align*}
\Prob{\algo^0_{\datamiss^0}(\data)\in S}
&=
\sum_{\mask\in\{0,1\}^{n\times d}}
\Prob{\mathcal D^0(\data)=\mask}
\Prob{\algo^0(\tilde\data(\mask))\in S}
\\
&=
\Prob{\algo^0(\data)\in S},
\end{align*}
since
\begin{align*}
\query(\tilde\data(\mask))
=
\sum_{i=1}^n \data_i^{(j_0)}
\end{align*}
almost surely.

Therefore, for any neighboring datasets $\data\simeq\data'$,
\begin{align*}
D_{e^\epsilon}
\!\left(
\algo^0_{\datamiss^0}(\data)
\;\|\;
\algo^0_{\datamiss^0}(\data')
\right)
=
D_{e^\epsilon}
\!\left(
\algo^0(\data)
\;\|\;
\algo^0(\data')
\right).
\end{align*}

Taking the supremum over all neighboring pairs yields
\begin{align*}
\delta_{\algo^0_{\datamiss^0}}(\epsilon)
=
\delta_{\algo^0}(\epsilon).
\end{align*}
\end{proof}

\section{Building FWL queries}
\label{app:FWL}
In this section, we present several results that help practitioners build DP pipelines satisfying the FWL condition.

\subsection{Norm compatibility}

We first show the following norm compatibility for FWL queries.

\begin{mdframed}
\begin{restatable}{lemma}{PropositionFWLnorm}\label{prop:l1l2} 
If a query $\query$ is FWL with respect to  the $\|.\|_1$ norm, then $\query$ is FWL with respect to  any $\|.\|_p$ norm, for any $p>1$, with the same constants.
\end{restatable}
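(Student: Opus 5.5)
The plan is a direct norm comparison. The only fact needed is that for any vector $\mathbf{v} \in \mathbb{R}^k$ and any $p > 1$ (including $p = \infty$), $\|\mathbf{v}\|_p \leq \|\mathbf{v}\|_1$. So I would first record this inequality, then chain it with the FWL bound for $\|\cdot\|_1$ given by \Cref{def:fwl}.

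For the inequality itself, if $\mathbf{v} = 0$ there is nothing to prove. Otherwise, normalize to $\mathbf{u} = \mathbf{v}/\|\mathbf{v}\|_1$, so that every coordinate satisfies $|\mathbf{u}^{(l)}| \leq 1$. Since $p > 1$, this gives $|\mathbf{u}^{(l)}|^p \leq |\mathbf{u}^{(l)}|$ for each $l$. Summing over $l$ yields $\|\mathbf{u}\|_p^p \leq \|\mathbf{u}\|_1 = 1$, hence $\|\mathbf{u}\|_p \leq 1$, and homogeneity of the norm gives $\|\mathbf{v}\|_p \leq \|\mathbf{v}\|_1$. The case $p = \infty$ is immediate, since $\max_l |\mathbf{v}^{(l)}| \leq \sum_l |\mathbf{v}^{(l)}|$.

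Now fix any neighboring incomplete datasets $\datamask \simeq \datamask'$ in $\missdatadomain^n$ differing in record $\istar$. Apply the inequality with $\mathbf{v} = \query(\datamask) - \query(\datamask')$, and then use the FWL property with respect to $\|\cdot\|_1$ with constants $L_1, \dots, L_d$:
\begin{align*}
\|\query(\datamask) - \query(\datamask')\|_p \leq \|\query(\datamask) - \query(\datamask')\|_1 \leq \sum_{j=1}^d L_j \bigl|I(\datamask)_{\istar}^{(j)} - I(\datamask')_{\istar}^{(j)}\bigr|.
\end{align*}
This is exactly the defining inequality of \Cref{def:fwl} for $\|\cdot\|_p$ with the same constants $L_1, \dots, L_d$, which proves the claim.

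I do not expect any real obstacle. The only point needing a little care is to justify the elementary comparison $\|\cdot\|_p \leq \|\cdot\|_1$ cleanly, including the zero vector and the case $p = \infty$; the rest is a one-line chaining of inequalities.
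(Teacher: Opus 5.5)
Your proposal is correct and follows the paper's own proof: bound $\|\cdot\|_p$ by $\|\cdot\|_1$ and then apply the $\ell_1$ FWL inequality with the same constants. The only difference is that you also prove the elementary inequality $\|\mathbf{v}\|_p \le \|\mathbf{v}\|_1$, including the zero vector and $p=\infty$, whereas the paper simply cites it.
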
 
\end{mdframed}

\begin{proof}
Fix any $p>1$ and two neighboring datasets $\datamask \simeq \datamask'$ in
$\missdatadomain^n$.
Since $\query$ is FWL with respect to \ $\|\cdot\|_1$, we have
\begin{align*}
\|\query(\datamask)-\query(\datamask')\|_1
\le
\sum_{j=1}^d L_j\,\big|I(\samplemask)^{(j)}_{\istar} -I(\samplemask')^{(j)}_{\istar}\big|.
\end{align*}
Besides, for any vector $\mathbf{v} \in\mathbb{R}^k$ and any $p\ge 1$ we have
\begin{align*}
  \|\mathbf{v}\|_p \le \|\mathbf{v}\|_1.
\end{align*}
Applying the previous with
$\mathbf{v}=\query(\datamask)-\query(\datamask')$ for any neighboring datasets $\data, \data'$ in $\missdatadomain^n$ allows us to conclude the proof.
\end{proof}

\subsection{Canonical FWL queries}

We now present several basic classes of queries that satisfy the FWL property.
These examples cover many standard DP workloads: \textbf{(L)} linear dataset-level queries, \textbf{(S)} queries with bounded $\ell_1$ dataset sensitivity,
and \textbf{(Q)} quadratic forms on bounded domains.
\vspace{3ex}
\begin{mdframed}
\begin{restatable}{proposition}{PropositionFWLNativeA}\label{prop:typeA-clean}

Let $\query:\missdatadomain^n\to\mathbb{R}^k$ be a dataset-level query 

\textbf{(L) Linear dataset-level queries:} for all $\datamask \in \missdatadomain^n$ the linear query $\query$ is defined by
\begin{align*}
\query(\datamask)=\sum_{i=1}^n B_i I(\samplemask)_i, \quad \text{with }
 B_i\in\mathbb{R}^{k\times d}.
\end{align*}
Then $\query$ is FWL under $\|\cdot\|_1$ with constants defined for all $j \in [d]$ as
\begin{align*}
L_j=\max_{i\in[n]} \|B_i e_j\|_1,
\end{align*}
where $e_j$ denotes the $j$-th canonical basis vector of $\mathbb{R}^d$.

\textbf{(S) Dataset-level $\ell_1$-Lipschitz queries:} for all $\datamask, \datamask'$ in $\missdatadomain^n$, we define the $L$-Lipschitz query by
\begin{align*}
  \|\query(\datamask)-\query(\datamask')\|_1 \le L\, \sum_{i=1}^n \|I(\samplemask)_i-I(\samplemask')_i\|_1.
\end{align*}

Then, $\query$ is FWL under $\|\cdot\|_1$ with constants $L_j = L$,  for all $j\in[d]$.

\paragraph{(Q) Quadratic queries on a bounded domain.}
Assume $\missdatadomain\subset([-B,B]\cup\{\texttt{NA}\})^d$. We define the quadratic query for all $\datamask$ in $\missdatadomain$ by
\begin{align*}
\query(\datamask)=\frac1n\sum_{i=1}^n I(\samplemask)_iI(\samplemask)_i^\top \in \mathbb{R}^{d\times d}.
\end{align*}
Then, the vectorized query $\bar\query$ is FWL under $\|\cdot\|_1$ with constants defined for all $j \in [d]$ by
\begin{align*}
  L_j=\frac{2Bd}{n}.
\end{align*}
\end{restatable}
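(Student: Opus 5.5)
The plan is to verify the three claims separately. For each, fix neighboring incomplete datasets $\datamask \simeq \datamask'$ that differ only in row $\istar$. The imputed datasets $I(\datamask)$ and $I(\datamask')$ then coincide on every row $i \neq \istar$, since imputation by zero acts row-wise and entry-wise. In each case the strategy is to write $\query(\datamask) - \query(\datamask')$ as a function of the $\istar$-th imputed rows only, and then bound its $\ell_1$ norm by a weighted sum of coordinate differences $|I(\datamask)_{\istar}^{(j)} - I(\datamask')_{\istar}^{(j)}|$, as Definition~\ref{def:fwl} requires.

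\textbf{(L) Linear queries.} All terms with $i \neq \istar$ cancel, so the difference equals $B_{\istar}(I(\datamask)_{\istar} - I(\datamask')_{\istar})$. Expand this vector as $\sum_j B_{\istar} e_j \, v^{(j)}$, where $v = I(\datamask)_{\istar} - I(\datamask')_{\istar}$. The triangle inequality then gives $\|\cdot\|_1 \le \sum_j \|B_{\istar} e_j\|_1 |v^{(j)}|$, and each $\|B_{\istar} e_j\|_1$ is at most $\max_i \|B_i e_j\|_1 = L_j$.

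\textbf{(S) Lipschitz queries.} The right-hand side of the Lipschitz assumption reduces to the single term $L\|I(\datamask)_{\istar} - I(\datamask')_{\istar}\|_1 = \sum_j L |v^{(j)}|$. This is immediately the FWL form with $L_j = L$.

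\textbf{(Q) Quadratic queries.} This is the main computation. Write $a = I(\datamask)_{\istar}$ and $b = I(\datamask')_{\istar}$. Imputation sends \texttt{NA} to $0 \in [-B,B]$, so both vectors lie in $[-B,B]^d$. The difference of the vectorized query is $\frac1n \mathrm{vec}(aa^\top - bb^\top)$, and I would use the identity $aa^\top - bb^\top = (a-b)a^\top + b(a-b)^\top$. The entrywise $\ell_1$ norm of $(a-b)a^\top$ is $\|a-b\|_1 \|a\|_1 \le \|a-b\|_1 \, Bd$, and the same bound holds for $b(a-b)^\top$. Summing the two gives $\frac{2Bd}{n}\sum_j |a^{(j)} - b^{(j)}|$, which yields $L_j = 2Bd/n$.

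The only step needing care is this last one. One must note that the rank-one outer product factorizes in $\ell_1$, i.e.\ $\|uw^\top\|_1 = \|u\|_1\|w\|_1$, and that the bound $\|a\|_1 \le Bd$ is exactly where the factor $d$ in the constant comes from. One must also confirm that the zero imputation keeps entries bounded by $B$. All remaining steps are direct cancellations.
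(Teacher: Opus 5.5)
Your proposal is correct and follows the paper's proof essentially step for step. In (L) you cancel the non-differing rows and then use the canonical-basis expansion with the triangle inequality; in (S) you reduce the sum directly to the single row $\istar$; and in (Q) you use the identity $aa^\top - bb^\top = (a-b)a^\top + b(a-b)^\top$, the rank-one $\ell_1$ factorization, and $\|a\|_1 \le Bd$. Your explicit check that zero imputation keeps the imputed entries in $[-B,B]$ makes precise a point the paper glosses over.
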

\end{mdframed}

\begin{proof}
We prove each item separately. Let us fix two neighboring datasets $\datamask\simeq\datamask'$ in $\missdatadomain^n$ differing in a unique record $\istar$.

\paragraph{Proof of (L).}
We have
\begin{align*}
\query(\datamask)-\query(\datamask')
&= \sum_{i=1}^n B_i I(\samplemask)_i - \sum_{i=1}^n B_i I(\samplemask')_i
= B_{\istar}I(\samplemask)_{\istar}-B_{\istar}I(\samplemask')_{\istar}
= B_{\istar}(I(\samplemask)_{\istar}-I(\samplemask')_{\istar}).
\end{align*}
Next, we expand the difference $I(\samplemask)_{\istar}-I(\samplemask')_{\istar}$ in the canonical basis of $\mathbb{R}^d$
\begin{align*}
I(\samplemask)_{\istar}-I(\samplemask')_{\istar}
= \sum_{j=1}^d \big(I(\samplemask)_{\istar}^{(j)}-I(\samplemask')_{\istar}^{(j)}\big)\,e_j.
\end{align*}
By linearity of $B_{\istar}$, it yields
\begin{align*}
B_{\istar}(I(\samplemask)_{\istar}-I(\samplemask')_{\istar})
= \sum_{j=1}^d \big(I(\samplemask)_{\istar}^{(j)}-I(\samplemask')_{\istar}^{(j)}\big)\,B_{\istar}e_j.
\end{align*}

Taking $\|.\|_1$ norm and applying the triangle inequality we obtain
\begin{align*}
\|\query(\datamask)-\query(\datamask')\|_1
&= \big\|B_{\istar}(I(\samplemask)_{\istar}-I(\samplemask_{\istar})')\big\|_1 \\
&= \Big\|\sum_{j=1}^d \big(I(\samplemask)_{\istar}^{(j)}-I(\samplemask')_{\istar}^{(j)}\big)\,B_{\istar}e_j\Big\|_1 \\
&\le \sum_{j=1}^d \big|I(\samplemask)_{\istar}^{(j)}-I(\samplemask')_{\istar}^{(j)}\big|\,\|B_{\istar}e_j\|_1.
\end{align*}
We define $L_j \eqdef \max_{i\in[n]}\|B_i e_j\|_1$. Then, by definition, $\|B_{\istar}e_j\|_1\le L_j$ for all $j$, hence,
\begin{align*}
\|\query(\datamask)-\query(\datamask')\|_1
\le \sum_{j=1}^d L_j\,\big|I(\samplemask)_{\istar}^{(j)}-I(\samplemask')_{\istar}^{(j)}\big|.
\end{align*}

\paragraph{Proof of (S).}
By assumption, we have
\begin{align*}
\|\query(\datamask)-\query(\datamask')\|_1 \le L\sum_{i=1}^n \|I(\samplemask)_i-I(\samplemask')_i\|_1.
\end{align*}
Since, for all $i\neq \istar$, $I(\samplemask)_i=I(\samplemask')_i$, we obtain
\begin{align*}
\sum_{i=1}^n \|I(\samplemask)_i-I(\samplemask')_i\|_1 = \|I(\samplemask)_{\istar}-I(\samplemask')_{\istar}\|_1.
\end{align*}
Expanding the $\ell_1$ norm coordinate-wise yields to
\begin{align*}
\|I(\samplemask)_{\istar}-I(\samplemask')_{\istar}\|_1
= \sum_{j=1}^d |I(\samplemask)_{\istar}^{(j)}-I(\samplemask')_{\istar}^{(j)}|.
\end{align*}
Therefore,
\begin{align*}
\|\query(\datamask)-\query(\datamask')\|_1
\le\sum_{j=1}^d L\,|I(\samplemask)_{\istar}^{(j)}-I(\samplemask')_{\istar}^{(j)}|.
\end{align*}
This is the FWL inequality under $\|\cdot\|_1$ with constants $L_j=L$.

\paragraph{Proof of (Q).}
We have
\begin{align*}
\query(\datamask)-\query(\datamask')
=\frac{1}{n}\Big(I(\samplemask)_{\istar}I(\samplemask)_{\istar}^\top
- I(\samplemask')_{\istar}I(\samplemask')_{\istar}^\top\Big).
\end{align*}
Using the identity
\begin{align*}
I(\samplemask)_{\istar}I(\samplemask)_{\istar}^\top
- I(\samplemask')_{\istar}I(\samplemask')_{\istar}^\top
=
(I(\samplemask)_{\istar}-I(\samplemask')_{\istar})I(\samplemask)_{\istar}^\top
+
I(\samplemask)_{\istar}'(I(\samplemask)_{\istar}-I(\samplemask')_{\istar})^\top,
\end{align*}
and the triangle inequality, we obtain
\begin{align*}
\big\|\bar\query(\datamask)-\bar\query(\datamask')\big\|_1
&=\frac{1}{n}\Big\|\mathrm{vec}\big(I(\samplemask)_{\istar}I(\samplemask)_{\istar}^\top
- I(\samplemask')_{\istar}I(\samplemask')_{\istar}^\top\big)\Big\|_1\\
&\le \frac{1}{n}\Big(
\big\|\mathrm{vec}\big((\samplemask_{\istar}-\samplemask_{\istar}')\samplemask_{\istar}^\top\big)\big\|_1
+
\big\|\mathrm{vec}\big(I(\samplemask)_{\istar}'(I(\samplemask)_{\istar}-I(\samplemask')_{\istar})^\top\big)\big\|_1
\Big).
\end{align*}
For any vectors $\mathbf{u},\mathbf{v}\in\mathbb{R}^d$, the rank-one matrix $\mathbf{u}\mathbf{v}^\top$ satisfies
\begin{align*}
\big\|\mathrm{vec}(\mathbf{u}\mathbf{v}^\top)\big\|_1
=\sum_{k=1}^d\sum_{\ell=1}^d |u_k v_\ell|
=\Big(\sum_{k=1}^d |u_k|\Big)\Big(\sum_{\ell=1}^d |v_\ell|\Big)
=\|\mathbf{u}\|_1\|\mathbf{v}\|_1.
\end{align*}
Applying this identity yields to
\begin{align*}
\big\|\bar\query(\datamask)-\bar\query(\datamask')\big\|_1
&\le \frac{1}{n}\Big(
\|I(\samplemask)_{\istar}-I(\samplemask')_{\istar}\|_1\,\|I(\samplemask)_{\istar}\|_1
+
\|I(\samplemask')_{\istar}\|_1\,\|I(\samplemask)_{\istar}-I(\samplemask')_{\istar}\|_1
\Big)\\
&=\frac{\|I(\samplemask)_{\istar}\|_1+\|I(\samplemask')_{\istar}\|_1}{n}\,\|I(\samplemask)_{\istar}-I(\samplemask')_{\istar}\|_1.
\end{align*}
Since $\samplemask_{\istar},\samplemask_{\istar}'\in[-B,B]^d$, we have $\|I(\samplemask)_{\istar}\|_1,\|I(\samplemask')_{\istar}\|_1\le dB$, and therefore
\begin{align*}
\big\|\bar\query(\datamask)-\bar\query(\datamask')\big\|_1
&\le \frac{2dB}{n}\,\|I(\samplemask)_{\istar}-I(\samplemask')_{\istar}'|_1
= \frac{2dB}{n}\sum_{j=1}^d
\big|I(\samplemask)_{\istar}^{(j)}-I(\samplemask')_{\istar}^{(j)}\big|.
\end{align*}
Thus $\bar\query$ is FWL under $\|\cdot\|_1$ with constants $L_j=\frac{2dB}{n}$ for all $j\in[d]$.

\end{proof}

\subsection{Closure properties}

Beyond basic queries, the FWL property is preserved under the following operations.

\begin{mdframed}
  \begin{proposition}\label{prop:FWL-closure}
Let $\query:\missdatadomain^n\to\mathbb{R}^k$ be an FWL query with constants $(L_j)_{j\in[d]}$ under the norm $\|.\|$.
Let $\psi:\mathbb{R}^k\to\mathbb{R}^{k'}$ be $\Lambda$-Lipschitz with $k' \geq 1$ with respect to  the same norm $\|.\|$.

Then $\psi\circ \query$ is FWL with constants $(\Lambda L_j)_{j\in[d]}$ under $\|.\|$.
\end{proposition}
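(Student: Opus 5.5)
The plan is a direct composition argument that chains the Lipschitz bound of $\psi$ with the FWL inequality of $\query$. Fix two neighboring incomplete datasets $\datamask \simeq \datamask'$ in $\missdatadomain^n$ that differ in the single record $\istar$.

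First, since $\psi$ is $\Lambda$-Lipschitz with respect to $\|\cdot\|$, applying it to the two vectors $\query(\datamask), \query(\datamask') \in \mathbb{R}^k$ gives
\begin{align*}
\|\psi(\query(\datamask)) - \psi(\query(\datamask'))\| \le \Lambda \|\query(\datamask) - \query(\datamask')\|.
\end{align*}

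Second, the FWL property of $\query$ (\Cref{def:fwl}) bounds the right-hand side by
\begin{align*}
\Lambda \sum_{j=1}^d L_j \,\bigl|I(\datamask)_{\istar}^{(j)} - I(\datamask')_{\istar}^{(j)}\bigr| = \sum_{j=1}^d (\Lambda L_j)\,\bigl|I(\datamask)_{\istar}^{(j)} - I(\datamask')_{\istar}^{(j)}\bigr|.
\end{align*}
The constants $\Lambda L_j$ are nonnegative because $\Lambda \ge 0$ and $L_j \ge 0$. Since the pair of neighbors was arbitrary, this is exactly the FWL inequality for $\psi \circ \query$ with constants $(\Lambda L_j)_{j\in[d]}$.

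The only point needing care is the norms. The statement says ``the same norm'', but the domain of $\psi$ is $\mathbb{R}^k$ and its codomain is $\mathbb{R}^{k'}$. I will therefore read the hypothesis as: $\psi$ is Lipschitz from $(\mathbb{R}^k,\|\cdot\|)$ to $(\mathbb{R}^{k'},\|\cdot\|)$, where $\|\cdot\|$ denotes the same family of norms in each dimension, e.g.\ $\|\cdot\|_p$. With that reading, FWL for the composite holds with respect to the same norm, and there is no real obstacle.
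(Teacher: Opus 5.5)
Your proposal is correct and follows the paper's proof: fix neighboring incomplete datasets, apply the $\Lambda$-Lipschitz bound of $\psi$, then the FWL inequality for $f_I$, and absorb $\Lambda$ into the constants. Your reading of ``the same norm'' as the same family of norms on $\mathbb{R}^k$ and $\mathbb{R}^{k'}$ (e.g.\ $\|\cdot\|_p$) is a sensible clarification that the paper leaves implicit.
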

\end{mdframed}

\begin{proof}
Fix two neighboring datasets $\datamask \simeq \datamask'$ in $\missdatadomain^n$.

Since $\psi$ is $\Lambda$-Lipschitz
(with respect to the same norm, we have
\begin{align*}
\|\psi(\query(\datamask))-\psi(\query(\datamask'))\|
\le \Lambda \,\|\query(\datamask)-\query(\datamask')\|.
\end{align*}
Because $\query$ is FWL with constants $(L_j)_{j\in[d]}$, it follows that
\begin{align*}
\|\query(\datamask)-\query(\datamask')\|
\le \sum_{j=1}^d L_j\,\big|I(\tilde{\sample})^{(j)}_{\istar} - I(\tilde{\sample'})^{(j)}_{\istar} \big|.
\end{align*}
Combining the two inequalities yields to
\begin{align*}
\|(\psi\circ\query)(\datamask)-(\psi\circ\query)(\datamask')\|
\le \sum_{j=1}^d \Lambda L_j\big|I(\samplemask)^{(j)}_{\istar} - I(\samplemask')^{(j)}_{\istar}\big|.
\end{align*}
\end{proof}
\begin{mdframed}
\begin{proposition}\label{prop:FWL-lincomb}
Let $\|\cdot\|$ be any norm on $\mathbb{R}^k$. For $l\in[L]$, let
$f_{I,l}:\missdatadomain^n\to\mathbb{R}^k$ be FWL under\ $\|\cdot\|$ with
constants $(L^{(r)}_{j})_{j\in[d]}$. Let $a_1,\dots,a_L\in\mathbb{R}$. We define for any $\datamask \in \missdatadomain^n$
\begin{align*}
g_{\text{LC}}(\datamask) = \sum_{l=1}^L a_l\,f_{I,l}(\datamask).
\end{align*}
Then $g_{\text{LC}}$ is FWL under \ $\|\cdot\|$ with constants $(L_j)_{j\in[d]}$ given by, for all $ j\in[d]$,
\begin{align*}
L_j = \sum_{l=1}^L |a_l|\,L^{(l)}_{j}.
\end{align*}
\end{proposition}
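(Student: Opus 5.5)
Fix two neighboring datasets $\datamask \simeq \datamask'$ in $\missdatadomain^n$ that differ only in record $\istar$. The plan is to expand the difference $g_{\text{LC}}(\datamask) - g_{\text{LC}}(\datamask')$ by linearity as $\sum_{l=1}^L a_l\big(f_{I,l}(\datamask) - f_{I,l}(\datamask')\big)$, and then apply the triangle inequality and absolute homogeneity of the norm $\|\cdot\|$. This gives
\begin{align*}
\|g_{\text{LC}}(\datamask) - g_{\text{LC}}(\datamask')\| \le \sum_{l=1}^L |a_l|\,\|f_{I,l}(\datamask) - f_{I,l}(\datamask')\|.
\end{align*}

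Next, we invoke the FWL property (Definition~\ref{def:fwl}) of each $f_{I,l}$ with respect to $\|\cdot\|$. All the queries share the same imputation $I$, and the differing record $\istar$ is the same for every $l$. Hence each term is bounded by $|a_l| \sum_{j=1}^d L^{(l)}_j |I(\datamask)^{(j)}_{\istar} - I(\datamask')^{(j)}_{\istar}|$. Note that the factor $|a_l| \ge 0$, so the inequalities are preserved. Exchanging the two finite sums over $l$ and $j$ then yields $\sum_{j=1}^d \big(\sum_{l} |a_l| L^{(l)}_j\big)\,|I(\datamask)^{(j)}_{\istar} - I(\datamask')^{(j)}_{\istar}|$. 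This is exactly the FWL inequality with constants $L_j = \sum_l |a_l| L^{(l)}_j \in \mathbb{R}_+$.

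There is no real obstacle here. The only point to check is that the factor multiplying each feature difference is identical across the $l$ queries, namely that they use the same imputation and the same differing row. This is what allows the sums to be swapped and the constants to be aggregated. The superscript typo $L^{(r)}_j$ in the statement should be read as $L^{(l)}_j$.
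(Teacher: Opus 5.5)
Your proposal is correct and matches the paper's proof: triangle inequality and absolute homogeneity of the norm, then the FWL bound for each $f_{I,l}$ at the common differing row $i_\ast$, then regrouping the finite sums into the constants $L_j = \sum_{l} |a_l| L^{(l)}_j$. You are also right that $L^{(r)}_j$ in the statement is a typo for $L^{(l)}_j$.
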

\end{mdframed}

\begin{proof}
Fix neighboring datasets $\tilde{\data}\simeq\tilde{\data}' \in \missdatadomain^n$. By the triangle inequality and homogeneity of the norm, we obtain
\begin{align*}
  \|g_{\text{LC}}(\datamask)-g_{\text{LC}}(\datamask')\|
=
\Big\|\sum_{l=1}^L a_l\big(f_{I,l}(\datamask)-f_{I,l}(\datamask')\big)\Big\|
\le
\sum_{r=l}^L |a_l|\,\|f_{I,l}(\datamask)-f_{I,l}(\datamask')\|.
\end{align*}

Since each $f_{I,l}$ is FWL, we have
\begin{align*}
\|f_{I,l}(\datamask)-f_{I,l}(\datamask')\|
\le
\sum_{j=1}^d L^{(l)}_{j}\,\big|\ I(\samplemask)^{(j)}_{\istar} - I(\samplemask')^{(j)}_{\istar}\big|.
\end{align*}
Combining the previous inequalities yields 
\begin{align*}
\|g_{\text{LC}}(\datamask)-g_{\text{LC}}(\datamask')\|
\le
\sum_{j=1}^d\Big(\sum_{l=1}^L |a_l|\,L^{(l)}_{j}\Big)
\big|I(\samplemask)^{(j)}_{\istar} - I(\samplemask')^{(j)}_{\istar} \big|.
\end{align*}
\end{proof}

\section{Proofs of Section~\ref{sec:fwl}}

\subsection{Auxiliary Lemmas}\label{app:plemma:la}
\vspace{3ex}
\begin{mdframed}
\begin{restatable}{lemma}{LemmaLap}\label{lemma:la}
Let $\query : \missdatadomain^n \to \mathbb{R}^k$ be a query with $\ell_1$ sensitivity $\querysens{1} \leq \Delta$.
Then $\lap$ is $\big((\querysens{1} / \Delta)\epsilon, 0\big)$-DP.
\end{restatable}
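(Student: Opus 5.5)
The plan is the standard density-ratio argument for the Laplace mechanism, keeping the actual sensitivity $\querysens{1}$ separate from the calibration constant $\Delta$. In this lemma, $\lap$ is understood to use scale $b = \Delta/\epsilon$, which is how it is applied later with $\Delta = C_1$ or $\tilde{C}_1$. Fix two neighboring incomplete datasets $\datamask \simeq \datamask'$ in $\missdatadomain^n$, and write $u = \query(\datamask)$ and $u' = \query(\datamask')$, both in $\mathbb{R}^k$. The output $\lap(\datamask)$ has density
\[
p(y) = \prod_{i=1}^k \frac{1}{2b}\exp\!\left(-\frac{|y_i - u_i|}{b}\right),
\]
and $\lap(\datamask')$ has density $p'$ defined in the same way with $u'$ in place of $u$.

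\textbf{Step 1: pointwise ratio bound.} For every $y \in \mathbb{R}^k$, apply the reverse triangle inequality $|y_i - u'_i| - |y_i - u_i| \le |u_i - u'_i|$ coordinatewise. This gives
\[
\frac{p(y)}{p'(y)} = \exp\!\left(\frac{1}{b}\sum_{i=1}^k \big(|y_i-u'_i| - |y_i-u_i|\big)\right) \le \exp\!\left(\frac{\|u-u'\|_1}{b}\right).
\]
By the definition of $\querysens{1}$ as a supremum over neighboring pairs, $\|u - u'\|_1 \le \querysens{1}$. With $b = \Delta/\epsilon$ the ratio is therefore at most $\exp\big((\querysens{1}/\Delta)\epsilon\big)$.

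\textbf{Step 2: integrate over events.} For any Borel set $S$, integrating the pointwise bound over $S$ gives
\[
\Prob{\lap(\datamask) \in S} \le e^{(\querysens{1}/\Delta)\epsilon}\, \Prob{\lap(\datamask') \in S}.
\]
Equivalently, $D_{e^{(\querysens{1}/\Delta)\epsilon}}\big(\lap(\datamask)\,\|\,\lap(\datamask')\big) \le 0$. Taking the supremum over neighboring pairs and using \Cref{thm:dp_barthe} (or directly \Cref{def:DP} with $\delta = 0$) yields $\big((\querysens{1}/\Delta)\epsilon, 0\big)$-DP.

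The main obstacle is not technical but a matter of bookkeeping. First, the scale must be pinned to $\Delta/\epsilon$ rather than $\querysens{1}/\epsilon$. Second, the argument must use only the supremum-based definition of $\querysens{1}$ on $\missdatadomain^n$. If $\querysens{1} = 0$, the two output distributions coincide and the claim holds trivially, so that case needs no separate treatment. The hypothesis $\querysens{1} \le \Delta$ only serves to guarantee that the resulting privacy parameter is at most $\epsilon$; it is not needed for the inequality itself.
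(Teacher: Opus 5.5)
Your proof is correct and follows the same route as the paper's. Both bound the Laplace density ratio pointwise by $\exp(\querysens{1}/b)$ using the coordinatewise triangle inequality and then integrate over $S$. You also make the calibration $b=\Delta/\epsilon$ explicit, which the paper leaves implicit by ending at ``$(\querysens{1}/b,0)$-DP''.
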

\end{mdframed}

\begin{proof}
Fix neighboring datasets $\tilde{\data} \simeq \tilde{\data}'$ in $\missdatadomain^n$.
For $t\in\mathbb{R}$, the Laplace density with scale $b$ is defined by
\begin{align*}
  \ell_b(t) = \frac{1}{2b}\exp\!\left(-\frac{|t|}{b}\right).
\end{align*}

By independence, the density of $\lap(\tilde{\data})$ at $\mathbf{t}=(t_1,\dots,t_k)$ is denoted by
\begin{align*}
q_{\tilde{\data}}(\mathbf{t})
=
\prod_{i=1}^k \ell_b\!\bigl(t_i-\query(\tilde{\data})_i\bigr)
=
\left(\frac{1}{2b}\right)^k
\exp\!\left(
-\frac{1}{b}\sum_{i=1}^k \bigl|t_i-\query(\tilde{\data})_i\bigr|
\right).
\end{align*}

Similarly, the density of $\lap(\tilde{\data}')$ is
\begin{align*}
q_{\tilde{\data}'}(\mathbf{t})
=
\left(\frac{1}{2b}\right)^k
\exp\!\left(
-\frac{1}{b}\sum_{i=1}^k \bigl|t_i-\query(\tilde{\data}')_i\bigr|
\right).
\end{align*}
Therefore,
\begin{align*}
\frac{q_{\data}(\mathbf{t})}{q_{\data'}(\mathbf{t})}
&=
\exp\!\left(
\frac{1}{b}\sum_{i=1}^k
\Bigl(\bigl|t_i-\query(\tilde{\data}')_i\bigr|-\bigl|t_i-\query(\tilde{\data})_i\bigr|\Bigr)
\right).
\end{align*}
Using the triangle inequality, we obtain for all $i \in [k]$
\begin{align*}
\bigl|t_i-\query(\tilde{\data}')\bigr|-\bigl|t_i-\query(\tilde{\data})_i\bigr|
\le
\bigl|\bigl(t_i-\query(\tilde{\data}')_i\bigr)-\bigl(t_i-\query(\tilde{\data})_i\bigr)\bigr|
=
\bigl|\query(\tilde{\data})_i-\query(\tilde{\data}')_i\bigr|.  
\end{align*}

Summing over $i\in[k]$ and using the definition of $\querysens{1}$ yields
\begin{align*}
\sum_{i=1}^k
\Bigl(\bigl|t_i-\query(\tilde{\data}')_i\bigr|-\bigl|t_i-\query(\tilde{\data})_i\bigr|\Bigr)
\le
\sum_{i=1}^k |\query(\tilde{\data})_i-\query(\tilde{\data}')_i|
=
\|\query(\tilde{\data})-\query(\tilde{\data}')\|_1
\le
\querysens{1}.  
\end{align*}

Hence, for all $\mathbf{t}\in\mathbb{R}^k$, we have
\begin{align*}
\frac{q_{\tilde{\data}}(\mathbf{t})}{q_{\tilde{\data}'}(\mathbf{t})}
\le
\exp\!\left(\frac{\querysens{1}}{b}\right).
\end{align*}

Let \(S\subseteq\mathbb{R}^k\) be any measurable set in $\mathcal{E}$. Integrating over \(S\) yields to
\begin{align*}
\Prob{\lap(\tilde{\data})\in S}
=
\int_S q_{\tilde{\data}}(\mathbf{t})\,d\mathbf{t}
\le
\exp\!\left(\frac{\querysens{1}}{b}\right)\int_S q_{\tilde{\data}'}(\mathbf{t})\,d\mathbf{t}
=
\exp\!\left(\frac{\querysens{1}}{b}\right)\Prob{\lap(\tilde{\data}')\in S},
\end{align*}
which proves that $\lap$ is $(\querysens{1}/b,0)$-differentially private.
\end{proof}

\begin{mdframed}
\begin{restatable}{lemma}{LemmaGauss}\label{lemma:gauss}
  Let $\query : \missdatadomain^n \to \mathbb{R}^k$ be a query with $\ell_2$ sensitivity $\querysens{2} \leq \Delta$.
Then $\gauss$ is $\big((\querysens{2} / \Delta)\epsilon, \delta\big)$-DP.
\end{restatable}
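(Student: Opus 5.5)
The plan is to reduce the statement to the classical analysis of the Gaussian mechanism (\citet{dwork2014algorithmic}, as recalled in \Cref{sec:ampliffwl2}) by a rescaling of the privacy parameter, exactly as in the Laplace case of \Cref{lemma:la}. Throughout, the Gaussian mechanism $\gauss$ is understood with the calibration used in the paper, namely $\sigma = c\,\Delta/\epsilon$ with $\epsilon\in(0,1]$, $\delta\in(0,1]$ and a fixed $c>\sqrt{2\ln(1.25/\delta)}$. Set $\tilde\epsilon \eqdef (\querysens{2}/\Delta)\epsilon$. The key observation is the identity
\begin{align*}
\sigma = \frac{c\,\Delta}{\epsilon} = \frac{c\,\querysens{2}}{\tilde\epsilon},
\end{align*}
valid whenever $\querysens{2}>0$. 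It says that the same noise level is exactly the classical calibration for the \emph{true} sensitivity $\querysens{2}$ at privacy level $\tilde\epsilon$.

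Next I check that the hypotheses of the classical theorem hold at level $(\tilde\epsilon,\delta)$. Since $\querysens{2}\le\Delta$, we have $0<\tilde\epsilon\le\epsilon\le 1$, so $\tilde\epsilon\in(0,1]$. The constant $c$ is unchanged and its admissibility condition $c>\sqrt{2\ln(1.25/\delta)}$ depends only on $\delta$, not on $\epsilon$. Hence the classical Gaussian mechanism guarantee applies verbatim and yields that $\gauss$ is $(\tilde\epsilon,\delta)$-DP on $\missdatadomain^n$.

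For completeness, I would also recall the mechanism of the classical argument, in case the reader wants a self-contained check. Fix $\datamask\simeq\datamask'$ and let $v=\query(\datamask)-\query(\datamask')$, so that $\|v\|_2\le\querysens{2}$. The privacy loss at an output $\query(\datamask)+Y$ is the one-dimensional Gaussian variable
\begin{align*}
\frac{\langle v, Y\rangle}{\sigma^2}+\frac{\|v\|_2^2}{2\sigma^2}.
\end{align*}
It suffices to show that this loss exceeds $\tilde\epsilon$ with probability at most $\delta$. This follows from a standard Gaussian tail bound, using $\|v\|_2/\sigma\le\tilde\epsilon/c$, the choice of $c$, and $\tilde\epsilon\le 1$; the last condition is what allows the quadratic term $\|v\|_2^2/(2\sigma^2)$ to be absorbed. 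Splitting any event $S$ according to whether the loss exceeds $\tilde\epsilon$ then gives $\Prob{\gauss(\datamask)\in S}\le e^{\tilde\epsilon}\Prob{\gauss(\datamask')\in S}+\delta$.

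The degenerate case $\querysens{2}=0$ is handled separately. There $\query(\datamask)=\query(\datamask')$ for all neighbors, so the output laws coincide and $D_{e^{\eta}}$ vanishes for every $\eta>0$. The mechanism is therefore $(\eta,\delta)$-DP for all $\eta>0$, which is the meaningful reading of the statement with zero privacy parameter.

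The only step I expect to need care is confirming that the classical theorem's restriction $\epsilon\le 1$ is preserved after rescaling. This is exactly where $\querysens{2}\le\Delta$ is used, since it guarantees $\tilde\epsilon\le\epsilon\le 1$. Everything else is a direct substitution.
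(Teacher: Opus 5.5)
Your proposal is correct and is essentially the paper's proof. Both set $\tilde\epsilon = (\querysens{2}/\Delta)\epsilon$, rewrite $\sigma = c\Delta/\epsilon = c\,\querysens{2}/\tilde\epsilon$, and invoke the classical Gaussian mechanism theorem at level $(\tilde\epsilon,\delta)$; your check that $\tilde\epsilon \le \epsilon \le 1$ (so the classical hypothesis still holds) and your treatment of the degenerate case $\querysens{2}=0$ are details the paper leaves implicit, not a different route.
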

\end{mdframed}

\begin{proof}
  Let us set
  \begin{align}
    \epsilon' = \frac{\querysens{2}}{\Delta}\epsilon.
  \end{align}
  Then, we have
  \begin{align}
    \sigma = \frac{c \Delta}{\epsilon} = \frac{c \querysens{2}}{\epsilon'}
  \end{align}
  Applying Theorem A.1 \cite{dwork2006differential}, the Gaussian mechanism with sensitivity $\querysens{2}$ is $(\epsilon', \delta)$-DP. Hence $\gauss$ is $\big((\querysens{2} / \Delta)\epsilon, \delta\big)$-DP.
\end{proof}

\subsection{Proof of \Cref{lemma:sensmiss}}\label{app:sensmiss}
\vspace{3ex}
\begin{mdframed}
  \Propsensitivity*
\end{mdframed}

\begin{proof}
Let us fix two neighboring datasets $\data\simeq\data'$ and let $i^\star$ be the index of the differing
record. We also fix two mask $\mask, \mask'$ respectively in $\textrm{supp}(\datamiss(\data)$ and 
$\textrm{supp}(\datamiss(\data')$ such that 
$\tilde{\data}(\mask)\simeq \tilde{\data}'(\mask')$.

By \Cref{ass:proportion_observed_entries},
\begin{align}
    |\operatorname{obs}(\masksample_{\istar})|\le \lfloor \rho d\rfloor,
    \qquad
    |\operatorname{obs}(\masksample'_{\istar})|\le \lfloor \rho d\rfloor. \label{eq:cardistar}
\end{align}

Since $\datadomain\subset[-1,1]^d$, for any $j \in [d]$, we have
\begin{align*}
\begin{aligned}
\left|
    I(\tilde \sample)_{\istar}^{(j)}(\mask)
    -
    I(\tilde \sample')_{\istar}^{(j)}(\mask')
\right|
&\le
    |\sample_{\istar}^{(j)}|\Ind{\{j\in \operatorname{obs}(\masksample_{\istar})\}}
    +
    |\sample_{\istar}^{\prime(j)}|\Ind{\{j\in \operatorname{obs}(\masksample'_{\istar})\}}  \\
&\le
    \Ind{\{j\in \operatorname{obs}(\masksample_{\istar})\}}
    +
    \Ind{\{j\in \operatorname{obs}(\masksample'_{\istar})\}} .
\end{aligned}
\end{align*}
Since $\query$ is FWL,
\begin{align*}
\begin{aligned}
\left\|
    \query(\tilde{\data}(\mask))
    -
    \query(\tilde{\data}'(\mask'))
\right\|_p
&\le
\sum_{j=1}^d
L_j
\left|
    I(\tilde \sample)_{\istar}^{(j)}(\mask)
    -
    I(\tilde \sample')_{\istar}^{(j)}(\mask')
\right|  \\
&\le
\sum_{j\in \operatorname{obs}(\masksample_{\istar})}L_j+\sum_{j\in \operatorname{obs}(\masksample'_{\istar})}L_j .
\end{aligned}
\end{align*}
Now, let us sort the $L_j$ for $j \in [d]$ by non-increasing order
  \begin{align*}
    L_{(1)} \geq L_{(2)} \geq \dots \geq L_{(d)}.
  \end{align*}
By simply picking the $\lfloor \rho d \rfloor$ largest FWL constants, we obtain
\begin{align*}
    \sum_{j\in \operatorname{obs}(\masksample_{\istar})}L_j
    \le
    \sum_{j=1}^{\lfloor\rho d\rfloor}L_{(j)},
    \qquad
    \sum_{j\in \operatorname{obs}(\masksample'_{\istar})}L_j
    \le
    \sum_{j=1}^{\lfloor\rho d\rfloor}L_{(j)}.
\end{align*}
Combining the previous inequalities gives
\begin{align*}
\left\|
    \query(\tilde{\data}(\mask))
    -
    \query(\tilde{\data}'(\mask'))
\right\|_p
\le
2\sum_{j=1}^{\lfloor\rho d\rfloor}L_{(j)}.
\end{align*}
Taking the supremum over all admissible neighboring datasets and masks
concludes the proof.
\end{proof}
  
\subsection{Proof of Theorem~\ref{thm:fwllaplace}}\label{app:lap}
\vspace{3ex}
\begin{mdframed}
  \TheoremFWLLap*
\end{mdframed}

\begin{tcolorbox}[boxrule=0pt,standard jigsaw, opacityback=0, frame hidden,sharp corners,enhanced,breakable,borderline west={1pt}{0pt}{gray},left=6pt,right=0pt,top=0pt,bottom=0pt,boxsep=1pt]
\begin{proof}[Sketch of proof] We provide below a sketch of proof for the above result. The full proof can be found in Appendix~\ref{app:lap}.

\textbf{I)} Fix an arbitrary mask $\mask \in \{0,1\}^{n\times d}$ and consider the query applied to incomplete datasets under this fixed mask.
Since the query $\query$ is feature-wise Lipschitz (FWL) with respect to $\|\cdot\|_1$, \Cref{lemma:sensmiss} implies that its
$\ell_1$-sensitivity under the mask $\mask$ is upper-bounded as $
\tilde{\Delta}^{(1)}_f \le \tilde{C}_1 \le C_1.
$
Recall that the base Laplace mechanism $\lap$ with scale parameter $b = C_1/\epsilon$ is $(\epsilon,0)$-DP on complete datasets.
By \Cref{lemma:la} in the Appendix, when applied to incomplete datasets sharing the same mask $\mask$, this Laplace mechanism is also 
$\big((\tilde{C}_1/C_1)\epsilon,0\big)$-DP. for any neighboring pair
$\tilde{\data}(\mask) \simeq \tilde{\data}'(\mask)$ in $\missdatadomain^n$.

 \textbf{II)} 
 We now relax the fixed-mask condition and consider the full missing-data process$\lapmiss$.
 Following \Cref{app:proof}, the distribution of $\lapmiss$ is a mixture distribution over all possible masks from the missing data distribution $\datamiss$.
 Applying the same mixture-based differential privacy analysis as in the proof of \Cref{thm:amplif} in \Cref{app:proof}, we  can obtain the desired result as stated in \Cref{thm:fwllaplace}.
\end{proof}
\end{tcolorbox}

\begin{proof}

To prove the theorem, we first establish a privacy guarantee for the Laplace mechanism under missing data $\lapmiss$, conditional on a fixed mask. We then bound the mixture distribution of $\lapmiss$ to conclude.

\textbf{Privacy guarantee for $\lapmiss$ with a fixed mask $\mask, \mask'$.} 

Let us fix a mask $\mask\in \textrm{supp}(\datamiss(\data))$ and $\mask' \in \textrm{supp}(\datamiss(\data'))$ such that $\tilde{\data}(\mask) \simeq \tilde{\data'}(\mask')$. Since the query $\query$ is FWL, from \Cref{lemma:sensmiss}, we have 
\begin{align*}
  \querysensmiss{1}\leq \tilde{C}_1  = C \sum_{j=1}^{\lfloor\rho d \rfloor} L_{(j)}.
\end{align*}
Since the Laplace mechanism is calibrated with scale $b = \frac{C_1}{\epsilon}$ with $C_1 = 2\sum_{j=1}^{d} L_j$
and since the masked sensitivity is bounded by
\begin{align*}
  \widetilde C_1 = 2\sum_{j=1}^{\lfloor \rho d \rfloor} L_{(j)},
\end{align*}
Lemma~\ref{lemma:la} gives, for any measurable $S \in \mathcal{E}$ and any neighboring datasets
$\data \simeq \data'$ in $\datadomain^n$,
\begin{align*}
  \Prob{\lap(\tilde{\data}(\mask)) \in S}
  \leq
  e^{\tilde{\epsilon}}
  \Prob{\lap(\tilde{\data}'(\mask')) \in S},
\end{align*}
where $\tilde{\epsilon} = (\widetilde C_1 /{C_1})\, \epsilon$.

\textbf{Bounding the mixture distribution of $\Miss$.}

We have for any dataset $\data$ in $\datadomain$ and any measurable $S$ in $\mathcal{E}$
\begin{align*}
  \Prob{\lapmiss(\data) \in S} &= \sum_{\mask \in \{0,1\}^{n\times d}}
    \Prob{\datamiss(\data) = \mask}\Prob{\lap\left(\tilde{\data}(\mask) \right) \in S}
\end{align*}

Similarly to the proof of Theorem~\ref{thm:amplif} (see Appendix~\ref{app:proof}), we perform the following decomposition for all $S \in \mathcal{E}$
\begin{align*}
  &\Prob{\lapmiss(\data) \in S} = (1-p_{\ast}).w^{(0)}(S) + \qstar.w^{(1)}_{\data}(S) ,\\
  &\Prob{\lapmiss(\data') \in S} = (1-p_{\ast}).w^{(0)}(S) + \qstar.w^{(1)}_{\data'}(S).
\end{align*}

Applying Proposition~\ref{prop:conv}, \Cref{prop:advancedconv} and the convexity properties of the $\alpha$-divergence gives
\begin{align*}
  \mathcal{D}_{e^{\epsilon'}}(\lapmiss(\data) || \lapmiss(\data')) \leq \qstar \big[(1-\beta) \mathcal{D}_{e^{\tilde{\epsilon}}}(w^{(1)}_{\data} || w^{(0)}) + \beta \mathcal{D}_{e^{\tilde{\epsilon}}}(w^{(1)}_{\data} || w^{(1)}_{\data'}) \big],
\end{align*}

with $\beta = e^{\epsilon' - \tilde{\epsilon}}$.

\textbf{Bounding $\mathcal{D}_{e^{{\tilde{\epsilon}}}}(w^{(1)}_{\data} || w^{(0)})$.}

The coupling argument from Appendix~\ref{app:proof} applies verbatim with $\delta=0$. Using the same coupling $\pi_1 \in \mathcal{C}(\mu^{(0)}_{\data},\mu^{(1)}_{\data})$, we obtain
\begin{align*}
  \mathcal{D}_{e^{\tilde{\epsilon}}}(w^{(1)}_{\data} || w^{(0)}) \leq 0.
\end{align*}

\textbf{Bounding $\mathcal{D}_{e^{\tilde{\epsilon}}}(w^{(1)}_{\data} || w^{(1)}_{\data'})$.}

Again, the coupling argument from Appendix~\ref{app:proof} applies verbatim with $\delta=0$. Using the same coupling $\pi_2 \in \mathcal{C}(\mu^{(1)}_{\data},\mu^{(1)}_{\data'})$, we obtain
\begin{align*}
  \mathcal{D}_{e^{\tilde{\epsilon}}}(w^{(1)}_{\data} || w^{(1)}_{\data'}) \leq 0.
\end{align*}

By combining the two bounds we obtain
\begin{align*}
  \mathcal{D}_{e^{\epsilon'}}(\lapmiss(\data) || \lapmiss(\data')) \leq  0,
\end{align*}
and conclude by taking the supremum over all pairs neighboring datasets $\data \simeq \data'$ in $\datadomain^n$.


\end{proof}

\subsection{Corollary of \Cref{thm:fwllaplace}}\label{corr:fwllaplace}
\vspace{3ex}
\begin{mdframed}
\begin{restatable}{corollary}{ErwanCor}
\label{cor:laplace}
Under the setting and assumptions of \Cref{thm:fwllaplace}, let $\epsilon \in (0,1]$. Grant  \Cref{ass:proportion_observed_entries}, and assume that  all $L_j$ for the $\ell_1$ norm are equal and $\rho d \in \mathds{N}$. Then, the randomized algorithm $\lapmiss$ is $(\epsilon_0',0)$-DP, with 
\begin{align*}
    \epsilon_0' \leq  \min\left((e-1) p_{\ast}, 1\right) \rho \epsilon.
\end{align*}
\end{restatable}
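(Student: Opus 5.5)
We start from \Cref{thm:fwllaplace}, which already shows that $\lapmiss$ is $(\epsilon',0)$-DP with $\epsilon' = \ln\big(1+p_{\ast}(e^{\tilde\epsilon}-1)\big)$ and $\tilde\epsilon = (\tilde C_1/C_1)\epsilon$. It therefore suffices to bound this explicit $\epsilon'$; we will take $\epsilon_0'=\epsilon'$. The first step evaluates the sensitivity ratio under the extra hypotheses. If all constants are equal, $L_j = L$, and $\rho d\in\mathbb{N}$, then $\lfloor \rho d\rfloor = \rho d$. Hence
\begin{align*}
\tilde C_1 = 2\rho d L \quad\text{and}\quad C_1 = 2dL,
\end{align*}
so $\tilde C_1/C_1 = \rho$ and $\tilde\epsilon = \rho\epsilon$. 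Since $\rho\le 1$ (a fraction of features) and $\epsilon\in(0,1]$, we also have $\tilde\epsilon\in(0,1]$.

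We then prove two separate upper bounds on $\epsilon'$ and take their minimum.

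\textbf{Bound by $\tilde\epsilon$.} We use $\ln(1+x)\le x$ together with $p_\ast\le 1$. This gives
\begin{align*}
\epsilon' \le \ln\big(1+(e^{\tilde\epsilon}-1)\big)=\tilde\epsilon = \rho\epsilon.
\end{align*}
Equivalently, $\epsilon'$ is nondecreasing in $p_\ast$ and equals $\tilde\epsilon$ at $p_\ast = 1$.

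\textbf{Bound by $(e-1)p_\ast\rho\epsilon$.} First, $\ln(1+x)\le x$ gives $\epsilon'\le p_\ast(e^{\tilde\epsilon}-1)$. Next, $x\mapsto e^x$ is convex on $[0,1]$, so it lies below its chord through $(0,1)$ and $(1,e)$. This yields $e^{x}-1\le (e-1)x$ for all $x\in[0,1]$. Applying it with $x=\tilde\epsilon=\rho\epsilon\in[0,1]$ gives
\begin{align*}
\epsilon'\le (e-1)p_\ast\rho\epsilon.
\end{align*}

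Combining the two bounds gives $\epsilon_0'\le \min\big((e-1)p_\ast,1\big)\rho\epsilon$. The only step needing care is the chord inequality: it is valid precisely because $\tilde\epsilon\le 1$. This is where the restriction $\epsilon\in(0,1]$ enters, together with $\rho\le1$, which follows from its interpretation as a fraction in \Cref{ass:proportion_observed_entries}; without $\tilde\epsilon\le1$ the constant $e-1$ would have to be replaced by $(e^{\tilde\epsilon}-1)/\tilde\epsilon$. Everything else is a direct substitution into \Cref{thm:fwllaplace}.
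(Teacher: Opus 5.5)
Your proof is correct and follows the same route as the paper: you substitute $\tilde C_1/C_1=\rho$ into \Cref{thm:fwllaplace}, then combine $\ln(1+x)\le x$, the chord bound $e^{x}-1\le(e-1)x$ on $[0,1]$, and $1+p_\ast(e^{x}-1)\le e^{x}$. One small correction: in your first bound, the step you actually use is monotonicity of $\ln$ together with $p_\ast\le 1$, not $\ln(1+x)\le x$; this does not affect the argument.
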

\end{mdframed}
\begin{proof}
The bound follows from concavity of $\log$ (yielding $\log(1+x)\le x$ for $x\ge 0$) and convexity of $\exp$ (yielding $e^{x}-1\le (e-1)x$ for $0\le x\le 1$), plus the inequality $1+\qstar(e^{x}-1)\le e^{x}$.
\end{proof}

\subsection{Proof of Theorem~\ref{thm:gaussamplif}}\label{app:gaussamplif}

\vspace{3ex}
\begin{mdframed}
  \GaussianAmplif*
\end{mdframed}

\begin{proof}
  The proof is identical to Theorem~\ref{thm:fwllaplace} (see Appendix~\ref{app:lap}).

  \textbf{Privacy guarantee for $\guassmiss$ with a fixed mask $\mask, \mask'$.} 

  For fixed mask $\mask,\mask' \in \textrm{supp}(\datamiss(\data), \textrm{supp}(\datamiss(\data')$, similarly to Appendix~\ref{app:lap} by applying Lemma~\ref{lemma:gauss}, we obtain for any measurable $S$ in $\mathcal{E}$ and any dataset $\data$ in $\datadomain^n$
\begin{align}
  \Prob{\gauss(\tilde{\data}(\mask)) \in S} \leq e^{\tilde{\epsilon}} \Prob{\gauss(\tilde{\data}'(\mask')) \in S} + \delta,
\end{align}

where $0 <\tilde{\epsilon}= \frac{\tilde{C}_2}{C_2}\epsilon \leq 1$ and $\delta \in (0,1]$. 

\textbf{Bounding the mixture distribution of $\guassmiss$.}

We reuse the same exact arguments from Appendix~\ref{app:lap} with $\delta \in (0,1]$. 

We first obtain
\begin{align}
  \mathcal{D}_{e^{\epsilon'}}(\guassmiss(\data) || \guassmiss(\data')) \leq \qstar \big[(1-\beta) \mathcal{D}_{e^{\tilde{\epsilon}}}(w_1 || w_0) + \beta \mathcal{D}_{e^{\tilde{\epsilon}}}(w_1 || w'_1) \big],
\end{align}
Then, we show that
\begin{align}
\mathcal{D}_{e^{\tilde{\epsilon}}}(w_1 || w_0) \leq \delta,
\end{align}
and
\begin{align}
\mathcal{D}_{e^{\tilde{\epsilon}}}(w_1 || w'_1) \leq \delta.
\end{align}

Finally, we conclude by combining the latter bounds and taking the supremum over any neighboring datasets $\data, \data' \in \datadomain^n$ .

\end{proof}

\subsection{Corollary of \Cref{thm:gaussamplif}}\
\vspace{3ex}
\begin{mdframed}
\begin{restatable}{corollary}{ErwanCor}
\label{cor:laplace}
Under the setting and assumptions of \Cref{thm:gaussamplif}, let $\epsilon \in (0,1]$. Grant  \Cref{ass:proportion_observed_entries}, and assume that  all $L_j$ for the $\ell_2$ norm are equal and $\rho d \in \mathds{N}$. Then, the randomized algorithm $\guassmiss$ is $(\epsilon_0',\delta')$-DP, with 
\begin{align*}
    \epsilon_0' \leq  \min\left((e-1) p_{\ast}, 1\right) \rho \epsilon, \quad \delta' = \qstar \delta. 
\end{align*}
\end{restatable}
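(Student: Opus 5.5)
The plan is to start from \Cref{thm:gaussamplif}. It already shows that $\guassmiss$ is $(\epsilon',\qstar\delta)$-DP with $\epsilon' = \ln\!\big(1+\qstar(e^{\tilde{\epsilon}}-1)\big)$ and $\tilde{\epsilon} = (\tilde{C}_2/C_2)\epsilon$. The $\delta'$ part of the claim is therefore immediate, and only an elementary upper bound on $\epsilon'$ remains. Since DP with a smaller privacy parameter implies DP with any larger one, it suffices to show $\epsilon' \le \min\big((e-1)\qstar, 1\big)\rho\epsilon$.

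The first step is to identify the ratio of sensitivities. If all $L_j$ equal a common value $L$ and $\rho d\in\mathbb{N}$, then $\tilde{C}_2 = 2\lfloor\rho d\rfloor L = 2\rho d L$ and $C_2 = 2dL$. Hence $\tilde{C}_2/C_2 = \rho$ and $\tilde{\epsilon} = \rho\epsilon$. Because $\epsilon\in(0,1]$ and $\rho\le 1$ (as $\mathrm{Card}(\operatorname{obs})\le d$), we get $\tilde{\epsilon}\in(0,1]$.

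Next come two elementary bounds on $\epsilon'$.
\begin{itemize}
\item[(a)] By convexity of $x\mapsto e^x$, we have $e^x \le 1 + (e-1)x$ on $[0,1]$, so $e^{\tilde{\epsilon}}-1 \le (e-1)\tilde{\epsilon}$. Combined with $\ln(1+y)\le y$ for $y\ge 0$, this gives $\epsilon' \le \qstar(e^{\tilde{\epsilon}}-1) \le (e-1)\qstar\rho\epsilon$.
\item[(b)] Since $\qstar\le 1$, we have $1+\qstar(e^{\tilde{\epsilon}}-1)\le e^{\tilde{\epsilon}}$, hence $\epsilon'\le\tilde{\epsilon}=\rho\epsilon$.
\end{itemize}
Taking the minimum of (a) and (b) yields $\epsilon'\le \min\big((e-1)\qstar,1\big)\rho\epsilon$, and we set $\epsilon_0'$ equal to this bound, or to $\epsilon'$ itself.

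There is no real obstacle here. The only points needing care are checking that $\tilde{\epsilon}\le 1$, so that the chord bound for $\exp$ applies, and that the floor disappears thanks to $\rho d\in\mathbb{N}$. The corollary mirrors \Cref{cor:laplace} for the Laplace case, now carrying $\delta'=\qstar\delta$ over unchanged from \Cref{thm:gaussamplif}.
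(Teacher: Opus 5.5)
Your proposal is correct and follows the paper's proof: you read off $\tilde{\epsilon}=\rho\epsilon$ from Theorem~\ref{thm:gaussamplif}, then combine $\ln(1+x)\le x$, the chord bound $e^{x}-1\le (e-1)x$ on $[0,1]$, and $1+\qstar(e^{x}-1)\le e^{x}$, with $\delta'=\qstar\delta$ carried over directly. One small quibble: $\rho\le 1$ does not follow from $\mathrm{Card}(\operatorname{obs})\le d$, because the assumption only bounds the observed fraction from above; it is, however, implicitly needed for $\tilde{C}_2$ to be defined, and you may assume it without loss of generality, since replacing $\rho$ by $1$ keeps the assumption valid and gives a stronger bound.
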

\end{mdframed}
\begin{proof}
The bound follows from concavity of $\log$ (yielding $\log(1+x)\le x$ for $x\ge 0$) and convexity of $\exp$ (yielding $e^{x}-1\le (e-1)x$ for $0\le x\le 1$), plus the inequality $1+\qstar(e^{x}-1)\le e^{x}$.
\end{proof}

\section{Amplification in real-world dataset}\label{app:rwdata}
\begin{table}[!ht]
\centering
\caption{
Empirical missingness configurations in standard incomplete-data benchmarks.
Dataset sources: \texttt{nhanes} and \texttt{selfreport} from \texttt{mice}~\citep{vanbuuren2011mice};\texttt{housevotes84} from \texttt{mlbench}~\citep{leisch2026mlbench}; \texttt{vnf} from \texttt{missMDA} ~\citep{josse2016missmda}; \texttt{MovieLens20M} ~\citep{movielens20m}.
}
\label{tab:real-world-missingness}
\small
\begin{tabular}{llcc}
\toprule
Dataset & Missingness origin & $\qstar$ & $\rho$ \\
\midrule
\texttt{nhanes}       & Health survey skip patterns        & 0.72 & 1.00 \\
\texttt{housevotes84} & Voting-record abstentions          & 0.92 & 1.00 \\
\texttt{vnf}          & Questionnaire skipped questions    & 0.95 & 1.00 \\
\texttt{selfreport}   & Sensitive-survey skipped items     & 1.00 & 0.79 \\
\texttt{MovieLens20M} & Unrated user--movie pairs  & 1.00 & 0.33 \\
\bottomrule
\end{tabular}
\end{table}

In Table~\ref{tab:real-world-missingness} we shed light that the structural quantities appearing in our amplification results, namely $\qstar$ and $\rho$, arise naturally in standard missing-data settings. The table displays two practically relevant regimes. First, in datasets such as \texttt{nhanes}, \texttt{housevotes84}, and \texttt{vnf}, the empirical value of $\qstar$ is strictly below one, meaning that fully missing rows occur with nonzero frequency. This is precisely the regime in which the generic amplification result from \Cref{thm:amplif} can be non-vacuous, since the differing row may be completely hidden by the missingness process. Second, in \texttt{selfreport} and \texttt{MovieLens20M}, all rows are at least partially observed, so $\qstar = 1$, but the maximal observed-feature fraction satisfies $\rho<1$. This corresponds to the regime targeted by the amplification result from \Cref{thm:fwllaplace} and \Cref{thm:gaussamplif}: even when the differing row is never fully masked, its contribution to query sensitivity is reduced because no row is fully observed. Overall, these configurations support the practical relevance of our assumptions: missingness patterns encountered in health surveys, questionnaires, voting records, longitudinal studies, and diagnostic datasets can naturally activate either the generic fully-masked-row effect or the feature-wise sensitivity reduction captured by our theorems.

\section{Relaxed \Cref{ass:proportion_observed_entries} MCAR results}\label{app:relaxed-fwl}

To relax the requirement of \Cref{ass:proportion_observed_entries} in both \Cref{thm:fwllaplace} and \Cref{thm:gaussamplif}, we allow the bounded-observation condition
to fail on a small fraction of masks. More precisely, we isolate the masks
for which the number of observed features in the differing row exceeds
the threshold $\rho d$. For any neighboring datasets $\data \simeq \data'$ with $\istar$ being their differing row and for any $\rho \in [0,1]$, we define
\begin{align*}
    &B_{\rho}(\data, \data') =   \big\{\mask \in \Hstar(\data, \data') \mid \exists i \in [n],\textrm{Card}(\operatorname{obs}\left(\masksample_i)\right) > \rho d\big\}.
\end{align*}

Formally, let us consider the following assumption. 

\begin{assumption}
  \label{ass:relaxed_rho} The missing data process $\datamiss = \featuremiss^{\otimes n}$ is such that there exists $\rho \in [0, 1]$ and $\eta \in [0,1)$ such that for any pair of neighboring datasets $\data \simeq  \data' \in \datadomain^n$, 
  \begin{align*}
\Prob{\datamiss(\data)\in B_\rho(\data,\data')}
\le \eta.
\end{align*}
\end{assumption}

The relaxed FWL results presented in this section should be understood as a robustness extension of the deterministic bounded-observation condition. When this condition holds uniformly, the main FWL theorem already yields amplification with no exceptional term. The relaxed analysis instead addresses datasets where the condition fails on a subset of masks, and quantifies exactly how these violations enter the privacy profile through the tail mass of the row-wise observed-fraction distribution. When this upper tail is light, the main amplification effect approximately persists up to a small exceptional mass; when it is heavy, the bound correctly indicates that missingness provides limited practical amplification.

For the Laplace mechanism, we can show the following result with relaxed assumption.

\vspace{3ex}
\begin{mdframed}
\begin{restatable}{theorem}{LapRelaxedMCAR}\label{thm:relaxedfwllap}
Let $\epsilon >0$. Let $\datamiss$ be a MCAR missing data process verifying \Cref{ass:relaxed_rho}. Let $\query$ be a FWL query with respect to $\|.\|_1$, and $\lap$ be the Laplace mechanism, associated to $\query$ with scale parameter $b=C_1 / \epsilon$. 
Then the randomized algorithm $\lapmiss$ defined as per~(\ref{def:A_D}) with $\lap$ and $\datamiss$ is $(\epsilon_\rho^{\star}, \eta)$-DP with 
\begin{align*}
\epsilon^\star_\rho
=
\ln\!\Bigl(1+\frac{p_\ast-r_\rho}{1-r_\rho}(e^{\tilde{\epsilon}}-1)\Bigr),
\end{align*}
where $\tilde{\epsilon} = (\tilde C_1/C_1)\epsilon$ and $r_{\rho} = \Prob{\datamiss(\data) \in B_{\rho}(\data, \data')}=\Prob{\datamiss(\data') \in B_{\rho}(\data, \data')}$ for any pair of neighboring datasets $\data \simeq \data' \in \datadomain^n$.
\end{restatable}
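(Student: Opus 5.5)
The plan is to split the mask space into three pieces and treat the exceptional set $B_\rho(\data,\data')$ as a pure additive $\delta$-type loss. The main part of the argument is then the $\delta=0$ computation of \Cref{thm:fwllaplace}, carried out on the complementary, well-behaved masks. Fix $\data\simeq\data'$ with differing row $\istar$ and write $\Hstar=\Hstar(\data,\data')$, $B=B_\rho(\data,\data')$ and $G=\Hstar\setminus B$. Since $\datamiss$ is MCAR, $\Prob{\datamiss(\data)=\mask}=\Prob{\datamiss(\data')=\mask}$ for every mask. Hence the masses of $\Hstar^c$, $G$ and $B$ are the same under $\data$ and $\data'$, namely $1-\qstar$, $\qstar-r_\rho$ and $r_\rho$; this justifies the well-definedness of $r_\rho$ claimed in the statement. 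Note also that $B\subset\Hstar$ by definition.

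\textbf{Step 1 (conditioning away $B$).} I will define the conditional output laws $Q_{\data}(S)=\sum_{\mask\notin B}\Prob{\datamiss(\data)=\mask}\Prob{\lap(\datamask(\mask))\in S}/(1-r_\rho)$, and $Q_{\data'}$ analogously. Then $\Prob{\lapmiss(\data)\in S}\le(1-r_\rho)Q_{\data}(S)+r_\rho$ and $\Prob{\lapmiss(\data')\in S}\ge(1-r_\rho)Q_{\data'}(S)$. So once I show $Q_{\data}(S)\le e^{\epsilon^\star_\rho}Q_{\data'}(S)$ for every $S$, I get $\Prob{\lapmiss(\data)\in S}\le e^{\epsilon^\star_\rho}\Prob{\lapmiss(\data')\in S}+r_\rho$. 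The result then follows from $r_\rho\le\eta$ by \Cref{ass:relaxed_rho}.

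\textbf{Step 2 (mixture decomposition of $Q$).} Write $Q_{\data}=(1-q)w^{(0)}+q\,w^{(1)}_{\data}$ and $Q_{\data'}=(1-q)w^{(0)}+q\,w^{(1)}_{\data'}$ with $q=(\qstar-r_\rho)/(1-r_\rho)$. Here $w^{(0)}$ is the common component over $\Hstar^c$, which is common exactly as in step I.1 of the proof of \Cref{thm:MNAR}, since $\datamask(\mask)=\datamask'(\mask)$ there. The components $w^{(1)}_{\data}$ and $w^{(1)}_{\data'}$ are the mixtures over $G$. I will apply \Cref{prop:advancedconv} with $\alpha=e^{\tilde\epsilon}$, $\eta=q$ and $\beta=e^{\epsilon^\star_\rho-\tilde\epsilon}$, followed by \Cref{prop:conv}. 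This reduces the claim to the two bounds
\begin{align*}
D_{e^{\tilde\epsilon}}(w^{(1)}_{\data}\|w^{(0)})\le 0 \quad\text{and}\quad D_{e^{\tilde\epsilon}}(w^{(1)}_{\data}\|w^{(1)}_{\data'})\le 0 .
\end{align*}

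\textbf{Step 3 (couplings and the masked sensitivity).} For the first bound I reuse $\pi_1$, which maps $(\bistar,\res)\mapsto(\mathbf 1_{\mathbb R^d},\res)$. This map sends $G$ into $\Hstar^c$, and the resulting pairs of incomplete datasets are neighboring. For the second bound, MCAR makes the conditional laws on $G$ identical, so the diagonal coupling $\mask'=\mask$ works.

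The step I expect to be the main obstacle is the sensitivity control. \Cref{lemma:sensmiss} is stated under \Cref{ass:proportion_observed_entries} on the whole support, which no longer holds here, so I will redo its pointwise argument. In every coupled pair $(\mask,\mask')$ above, $\mask\in G$, so every row, and in particular row $\istar$, has at most $\lfloor\rho d\rfloor$ observed entries. The partner mask $\mask'$ either coincides with $\mask$ or fully masks row $\istar$. The FWL bound with $\datadomain\subset[-1,1]^d$ then gives
\begin{align*}
\bigl\|\query(\datamask(\mask))-\query(\datamask'(\mask'))\bigr\|_1\le 2\sum_{j=1}^{\lfloor\rho d\rfloor}L_{(j)}=\tilde C_1 ,
\end{align*}
where the bound is in fact halved in the $\pi_1$ case. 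Then \Cref{lemma:la} gives $(\tilde\epsilon,0)$-DP for each coupled pair, and the coupling inequality \eqref{eq:div} yields both zero-divergence bounds, which concludes the proof.
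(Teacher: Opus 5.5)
Your architecture is the same as the paper's: the split $\Hstar^c\cup G\cup B_\rho$, conditioning on $B_\rho^c$ with mixing weight $q=(\qstar-r_\rho)/(1-r_\rho)$, and paying $r_\rho\le\eta$ additively. Your Step 1 is the paper's Step III. Your pointwise redo of the sensitivity bound is more careful than the paper's appeal to \Cref{lemma:sensmiss}.

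However, the first bound of Step 3 fails, because $\pi_1$ is not a coupling of the two laws you need. Since $B_\rho$ is defined with ``$\exists i\in[n]$'', you have $G=A\times R_\rho$. Here $A$ is the set of row masks $\mathbf m\neq\mathbf 1_{\mathbb R^d}$ with at most $\rho d$ observed entries, and $R_\rho$ is the set of residual masks all of whose rows have at most $\rho d$ observed entries. Under MCAR the mask law is a product, so conditioning on $G$ gives $\featuremiss(\cdot\mid A)\otimes\mathcal R(\cdot\mid R_\rho)$. Its image under $(\bistar,\res)\mapsto(\mathbf 1_{\mathbb R^d},\res)$ is $\delta_{\mathbf 1}\otimes\mathcal R(\cdot\mid R_\rho)$. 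But $w^{(0)}$ is driven by $\delta_{\mathbf 1}\otimes\mathcal R$, with the \emph{unconditioned} residual law. The second-marginal check for $\pi_1$ (step I.3 in the proof of \Cref{thm:MNAR}) relied on $\mu^{(1)}_{\data}$ factorizing with the full residual law, and conditioning on $G$ destroys exactly that. The two laws agree only if $\Prob{\mathcal R(\data_{\mathrm{res}})\in R_\rho}=1$. For $n\ge 2$, since all rows share $\featuremiss$, that forces $r_\rho=0$, i.e.\ the un-relaxed setting.

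This is a real failure, not a technicality. Both $D_{e^{\tilde\epsilon}}(w^{(1)}_{\data}\|w^{(0)})\le 0$ and the conditional inequality $Q_{\data}\le e^{\epsilon^\star_\rho}Q_{\data'}$ that Step 1 reduces to can be false. Take the following instance:
\begin{itemize}
\item $n=2$, $d=2$, $\rho=1/2$, and each row masked as $(1,1)$, $(0,1)$, $(0,0)$ with probabilities $0.05$, $0.05$, $0.9$;
\item $\query$ the sum of all zero-imputed entries, so $L_1=L_2=1$, $C_1=4$, $\tilde C_1=2$;
\item $\sample_1=(1,1)$ and $\sample'_1=\sample_2=(-1,-1)$;
\item $\epsilon=0.4$ and $S=[T,\infty)$ with $T\ge 1$.
\end{itemize}
Then $q=1/11$, and one computes $w^{(1)}_{\data}(S)/w^{(0)}(S)\approx 1.265>e^{\tilde\epsilon}\approx 1.221$ and $Q_{\data}(S)/Q_{\data'}(S)\approx 1.0208>e^{\epsilon^\star_\rho}\approx 1.0201$. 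The fully observed residual rows are present in $w^{(0)}$ but excluded from $G$, and they pull mass out of the tail.

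The paper's own proof has the same hole, hidden in ``the same mixture argument as in the proof of \Cref{thm:fwllaplace}'' leading to \eqref{eq:conditional-good-mcar}. Because $r_\rho=0.945$ in this example, it does not refute the theorem itself, but neither argument proves it.

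A repair within your approach is to define the exceptional set through row $\istar$ only, which is all your sensitivity argument actually uses. Then $G$ is a product set carrying the full residual law, and $\pi_1$ is a genuine coupling. Let $r'_\rho$ be the probability that row $\istar$ has more than $\rho d$ observed entries; note $r'_\rho\le r_\rho\le\eta$. You then obtain $\big(\ln(1+\frac{\qstar-r'_\rho}{1-r'_\rho}(e^{\tilde\epsilon}-1)),\,r'_\rho\big)$-DP. Since $r\mapsto(\qstar-r)/(1-r)$ is non-increasing, this $\epsilon$ is at least $\epsilon^\star_\rho$, so this route does not recover the stated constant.
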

\end{mdframed}
\begin{proof}

Let us introduce the following notation $G_\rho(\data,\data') = \Hstar(\data,\data')\setminus B_\rho(\data,\data')$ for conciseness.
Since by definition, 
\begin{align*}
B_\rho(\data,\data')\subseteq \Hstar(\data,\data'),
\end{align*}
we obtain the subsequent disjoint decomposition
\begin{align}
\{0,1\}^{n\times d}
=
\Hstar(\data,\data')^c
\cup
G_\rho(\data,\data')
\cup
B_\rho(\data,\data').\label{eq:setdecomp}
\end{align}

In addition, since $\datamiss$ is MCAR, the mask law does not depend on the data, hence there exists $r_\rho$ such that for all pairs of neighboring datasets $\data \simeq \data'$ and all $\mask \in \{0,1\}^{n \times d}$, we have
\begin{align}
r_\rho = \Prob{\datamiss(\data) \in B_{\rho}(\data, \data')}=\Prob{\datamiss(\data') \in B_{\rho}(\data, \data')}. \label{eq:defrrho}
\end{align}

The mass of $G_{\rho}(\data, \data')$ for all pairs of neighboring datasets $\data \simeq \data'$ is given by
\begin{align*}
    \Prob{\datamiss(\data)\in G_\rho(\data,\data')}
=
\Prob{\datamiss(\data')\in G_\rho(\data,\data')}
=
p_\ast-r_\rho.
\end{align*}

Let us fix a pair of neighboring dataset $\data \simeq \data'$ as well as an arbitrary event $S \in \mathcal{E}$.

\textbf{I) Privacy guarantee in $G_{\rho}(\data, \data')$.}

Let $\mask\in G_\rho(\data,\data')$. By definition, we have for all $i \in [n]$
\begin{align}
\textrm{Card}(\operatorname{obs}(\mask_{i}))\le \rho d.
\end{align}
Therefore
\begin{align}
\textrm{Card}(\operatorname{obs}(\mask_{\istar}))\le \rho d.
\end{align}
Hence \cref{ass:relaxed_rho} holds for the fixed mask $\mask$.
Since $\query$ is FWL with respect to $\|\cdot\|_1$, 
applying Lemma~\ref{lemma:sensmiss} yields
\begin{align*}
\|\query(\tilde{\data}(\mask))-\query(\tilde{\data}'(\mask))\|_1
\le
2B\sum_{j=1}^{\lfloor \rho d\rfloor}L_{(j)}
=
\tilde C_1.
\end{align*}
Therefore, by \Cref{lemma:la} we obtain
\begin{align}
\Prob{\lap(\tilde{\data}(\mask))\in S}
\le
e^{\tilde{\epsilon}}\Prob{\lap(\tilde{\data}'(\mask))\in S},
\qquad \text{with} \quad
\tilde{\epsilon}=(\tilde C_1/C_1)\epsilon.\label{ineq:grho}
\end{align}

We now justify the privacy bound on the conditional laws over
$G_\rho(\data,\data')$.

By definition of conditional probability,
\begin{align*}
\Prob{\lapmiss(\data)\in S \mid \datamiss(\data)\in G_\rho(\data, \data')}=
\sum_{\mask\in G_\rho(\data, \data')}
\Prob{\datamiss(\data)=\mask}
\,\Prob{\lap(\tilde{\data}(\mask))\in S}.
\end{align*}
Similarly,
\begin{align*}
\Prob{\lapmiss(\data')\in S \mid \datamiss(\data')\in G_\rho(\data, \data')}
&=
\sum_{\mask\in G_\rho(\data, \data')}
\Prob{\datamiss(\data')=\mask }
\,\Prob{\lap(\tilde{\data}'(\mask))\in S}.
\end{align*}

Since $\datamiss$ is MCAR, for every $\mask\in G_\rho(\data, \data')$ we have
\begin{align*}
\Prob{\datamiss(\data)=\mask}=\Prob{\datamiss(\data')=\mask}.
\end{align*}

Combining the latter inequality with \Cref{ineq:grho} and summing over
$\mask\in G_\rho(\data,\data')$, we obtain
\begin{align*}
\Prob{\lapmiss(\data)\in S \mid \datamiss(\data)\in G_\rho(\data, \data')}
&\le
e^{\epsilon_0}
\sum_{\mask\in G_\rho(\data, \data')}
\Prob{\datamiss(\data)=\mask }
\,\Prob{\lap(\tilde{\data}'(\mask))\in S}
\\
& = e^{\epsilon_0}
\sum_{\mask\in G_\rho(\data, \data')}
\Prob{\datamiss(\data')=\mask }
\,\Prob{\lap(\tilde{\data}'(\mask))\in S}.
\end{align*}
Hence,
\begin{align}\label{eq:grhodp}
\Prob{\lapmiss(\data)\in S \mid \datamiss(\data)\in G_\rho(\data, \data')}\le e^{\tilde{\epsilon}}\Prob{\lapmiss(\data')\in S \mid \datamiss(\data')\in G_\rho(\data, \data')}.
\end{align}

\textbf{II) $\lapmiss$ conditional distribution by $B_{\rho}^c(\data, \data')$}

We first have
\begin{align*}
B_\rho^c(\data,\data')
=
\Hstar(\data,\data')^c\cup G_\rho(\data,\data').
\end{align*}
By \Cref{eq:defrrho}, we also have
\begin{align*}
\Prob{\datamiss(\data)\in B_\rho^c(\data,\data')}
=
\Prob{\datamiss(\data')\in B_\rho^c(\data,\data')}
=
1-r_\rho.
\end{align*}

We now expand
\begin{align*}
\Prob{\lapmiss(\data)\in S \mid \datamiss(\data)\in B_\rho^c(\data,\data')}
\end{align*}
using the conditional probability formula and the law of total probability on the partition
\begin{align*}
B_\rho^c(\data,\data')
=
\Hstar^c(\data,\data')
\cup
G_\rho(\data,\data').
\end{align*}

Since
\begin{align*}
B_\rho^c(\data,\data')
=
\Hstar^c(\data,\data')
\cup
G_\rho(\data,\data') \quad \text{and} \quad
\Hstar^c(\data,\data')\cap G_\rho(\data,\data')=\varnothing,
\end{align*}
the event$
\{\datamiss(\data)\in B_\rho^c(\data,\data')\}$
is partitioned into the two disjoint events
$
\{\datamiss(\data)\in \Hstar^c(\data,\data')\}$ and $\{\datamiss(\data)\in G_\rho(\data,\data')\}$. Therefore, applying the law of total probability inside the conditioning event
$\{\datamiss(\data)\in B_\rho^c(\data,\data')\}$, we obtain
\begin{align*}
&\Prob{\lapmiss(\data)\in S \mid \datamiss(\data)\in B_\rho^c(\data,\data')}
\\
&=
\frac{
\Prob{\{\lapmiss(\data)\in S\}\cap\{\datamiss(\data)\in \Hstar^c(\data,\data')\}}
+
\Prob{\{\lapmiss(\data)\in S\}\cap\{\datamiss(\data)\in G_\rho(\data,\data')\}}
}{
\Prob{\datamiss(\data)\in B_\rho^c(\data,\data')}
} \\&= \frac{
\Prob{\{\lapmiss(\data)\in S\}\cap\{\datamiss(\data)\in \Hstar^c(\data,\data')\}}
+
\Prob{\{\lapmiss(\data)\in S\}\cap\{\datamiss(\data)\in G_\rho(\data,\data')\}}
}{1- r_\rho}
\end{align*}
Using
\begin{align*}
\mathbb{P}(A\cap C)=\mathbb{P}(A\mid C)\mathbb{P}(C),
\end{align*}
First, we have
\begin{align*}
\Prob{\{\lapmiss(\data)\in S\}\cap\{\datamiss(\data)\in \Hstar^c(\data,\data')\}} &= \Prob{\lapmiss(\data)\in S \mid \datamiss(\data)\in \Hstar^c(\data,\data')}
\,\Prob{\datamiss(\data)\in \Hstar^c(\data,\data')} \\&= \Prob{\lapmiss(\data)\in S \mid \datamiss(\data)\in \Hstar^c(\data,\data')} (1-\qstar)
\end{align*}

and
\begin{align*}
    \Prob{\{\lapmiss(\data)\in S\}\cap\{\datamiss(\data)\in G_{\rho}(\data,\data')\}} &= \Prob{\lapmiss(\data)\in S \mid \datamiss(\data)\in G_\rho(\data,\data')}
\,\Prob{\datamiss(\data)\in G_\rho(\data,\data')}
\\& = \Prob{\lapmiss(\data)\in S \mid \datamiss(\data)\in G_\rho(\data,\data')}(\qstar - r_{\rho}).
\end{align*}

Thus substituting gives
\begin{align}
\Prob{\lapmiss(\data)\in S \mid \datamiss(\data)\in B_\rho^c(\data,\data')}
&=
\frac{1-p_\ast}{1-r_\rho}
\Prob{\lapmiss(\data)\in S \mid \datamiss(\data)\in \Hstar^c(\data,\data')}
\nonumber\\
&\quad+
\frac{p_\ast-r_\rho}{1-r_\rho}
\Prob{\lapmiss(\data)\in S \mid \datamiss(\data)\in G_\rho(\data,\data')},
\label{eq:condmix-good-z}
\end{align}

Similarly, we have
\begin{align}
\Prob{\lapmiss(\data')\in S \mid \datamiss(\data')\in B_\rho^c(\data,\data')}
&=
\frac{1-p_\ast}{1-r_\rho}
\Prob{\lapmiss(\data')\in S \mid \datamiss(\data')\in \Hstar^c(\data,\data')}
\nonumber\\
&\quad+
\frac{p_\ast-r_\rho}{1-r_\rho}
\Prob{\lapmiss(\data')\in S \mid \datamiss(\data')\in G_\rho(\data,\data')}.
\label{eq:condmix-good-zprime}
\end{align}

Then we have by definition of conditional probability
\begin{align*}
\Prob{\lapmiss(\data)\in S \mid \datamiss(\data)\in \Hstar^c(\data,\data')} &= \frac{
\sum_{\mask\in \Hstar^c(\data,\data')}
\Prob{\datamiss(\data)=\mask}
\Prob{\lap(\tilde{\data}(\mask))\in S}
}{
\Prob{\datamiss(\data)\in \Hstar^c(\data,\data')}
}\ \\
&= \frac{
\sum_{\mask\in \Hstar^c(\data,\data')}
\Prob{\datamiss(\data)=\mask}
\Prob{\lap(\tilde{\data}(\mask))\in S}
}{1-\qstar}.
\end{align*}
If $\mask\in \Hstar^c(\data,\data')$, then the differing row is fully masked, so $\tilde{\data}(\mask)=\tilde{\data}'(\mask)$.

Besides, since $\datamiss$ is MCAR, we have
\begin{align*}
\forall \mask\in \Hstar^c(\data,\data'), \quad \Prob{\datamiss(\data)=\mask}=\Prob{\datamiss(\data')=\mask}.
\end{align*}

Therefore,
\begin{align*}
    \Prob{\lapmiss(\data)\in S \mid \datamiss(\data)\in \Hstar^c(\data,\data')} = \Prob{\lapmiss(\data')\in S \mid \datamiss(\data')\in \Hstar^c(\data,\data')}.
\end{align*}

For convenience, we now introduce
\begin{align*}
w_0(S)
&=
\Prob{\lapmiss(\data)\in S \mid \datamiss(\data)\in \Hstar^c(\data,\data')} = \Prob{\lapmiss(\data')\in S \mid \datamiss(\data')\in \Hstar^c(\data,\data')}, 
\\
w_1(S)
&=
\Prob{\lapmiss(\data)\in S \mid \datamiss(\data)\in G_\rho(\data,\data')},
\\
w_1'(S)
&=
\Prob{\lapmiss(\data')\in S \mid \datamiss(\data')\in G_\rho(\data,\data')}.
\end{align*}

The previous equalities \eqref{eq:condmix-good-z} and \eqref{eq:condmix-good-zprime} rewrite as
\begin{align*}
\Prob{\lapmiss(\data)\in S \mid \datamiss(\data)\in B_\rho^c(\data,\data')}
&=
\frac{1-p_\ast}{1-r_\rho}w_0(S)
+
\frac{p_\ast-r_\rho}{1-r_\rho}w_1(S),
\\
\Prob{\lapmiss(\data')\in S \mid \datamiss(\data')\in B_\rho^c(\data,\data')}
&=
\frac{1-p_\ast}{1-r_\rho}w_0(S)
+
\frac{p_\ast-r_\rho}{1-r_\rho}w_1'(S).
\end{align*}

These two conditional mechanisms are therefore in exactly the same form as in
the proof of Theorem~\ref{thm:fwllaplace}, with $\frac{p_\ast-r_\rho}{1-r_\rho}$ instead of $\qstar$.
Applying \Cref{eq:grhodp}, Proposition~\ref{prop:conv} and the same mixture argument as in the proof of
Theorem~\ref{thm:fwllaplace}, with
\begin{align*}
\epsilon^\star_\rho
=
\ln\!\Bigl(1+\frac{p_\ast-r_\rho}{1-r_\rho}(e^{\tilde{\epsilon}}-1)\Bigr),
\end{align*}
yields
\begin{align}
\Prob{\lapmiss(\data)\in S\mid \datamiss(\data)\in B_\rho^c(\data,\data')}
\le
e^{\epsilon^\star_\rho}
\Prob{\lapmiss(\data')\in S\mid \datamiss(\data')\in B_\rho^c(\data,\data')}, \label{eq:conditional-good-mcar}
\end{align}
where
\begin{align*}
\epsilon^\star_\rho
=
\ln\!\Bigl(1+\frac{p_\ast-r_\rho}{1-r_\rho}(e^{\tilde{\epsilon}}-1)\Bigr).
\end{align*}

\textbf{III) Full mixture decomposition.}

By conditioning on $B_\rho^c(\data,\data')$ and $B_\rho(\data,\data')$, we have
\begin{align*}
\Prob{\lapmiss(\data)\in S}
&=
(1-r_\rho)\Prob{\lapmiss(\data)\in S\mid \datamiss(\data)\in B_\rho^c(\data,\data')}
\\
&\quad+
r_\rho\Prob{\lapmiss(\data)\in S\mid \datamiss(\data)\in B_\rho(\data,\data')}.
\end{align*}
Since conditional probabilities are bounded by $1$, we obtain
\begin{align*}
\Prob{\lapmiss(\data)\in S}
\le
(1-r_\rho)\Prob{\lapmiss(\data)\in S\mid \datamiss(\data)\in B_\rho^c(\data,\data')}
+r_\rho.
\end{align*}
Using \Cref{eq:conditional-good-mcar}, this gives
\begin{align*}
\Prob{\lapmiss(\data)\in S}
&\le
(1-r_\rho)e^{\epsilon^\star_\rho}
\Prob{\lapmiss(\data')\in S\mid \datamiss(\data')\in B_\rho^c(\data,\data')}
+r_\rho.
\end{align*}
Besides,
\begin{align*}
(1-r_\rho)\Prob{\lapmiss(\data')\in S\mid \datamiss(\data')\in B_\rho^c(\data,\data')}
\le
\Prob{\lapmiss(\data')\in S},
\end{align*}
so
\begin{align*}
\Prob{\lapmiss(\data)\in S}
\le
e^{\epsilon^\star_\rho}\Prob{\lapmiss(\data')\in S}+r_\rho.
\end{align*}

Since $S$ was arbitrary,
\begin{align*}
D_{e^{\epsilon_\rho^\star}}(\lapmiss(\data)\|\lapmiss(\data'))\le r_\rho.
\end{align*}
By Assumption~\ref{ass:relaxed_rho},
\begin{align*}
r_\rho
=
\Prob{\datamiss(\data)\in B_\rho(\data,\data')}
\le \eta.
\end{align*}
Therefore,
\begin{align*}
D_{e^{\epsilon^\star_\rho}}(\lapmiss(\data)\|\lapmiss(\data'))\le \eta.
\end{align*}
Taking the supremum over neighboring datasets $\data \simeq \data'$ yields
\begin{align*}
\sup_{\data \simeq \data'}
D_{e^{\epsilon^\star_\rho}}(\lapmiss(\data)\|\lapmiss(\data'))
\le \eta,
\end{align*}
which shows that $\lapmiss$ is $(\epsilon_\rho^\star,\eta)$-differentially private.

\end{proof}

Analogously, for the Gaussian mechanism, we establish the following amplification result under a relaxed assumption.

\vspace{3ex}
\begin{mdframed}
\begin{restatable}{theorem}{GaussRelaxedMCAR}
Let $\datadomain \subset [-1,1]^d$, and let $\epsilon \in (0,1]$ and $\delta \in [0,1]$. Let $\mathcal{D}$ be a MCAR missing data process verifying \Cref{ass:relaxed_rho}. Let $\query$ be a FWL query with respect to $\|.\|_2$, and $\gauss$ be the  Gaussian mechanism, associated to $\query$ with scale parameter $\sigma = c \, C_2 / \epsilon$. 
  Then the randomized algorithm $\guassmiss$ defined as per~(\ref{def:A_D}) with $\gauss$ and $\datamiss$ is $(\epsilon_\rho^{\star}, \eta  + \qstar\delta)$-DP with
  \begin{align*}
\epsilon^\star_\rho
=
\ln\!\Bigl(1+\frac{p_\ast-r_\rho}{1-r_\rho}(e^{\tilde{\epsilon}}-1)\Bigr),
\end{align*}
where $\tilde{\epsilon} = (\tilde C_2/C_2)\epsilon$. 
\end{restatable}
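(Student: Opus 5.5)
I will mirror the proof of \Cref{thm:relaxedfwllap} step by step. The only change is that the fixed-mask guarantee is now approximate, so an additive $\delta$ is carried through the mixture argument, as in the proof of \Cref{thm:gaussamplif}.

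\textbf{Step 1: partition and per-mask guarantee.} Fix $\data\simeq\data'$ differing in row $\istar$. I partition the masks as in \eqref{eq:setdecomp}, $\{0,1\}^{n\times d}=\Hstar^c\cup G_\rho\cup B_\rho$. Since the process is MCAR, each block has the same mass under $\datamiss(\data)$ and $\datamiss(\data')$: respectively $1-\qstar$, $\qstar-r_\rho$ and $r_\rho$. For $\mask\in G_\rho$ the differing row has at most $\lfloor\rho d\rfloor$ observed features, so the argument of \Cref{lemma:sensmiss} gives $\|\query(\tilde\data(\mask))-\query(\tilde\data'(\mask))\|_2\le\tilde C_2$. \Cref{lemma:gauss}, with $\Delta=C_2$ and $\sigma=cC_2/\epsilon$, then shows that $\gauss$ is $(\tilde\epsilon,\delta)$-DP on this pair, with $\tilde\epsilon=(\tilde C_2/C_2)\epsilon\le\epsilon\le1$; the last inequality keeps us in the range where the Gaussian calibration is valid. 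I average this per-mask inequality over $G_\rho$, using that MCAR makes the mask weights identical for $\data$ and $\data'$. This yields
\[
w_1(S)\le e^{\tilde\epsilon}w_1'(S)+\delta ,
\]
where $w_1,w_1'$ are the conditional laws on $G_\rho$ as in the previous proof. On $\Hstar^c$ the incomplete datasets coincide, so the conditional laws agree; call the common law $w_0$.

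\textbf{Step 2: mixture on $B_\rho^c$.} Conditioned on $B_\rho^c$, the laws of $\guassmiss(\data)$ and $\guassmiss(\data')$ are $(1-q)w_0+q\,w_1$ and $(1-q)w_0+q\,w_1'$, with $q=(\qstar-r_\rho)/(1-r_\rho)$. I apply \Cref{prop:advancedconv} with $\alpha=e^{\tilde\epsilon}$ and $\alpha'=e^{\epsilon^\star_\rho}=1+q(e^{\tilde\epsilon}-1)$, and then \Cref{prop:conv}. This bounds the conditional divergence by $q[(1-\beta)D_{e^{\tilde\epsilon}}(w_1\|w_0)+\beta D_{e^{\tilde\epsilon}}(w_1\|w_1')]$. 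The term $D_{e^{\tilde\epsilon}}(w_1\|w_1')$ is at most $\delta$ by Step 1. The term $D_{e^{\tilde\epsilon}}(w_1\|w_0)$ is also at most $\delta$, via the coupling $\pi_1$ of \Cref{app:proof}, which pairs $(\bistar,\res)$ with $(\mathbf 1_{\mathbb R^d},\res)$. That pair is neighboring and has at most $\lfloor\rho d\rfloor$ observed features in row $\istar$ on one side and none on the other, so the sensitivity bound of Step 1 still applies. Hence the conditional divergence on $B_\rho^c$ is at most $q\delta$.

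\textbf{Step 3: adding back $B_\rho$.} I write $\Prob{\guassmiss(\data)\in S}\le(1-r_\rho)P_{B^c}(S)+r_\rho$, where $P_{B^c}$ is the conditional law on $B_\rho^c$. Step 2 then gives
\[
\Prob{\guassmiss(\data)\in S}\le e^{\epsilon^\star_\rho}(1-r_\rho)P'_{B^c}(S)+(1-r_\rho)q\delta+r_\rho .
\]
I use $(1-r_\rho)P'_{B^c}(S)\le\Prob{\guassmiss(\data')\in S}$, the identity $(1-r_\rho)q\delta=(\qstar-r_\rho)\delta\le\qstar\delta$, and $r_\rho\le\eta$ from \Cref{ass:relaxed_rho}. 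This gives a divergence bound of $\eta+\qstar\delta$. Taking the supremum over neighboring pairs concludes.

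\textbf{Main obstacle.} The delicate point is Step 2, in particular checking that the coupling bound for $D(w_1\|w_0)$ holds uniformly at level $\tilde\epsilon$ rather than $\epsilon$. This requires the reduced sensitivity to hold for mixed pairs in which one side is fully masked. I would verify it directly from the FWL inequality: the difference involves only coordinates in $\operatorname{obs}(\bistar)$, whose count is at most $\lfloor\rho d\rfloor$, so the bound is at most $\sum_{j\le\lfloor\rho d\rfloor}L_{(j)}\le\tilde C_2$.
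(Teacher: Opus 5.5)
\paragraph{Verdict.} Your outline follows the paper's proof: partition into $\Hstar^c$, $G_\rho$, $B_\rho$; a per-mask $(\tilde\epsilon,\delta)$ guarantee on $G_\rho$; the mixture argument on $B_\rho^c$; then adding back $r_\rho$. The gap is in the one step you spell out that the paper covers only by ``the same mixture argument'': the claim $D_{e^{\tilde\epsilon}}(w_1\|w_0)\le\delta$ via $\pi_1$.

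\paragraph{Why $\pi_1$ fails.} $\pi_1$ is not a coupling of the two laws entering that divergence. $B_\rho$ is defined by \emph{some} row $i\in[n]$ having more than $\rho d$ observed entries, not only row $\istar$. Since $B_\rho\subseteq\Hstar$, masks in $\Hstar^c$ are never discarded. Under MCAR and row independence, the conditional mask law on $G_\rho$ therefore conditions every residual row on having at most $\rho d$ observed entries. On $\Hstar^c$, by contrast, $\res$ follows the unconditioned law of $\mathcal R(\data_{\mathrm{res}})$. So the map $(\bistar,\res)\mapsto(\mathbf 1_{\mathbb R^d},\res)$ has the wrong second marginal. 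Let $g$ be the probability that a single row has at most $\rho d$ observed entries. Any genuine coupling must then pair different residual masks with probability at least $1-g^{n-1}$, and on such pairs the incomplete datasets are in general not neighbors.

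\paragraph{A counterexample to the step.} The bound is false in general. Take $d=2$, $\rho=1/2$, and the sum of all zero-imputed entries ($L_1=L_2=1$). Let each row have masks $\mathbf 1_{\mathbb R^2}$, $(0,1)$, $(0,0)$ with probabilities $a,b,h>0$. Let every residual row equal $(0,-1)$ and set $\sample_{\istar}^{(1)}=1$. Then $w_1=\mathcal N(1,\sigma^2)$, while $w_0$ is the law of $-K+\mathcal N(0,\sigma^2)$ with $K\sim\mathrm{Bin}(n-1,h)$. Since $\sigma=cC_2/\epsilon$ does not depend on $n$, $D_{e^{\tilde\epsilon}}(w_1\|w_0)$ is close to $1$ once $(n-1)h\gg\sigma$. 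Your ``main obstacle'' check on the sensitivity of mixed pairs is correct, but it is not the real issue.

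\paragraph{When it matters, and two repairs.} The hole is harmless when $\qstar=1$: then $q=\beta=1$ and $w_0$ carries no weight. In general it is not, and neither of the following repairs gives the stated pair $(\epsilon^\star_\rho,\eta+\qstar\delta)$ exactly.

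(i) Define the bad set through the differing row only: $B'_\rho=\{\mask\in\Hstar:\mathrm{Card}(\operatorname{obs}(\bistar))>\rho d\}\subseteq B_\rho$. Then $\Hstar\setminus B'_\rho$ and $\Hstar^c$ share the residual law $\mathcal R(\data_{\mathrm{res}})$, so $\pi_1$ is a genuine coupling. Your Steps 1--3 then give $(\qstar-r'_\rho)\delta+r'_\rho\le\eta+\qstar\delta$, where $r'_\rho=\Prob{\datamiss(\data)\in B'_\rho}\le r_\rho$. However, $r\mapsto(\qstar-r)/(1-r)$ is nonincreasing. The privacy level $\ln(1+\frac{\qstar-r'_\rho}{1-r'_\rho}(e^{\tilde\epsilon}-1))$ is therefore at least $\epsilon^\star_\rho$, so you only get a weaker $\epsilon$.

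(ii) Keep $B_\rho$ and split $w_0$ according to whether all residual rows are good. $\pi_1$ is a valid coupling for the ``all good'' part, and Proposition~\ref{prop:conv} gives $D_{e^{\tilde\epsilon}}(w_1\|w_0)\le g^{n-1}\delta+1-g^{n-1}$. This adds up to $(\qstar-r_\rho)(1-\beta)(1-g^{n-1})$ to the $\delta$ term.

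The paper's proof relies on the same unchecked mixture step, so following it more closely would not close this gap.
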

\end{mdframed}

\begin{proof}
The proof follows the same decomposition as in the proof of
\Cref{thm:relaxedfwllap}. We therefore only highlight the modification
needed for the Gaussian mechanism.

Let $\data\simeq\data'$ be neighboring datasets, let $\istar$ be their
differing row, and define
\begin{align*}
G_\rho(\data,\data')
=
\Hstar(\data,\data')\setminus B_\rho(\data,\data').
\end{align*}
As in the Laplace case, since $\datamiss$ is MCAR, the law of the mask is
data-independent and we have
\begin{align*}
\Prob{\datamiss(\data)\in B_\rho(\data,\data')}
=
\Prob{\datamiss(\data')\in B_\rho(\data,\data')}
=
r_\rho
\le \eta,
\end{align*}
and
\begin{align*}
\Prob{\datamiss(\data)\in G_\rho(\data,\data')}
=
\Prob{\datamiss(\data')\in G_\rho(\data,\data')}
=
p_\ast-r_\rho.
\end{align*}

For any $\mask\in G_\rho(\data,\data')$, the differing row has at most
$\rho d$ observed entries. Since $\query$ is FWL with respect to
$\|\cdot\|_2$, \Cref{lemma:sensmiss} gives
\begin{align*}
\|\query(\tilde\data(\mask))-\query(\tilde\data'(\mask))\|_2
\le \tilde C_2.
\end{align*}
Thus, by the Gaussian mechanism calibration with
$\sigma=cC_2/\epsilon$, the fixed-mask mechanism satisfies
\begin{align*}
\Prob{\gauss(\tilde\data(\mask))\in S}
\le
e^{\tilde{\epsilon}}
\Prob{\gauss(\tilde\data'(\mask))\in S}
+\delta,
\qquad
\tilde{\epsilon}=(\tilde C_2/C_2)\epsilon .
\end{align*}

Averaging this inequality over $\mask\in G_\rho(\data,\data')$ and using
the MCAR property gives
\begin{align*}
\Prob{\guassmiss(\data)\in S\mid \datamiss(\data)\in G_\rho}
\le
e^{\tilde{\epsilon}}
\Prob{\guassmiss(\data')\in S\mid \datamiss(\data')\in G_\rho}
+\delta .
\end{align*}
On $\Hstar^c(\data,\data')$, the differing row is fully masked, hence the
conditional laws under $\data$ and $\data'$ coincide, exactly as in the
Laplace proof. Therefore, applying the same mixture argument on
\begin{align*}
B_\rho^c(\data,\data')
=
\Hstar^c(\data,\data')\cup G_\rho(\data,\data')
\end{align*}
yields
\begin{align*}
D_{e^{\epsilon_\rho^\star}}
\Big(
\guassmiss(\data)\mid \datamiss(\data)\in B_\rho^c
\;\Big\|\;
\guassmiss(\data')\mid \datamiss(\data')\in B_\rho^c
\Big)
\le
\frac{p_\ast-r_\rho}{1-r_\rho}\delta,
\end{align*}
where
\begin{align*}
\epsilon^\star_\rho
=
\ln\!\Bigl(
1+\frac{p_\ast-r_\rho}{1-r_\rho}
(e^{\tilde{\epsilon}}-1)
\Bigr).
\end{align*}

Finally, decomposing the full law according to
$B_\rho^c(\data,\data')$ and $B_\rho(\data,\data')$, and upper bounding the
contribution of $B_\rho(\data,\data')$ by its probability $r_\rho$, gives
\begin{align*}
D_{e^{\epsilon_\rho^\star}}
\bigl(
\guassmiss(\data)\|\guassmiss(\data')
\bigr)
\le
(1-r_\rho)\frac{p_\ast-r_\rho}{1-r_\rho}\delta
+
r_\rho
=
(p_\ast-r_\rho)\delta+r_\rho.
\end{align*}
Since $r_\rho\le \eta$, we also have
\begin{align*}
D_{e^{\epsilon_\rho^\star}}
\bigl(
\guassmiss(\data)\|\guassmiss(\data')
\bigr)
\le
p_\ast\delta+\eta.
\end{align*}
Taking the supremum over all neighboring datasets concludes the proof.
\end{proof}

\section{Example of privacy degradation under dependent missingness}\label{app:indepmask}

\textbf{Missing data process.}

Let $n=2$, $d=1$, and consider $\datadomain = \{0,1\}$. Define a missing data process $\datamiss$ that is \emph{not independent across samples} as follows. For any dataset $\data=(\sample_1,\sample_2)\in\datadomain^2$,
\begin{align*}
\datamiss
=
\begin{cases}
(1,1) & \text{if } \sample_1 = 0,\\
(0,0) & \text{if } \sample_1 = 1,
\end{cases}
\end{align*}
where $1$ denotes a missing entry and $0$ an observed entry.
In particular, the mask of the second sample depends on the value of the first sample, introducing dependence across records.

\medskip

\textbf{Query and Laplace mechanism.}

Define the query $\query:\mathcal Z^n_{\mathrm{miss}}\to\mathbb R$ by
\begin{align*}
\query(\tilde\data)
=
\sum_{i=1}^n (1 - \Ind{\tilde{\data}_i = \texttt{NA}}),
\end{align*}
i.e., the number of observed entries.
For any two neighboring incomplete datasets $\tilde\data\simeq\tilde\data' \in \missdatadomain^n$, we have
\begin{align*}
|\query(\tilde\data)-\query(\tilde\data')|
\le
1,
\end{align*}
so the $\ell_1$-sensitivity satisfies $\Delta_1(\query)\le 1$.
Therefore, the Laplace mechanism
\begin{align*}
\algo(\tilde\data)
=
\query(\tilde\data)
+
\mathrm{Lap}(1/\epsilon)
\end{align*}
is $(\epsilon,0)$-differentially private.

\medskip

\textbf{Degradation due to dependent missingness.}
Let $\algo_{\datamiss}$ denote the composed mechanism. Consider the following pair of neighboring datasets
\begin{align*}
\data=(0,0), \qquad \data'=(1,0).
\end{align*}
By construction,
\begin{align*}
\mathcal D(\data)=(1,1)
\quad \text{and} \quad
\mathcal D(\data')=(0,0).
\end{align*}
Therefore
\begin{align*}
\query(\tilde\data)=0,
\qquad
\query(\tilde\data')=2.
\end{align*}

Hence,
\begin{align*}
\algo_{\datamiss}(\data)
&\sim
\mathrm{Lap}(1/\epsilon),
\\
\algo_{\datamiss}(\data')
&\sim
2 + \mathrm{Lap}(1/\epsilon).
\end{align*}

The corresponding densities are
\begin{align*}
g_{\algo_{\datamiss}(\data)}(t)
&=
\frac{\epsilon}{2}e^{-\epsilon |t|},
\qquad
g_{\algo_{\datamiss}(\data')}(t)
=
\frac{\epsilon}{2}e^{-\epsilon |t-2|}.
\end{align*}
By definition,
\begin{align*}
D_{e^\epsilon}
\!\left(
\algo_{\datamiss}(\data)
\;\|\;
\algo_{\datamiss}(\data')
\right)
&=
\sup_{S\in\mathcal B(\mathbb R)}
\left\{
\Prob{\algo_{\datamiss}(\data)\in S}
-
e^\epsilon
\Prob{\algo_{\datamiss}(\data')\in S}
\right\}.
\end{align*}
Equivalently,
\begin{align*}
D_{e^\epsilon}
\!\left(
\algo_{\datamiss}(\data)
\;\|\;
\algo_{\datamiss}(\data')
\right)
&=
\int_{\mathbb R}
\left(
f_{\algo_{\datamiss}(\data)}(t)
-
e^\epsilon f_{\algo_{\datamiss}(\data')}(t)
\right)_+dt .
\end{align*}
We have for any $t \in \mathbb{R}$
\begin{align*}
f_{\algo_{\datamiss}(\data)}(t)
-
e^\epsilon f_{\algo_{\datamiss}(\data')}(t)
&=
\frac{\epsilon}{2}
\left(
e^{-\epsilon |t|}
-
e^\epsilon e^{-\epsilon |t-2|}
\right).
\end{align*}
This quantity is positive if and only if
\begin{align*}
|t-2|-|t|>1\quad \text{i.e.} \quad t < \frac{1}{2}
\end{align*}
Therefore,
\begin{align*}
D_{e^\epsilon}
\!\left(
\algo_{\datamiss}(\data)
\;\|\;
\algo_{\datamiss}(\data')
\right)
&=
\int_{-\infty}^{1/2}
\left(
f_{\algo_{\datamiss}(\data)}(t)
-
e^\epsilon f_{\algo_{\datamiss}(\data')}(t)
\right)dt \\
&= \Prob{\algo_{\datamiss}(\data)\le 1/2}
-
e^\epsilon
\Prob{\algo_{\datamiss}(\data')\le 1/2}.
\end{align*}
Using the Laplace cumulative distribution function,
\begin{align*}
\Prob{\algo_{\datamiss}(\data)\le 1/2}
&=
1-\frac12 e^{-\epsilon/2},
\\
\Prob{\algo_{\datamiss}(\data')\le 1/2}
&=
\frac12 e^{-\epsilon(2-1/2)}
=
\frac12 e^{-3\epsilon/2}.
\end{align*}
Therefore,
\begin{align*}
D_{e^\epsilon}
\!\left(
\algo_{\datamiss}(\data)
\;\|\;
\algo_{\datamiss}(\data')
\right)
&=
1-\frac12 e^{-\epsilon/2}
-
e^\epsilon \frac12 e^{-3\epsilon/2}
\\
&=
1-e^{-\epsilon/2}
>0.
\end{align*}

This shows that, although $\algo$ is $(\epsilon,0)$-DP on neighboring incomplete datasets, the composed mechanism $\algo_{\datamiss}$ does not preserve this guarantee. The dependence in the missingness process creates an additional leakage channel, as the masking pattern of one sample reveals information about another. 

\end{document}